\documentclass{article}

\PassOptionsToPackage{numbers, compress}{natbib}
\usepackage[preprint]{neurips_2026}

\usepackage[utf8]{inputenc} 
\usepackage[T1]{fontenc}    
\usepackage{hyperref}       
\usepackage{url}            
\usepackage{booktabs}       
\usepackage{amsfonts}       
\usepackage{nicefrac}       
\usepackage{microtype}      
\usepackage{xcolor}         
\usepackage[utf8]{inputenc} 
\usepackage[T1]{fontenc}    
\usepackage{hyperref}       
\usepackage{url}            
\usepackage{booktabs}       
\usepackage{amsfonts}       
\usepackage{nicefrac}       
\usepackage{microtype}      
\usepackage{xcolor}         
\usepackage{todonotes}
\usepackage{xargs}  
\usepackage{graphicx}
\usepackage{pgfplots}
\usepackage{pgfplotstable}
\tikzset{>=stealth}
\usetikzlibrary{patterns}
\usetikzlibrary{pgfplots.statistics}
\usetikzlibrary{backgrounds,scopes}
\usetikzlibrary{backgrounds,scopes}
\usetikzlibrary{arrows}
\usetikzlibrary{positioning}
\usetikzlibrary{calc}
\usetikzlibrary{fit}
\usetikzlibrary{shapes.misc}
\usetikzlibrary{external}
\usepackage{amsthm}

\usepackage{booktabs}
\usepackage{xcolor}
\usepackage{colortbl}
\usepackage{array}
\usepackage{multirow}
\usepackage{makecell}

\newtheorem{theorem}{Theorem}
\newtheorem{lemma}{Lemma}
\newtheorem{example}{Example}
\newtheorem{corollary}{Corollary}
\newtheorem{definition}{Definition}
\newtheorem{assumption}{Assumption}
\newtheorem{remark}{Remark}
\newtheorem{proposition}{Proposition}

\usepackage{moreverb}

\usepackage{listings}

\usepackage{amsmath}
\usepackage{amssymb}
\usepackage{amsfonts}
\usepackage{upgreek}
\usepackage{dsfont}
\usepackage{accents}
\usepackage{mathtools, nccmath}
\usepackage{svg}
\usepackage{tabularx}
\usepackage{enumerate}
\usepackage{booktabs}
\usepackage{pifont}
\newcommand{\cmark}{\ding{51}}

\newcommand{\na}{$-^{\dagger}$}
\usepackage{tikz}
\usetikzlibrary{arrows.meta, positioning, fit}
\usepackage{xcolor, colortbl, makecell, array, amsmath}
\usepackage{amsmath}
\usepackage{subcaption}
\usepackage{amssymb}
\usepackage{mathtools}
\usepackage{amsthm}
\usepackage{ulem}
\usepackage{tikz}
\usepackage{xcolor, colortbl, makecell, array, amsmath, booktabs}
\usetikzlibrary{matrix, positioning, calc, shapes.geometric, arrows.meta, fit, backgrounds, patterns, decorations.pathreplacing}
\usepackage{multirow}
\usepackage{array}
\usepackage{graphicx}

\DeclareMathOperator*{\argmin}{arg\,min}

\definecolor{headerblue}{RGB}{220,230,242}
\definecolor{cellbg}{RGB}{248,248,248}
\definecolor{cellblue}  {RGB}{220, 235, 252}   
\definecolor{cellgreen} {RGB}{210, 241, 226}   
\definecolor{hdr}       {RGB}{40,  80,  140}   
\definecolor{hdrtxt}    {RGB}{255, 255, 255}   
\definecolor{spanbg}    {RGB}{245, 245, 245}   

\definecolor{contrib}{RGB}{220, 235, 252}   
\definecolor{future}{RGB}{245, 245, 245}    
\definecolor{headerrow}{RGB}{40, 80, 140}   
\definecolor{headertext}{RGB}{255,255,255}  

\definecolor{contrib}   {RGB}{220, 235, 252}   
\definecolor{newcontrib}{RGB}{210, 241, 226}   
\definecolor{hdr}       {RGB}{40,  80,  140}   
\definecolor{hdrtxt}    {RGB}{255, 255, 255}   
 
\tikzset{
  ev/.style  = {circle, draw=blue!60,   fill=blue!13,
                minimum size=12pt, inner sep=0pt, font=\tiny},
  yn/.style  = {circle, draw=orange!80, fill=orange!22,
                minimum size=12pt, inner sep=0pt, font=\tiny},
  gn/.style  = {circle, draw=teal!70,   fill=teal!17,
                minimum size=12pt, inner sep=0pt, font=\tiny},
  sev/.style = {circle, draw=blue!60,   fill=blue!13,
                minimum size=9pt,  inner sep=0pt, font=\tiny},
  syn/.style = {circle, draw=orange!80, fill=orange!22,
                minimum size=9pt,  inner sep=0pt, font=\tiny},
  ca/.style  = {->, >=Stealth, thick, blue!60},
  nc/.style  = {->, >=Stealth, dashed, gray!55},
  ag/.style  = {double equal sign distance, ->, >=Stealth, thick, gray!60},
}

\tikzset{
  ev/.style  = {circle, draw=blue!60,  fill=blue!12,
                minimum size=13pt, inner sep=0pt, font=\scriptsize},
  yn/.style  = {circle, draw=orange!80,fill=orange!20,
                minimum size=13pt, inner sep=0pt, font=\scriptsize},
  gn/.style  = {circle, draw=teal!70,  fill=teal!15,
                minimum size=13pt, inner sep=0pt, font=\scriptsize},
  sev/.style = {circle, draw=blue!60,  fill=blue!12,
                minimum size=9pt,  inner sep=0pt, font=\tiny},
  syn/.style = {circle, draw=orange!80,fill=orange!20,
                minimum size=9pt,  inner sep=0pt, font=\tiny},
  ca/.style  = {->, >=Stealth, thick,  blue!65},
  nc/.style  = {->, >=Stealth, dashed, gray!55},
  ag/.style  = {double equal sign distance, ->, >=Stealth,
                thick, gray!55, shorten <=2pt, shorten >=2pt},
}

\definecolor{codebg}{RGB}{245,245,244}
\definecolor{codekw}{RGB}{0,0,180}
\definecolor{codestring}{RGB}{153,0,0}
\definecolor{codecomment}{RGB}{0,128,0}

\lstdefinestyle{oscar}{
  backgroundcolor=\color{codebg},
  basicstyle=\ttfamily\footnotesize,
  keywordstyle=\color{codekw}\bfseries,
  stringstyle=\color{codestring},
  commentstyle=\color{codecomment}\itshape,
  columns=fullflexible,
  frame=single,
  framerule=0pt,
  rulecolor=\color{black!20},
  breaklines=true,
  showstringspaces=false,
  tabsize=2,
}
\lstdefinestyle{mystyle}{
    commentstyle=\color{codegreen},
    keywordstyle=\color{magenta},
    numberstyle=\tiny\color{codegray},
    stringstyle=\color{codepurple},
    basicstyle=\ttfamily\footnotesize,
    breakatwhitespace=false,         
    breaklines=true,                 
    captionpos=b,                    
    keepspaces=true,                 
    numbers=left,                    
    numbersep=5pt,                  
    showspaces=false,                
    showstringspaces=false,
    showtabs=false,                  
    tabsize=2
}

\tikzset{
    -Latex,auto,node distance =1 cm and 1 cm,semithick,
    state/.style ={ellipse, draw, minimum width = 0.7 cm},
    point/.style = {circle, draw, inner sep=0.04cm,fill,node contents={}},
    bidirected/.style={Latex-Latex,dashed},
    el/.style = {inner sep=2pt, align=left, sloped}
}

\title{\textsc{seq2cause}: One Autoregressive Backbone, Four Causal Discovery Tasks in Event Sequences}

\author{%
  Hugo Math \\
  \texttt{hugo.math@bmwgroup.com} \\
}

\begin{document}

\maketitle

\begin{abstract}
    Complex systems — vehicles, patients, genomes — emit discrete event sequences whose operative question is causal, not predictive: which events cause which other events, and which cause higher-level outcomes such as failures or diseases? This question decomposes along two axes — dependency type (event $\to$ event vs. event $\to$ outcome) and causal scope (single sequence vs. population) — yielding four structurally distinct regimes with different identifiability conditions. No existing method addresses more than one, because all assume multi-stream structure with low vocabulary, and none scales beyond a few hundred event types.
We present $\textsc{seq2cause}$, a unified framework that resolves all four regimes through a single shared primitive: a pretrained autoregressive model repurposed as an amortized conditional independence testing engine requiring no task-specific retraining. We establish a prediction–causality duality: the model's excess cross-entropy simultaneously bounds causal identification error across all four regimes, so that every improvement in next-token prediction tightens causal guarantees for free.
On nonlinear SCMs (vocabularies up to $8,000$ types) and real-world vehicle diagnostic logs ($29$K event types, $474$ failure outcomes), $\textsc{seq2cause}$ is the first method to populate all four regimes at scale with a single frozen backbone. Existing methods are either inapplicable, inaccurate, or computationally intractable in this setting.
\end{abstract}

\section{Introduction} 
\begin{figure}[!b]
    \centering
\resizebox{0.97\textwidth}{!}{%
\begin{tikzpicture}[
    font=\sffamily\small,
    redbox/.style={draw, rounded corners=3pt, fill=red!6,
                   draw=red!55!black, minimum height=0.72cm,
                   align=center, inner sep=5pt},
    graybox/.style={draw, rounded corners=3pt, fill=gray!10,
                    minimum height=0.72cm, align=center, inner sep=5pt},
    bluecond/.style={draw, rounded corners=3pt, fill=blue!7,
                     draw=blue!50!black, minimum height=0.60cm,
                     align=center, inner sep=4pt},
    orangecond/.style={draw, rounded corners=3pt, fill=orange!9,
                       draw=orange!55!black, minimum height=0.60cm,
                       align=center, inner sep=4pt},
    regime/.style={draw, rounded corners=2pt,
                   minimum width=1.80cm, minimum height=0.62cm,
                   align=center, font=\sffamily\scriptsize, inner sep=2pt},
    arr/.style={->, thick, red!50!gray},
    bluearr/.style={->, thick, blue!55!black},
    orangearr/.style={->, thick, orange!60!black},
    greenarr/.style={->, thick, green!55!black}
]

\node[graybox, minimum width=2.5cm, minimum height=3.0cm]
    (input) at (0,0) {};
\node[draw, rounded corners=2pt, fill=blue!7, draw=blue!40!black,
      minimum width=2.1cm, minimum height=1.0cm, align=center,
      font=\sffamily\scriptsize]
    (evbox) at (0, 0.55)
    {\textbf{Events}\\[2pt]$x_1\;\; x_2\;\; x_3\;\; x_4$};
\node[draw, rounded corners=2pt, fill=orange!9, draw=orange!45!black,
      minimum width=2.1cm, minimum height=0.8cm, align=center,
      font=\sffamily\scriptsize]
    (outbox) at (0, -0.65)
    {\textbf{Outcomes}\\[1pt]
     $\boldsymbol{y}$~{\color{gray}\scriptsize(at $t{=}L$)}};
\node[font=\sffamily\scriptsize\bfseries] at (0, 1.30)
    {\textbf{Input sequence(s)}};
\node[font=\sffamily\scriptsize, text=gray] at (0, -1.28)
    {pop.:~$\times n$ i.i.d.};

\node[redbox, minimum width=2.8cm, minimum height=3.0cm]
    (lm) at (3.45, 0)
    {\textbf{AR Backbone}\\$P_\theta$\\[3pt]
     \scriptsize LLaMA, GPT,\\
     \scriptsize RNN, Mamba\\[3pt]
     \scriptsize no fine-tuning};

\foreach \y in {0.9, 0.3, -0.3, -0.9}{
    \draw[arr] ([yshift=\y cm]input.east) --
               ([yshift=\y cm]lm.west);
}

\node[bluecond, minimum width=3.2cm]
    (px) at (7.6, 0.80)
    {\color{blue!70!black}$P_\theta(X_{t+1} \mid x_{\leq t})$\\
     \scriptsize\color{blue!60!black}next-event distribution};

\node[orangecond, minimum width=3.2cm]
    (py) at (7.6, -0.80)
    {\color{orange!75!black}$P_\theta(Y \mid x_{\leq t})$\\
     \scriptsize\color{orange!65!black}outcome likelihood};

\foreach \dy in {0.27, 0.09, -0.09, -0.27}{
    \draw[bluearr, opacity=0.7]
        ([yshift=\dy cm]lm.east|-px.center) --
        ([yshift=\dy cm]px.west);
    \draw[orangearr, opacity=0.7]
        ([yshift=\dy cm]lm.east|-py.center) --
        ([yshift=\dy cm]py.west);
}

\node[redbox, minimum width=2.6cm]
    (ci) at (11.3, 0.80)
    {\textbf{Parallel CMI}\\
     \scriptsize$\hat{I}_N$};

\draw[bluearr]   (px.east) -- (ci.west);
\draw[orangearr] (py.east) to[out=0, in=225] (ci.south west);

\node[redbox, minimum width=2.6cm]
    (mdl) at (11.3, -0.80)
    {\textbf{MDL Criterion}\\
     \scriptsize threshold $\tau^*_{uv}$};

\draw[orangearr] (py.east)  -- (mdl.west);
\draw[bluearr]   (px.east)  to[out=0, in=135] (mdl.north west);

\node[regime, fill=blue!10, draw=blue!50!black]
    (ca) at (15.0, 0.80)
    {\textbf{(a)}~$\mathcal{G}^s_{\mathcal{XX}}$\\
     \scriptsize sample $\mathcal{X}\to\mathcal{X}$};

\node[regime, fill=orange!10, draw=orange!50!black,
      below=0.20cm of ca] (cb)
    {\textbf{(b)}~$\mathcal{G}^s_{\mathcal{XY}}$\\
     \scriptsize sample $\mathcal{X}\to\mathcal{Y}$};

\draw[bluearr]   (ci.east) -- (ca.west);
\draw[orangearr] (ci.east) to[out=0, in=180] (cb.west);

\node[regime, fill=green!9, draw=green!55!black,
      right=0.45cm of ca] (rcas)
    {\textbf{Sample RCA}};

\draw[bluearr]   (ca.east) -- (rcas.west);
\draw[orangearr] (cb.east) to[out=0, in=225] (rcas.south west);

\node[regime, fill=blue!6, draw=blue!40!black]
    (cc) at (15.0, -0.80)
    {\textbf{(c)}~$\mathcal{G}_{\mathcal{XX}}$\\
     \scriptsize pop.\ $\mathcal{X}\to\mathcal{X}$};

\node[regime, fill=orange!6, draw=orange!40!black,
      below=0.20cm of cc] (cd)
    {\textbf{(d)}~$\mathcal{G}_{\mathcal{XY}}$\\
     \scriptsize pop.\ $\mathcal{X}\to\mathcal{Y}$};

\draw[bluearr]   (mdl.east) -- (cc.west);
\draw[orangearr] (mdl.east) to[out=0, in=180] (cd.west);

\node[regime, fill=green!7, draw=green!45!black,
      right=0.45cm of cc] (rcap)
    {\textbf{Pop.\ RCA}};

\draw[bluearr]   (cc.east) -- (rcap.west);
\draw[orangearr] (cd.east) to[out=0, in=225] (rcap.south west);

\end{tikzpicture}
}
\caption{\textbf{\textsc{seq2cause}: one frozen backbone, 
four causal regimes.}
Any pretrained AR model — LLaMA, GPT, Mamba — is repurposed 
via teacher forcing as a parallelised causal discovery engine on GPUs, 
with no fine-tuning.
Two heads share the same backbone: the event conditional 
(\textcolor{blue!65!black}{blue}) and the outcome likelihood 
(\textcolor{orange!75!black}{orange}). Using a single sequence, the
sample-level graphs~(a,~b) follow from CMI testing; for multiple sequences, 
population-level graphs~(c,~d) are obtained from Minimum Description Length (MDL)~\cite{grunwald2004tutorialintroductionminimumdescription}. Their union enables root-cause identification in sequences.}
\label{fig:seq2cause}
\end{figure}

Discrete event sequences are the canonical data modality of modern complex systems. A vehicle emits hundreds of diagnostic trouble codes before a failure \cite{pdm_dtc_feature_extraction, math2024harnessingeventsensorydata,math2026contextinformed}. A patient's trajectory through a hospital unfolds as a sequence of interventions, diagnoses, and observations \cite{Li2021CausalHM, bihealth}. A genome expresses itself as an ordered chain of regulatory events~\cite{BEENE, SHEN2023106536, Theodoris2023TransferLE, Cui2024}. In each of these domains, the operative scientific question is not predictive but causal: which events cause which other events, and which events cause higher-level outcomes such as failures, diseases, or phenotypes?

This question decomposes into four structurally distinct regimes (Tab. \ref{tab:framework}), defined by two orthogonal axes. The first is \textit{dependency type}: one may seek event-to-event structure (\(\mathcal{X} \to \mathcal{X}\)) — how events cause other events — or event-to-outcome structure (\(\mathcal{X} \to \mathcal{Y}\)) — how events cause final outcomes. The second is \textit{causal scope}: inference may target a single observed sequence (sample-level), where the causal object is specific to one sample, or an entire dataset (population-level), where the goal is a type-level summary graph generalizable across sequences. These four regimes are not interchangeable: the sample-level question "did `Fire@t1` cause `Smoke@t2`?" and the population-level question "does `Fire` cause `Smoke` in general?" have different identifiability conditions, different estimands, and different failure modes. Surprisingly, no existing framework addresses more than one \cite{assaad_survey_ijcai_cd_time_series, cd_temporaldata_review}.

The dominant paradigm for event sequence causal discovery — Granger-based \cite{granger_causality}, Hawkes-based \cite{granger_causality_hawkes, shtp, hawke_process_and_app_2025} and constraint-based methods \cite{constrainct_based_cd}, operates on \(|\mathcal{X}|\) parallel streams, each assigned to a distinct entity. This design is structurally mismatched with the single-stream, high-vocabulary setting. When \(|\mathcal{X}|\to \infty\), the expected co-occurrence count of any pair \((u,v)\) in a sequence of length \(L\) vanishes, making empirical joint distributions degenerate. The Hawkes excitation matrix alone carries \(\mathcal{O}(|\mathcal{X}|^2)\) parameters, rendering maximum likelihood estimation intractable above \(|\mathcal{X}|\sim 10^3\). On the \(\mathcal{X} \to \mathcal{Y}\) side, classical Markov boundary (MB) algorithms (IAMB \cite{iamb}, CMB \cite{cmb}, MI-MCF \cite{mimcf}) face the same dimensionality barrier and have no clear application to sequential data. The single-stream, high-vocabulary regime is thus left entirely unaddressed.
\begin{table}[!t]
\centering
\caption{\textbf{The four regimes of causal discovery in discrete 
event sequences.}
Rows separate causal scope — a single observed sequence 
(sample-level) vs.\ an entire dataset (population-level).
Columns separate dependency type — event-to-event 
($\mathcal{X}\!\to\!\mathcal{X}$, \textcolor{blue!65!black}{blue}) 
vs.\ event-to-outcome 
($\mathcal{X}\!\to\!\mathcal{Y}$, \textcolor{orange!75!black}{orange}).
These four cells have different identifiability conditions, 
different estimands, and different failure modes.
No existing method addresses more than one at scale.}
\label{tab:framework}

\renewcommand{\arraystretch}{1.0}
\setlength{\tabcolsep}{4pt}
\begin{tabular}{
    >{\centering\arraybackslash}m{2.5cm}
  | >{\centering\arraybackslash}m{5.5cm}
  | >{\centering\arraybackslash}m{5.5cm}
}

\rowcolor{hdr}
\textcolor{hdrtxt}{}
& \textcolor{hdrtxt}{\makecell[c]{
    \textbf{Event-to-Event: }\small$\mathcal{X}\!\to\!\mathcal{X}$}}
& \textcolor{hdrtxt}{\makecell[c]{
    \textbf{Events-to-Outcomes: }\small$\mathcal{X}\!\to\!\mathcal{Y}$}}
\\\hline

\cellcolor{hdr}\textcolor{hdrtxt}{\makecell[c]{
  \textbf{Sample-Level}\\[3pt]\small(Single Sequence)}}
& \cellcolor{cellblue}
  \begin{minipage}[c]{5.2cm}\centering
  \begin{tikzpicture}[baseline=-0.5ex, scale=0.88]
    \node[ev] (t1) at (0,    0) {$A$};
    \node[ev] (t2) at (0.75, 0) {$B$};
    \node[ev] (t3) at (1.50, 0) {$A$};
    \node[ev] (t4) at (2.25, 0) {$C$};
    \draw[ca] (t1) -- (t2);
    \draw[ca] (t1) to[bend left=40] (t3);
    \draw[ca] (t3) -- (t4);
    \draw[nc] (t2) -- (t3);
    \foreach \n/\lab in {t1/$t_1$, t2/$t_2$, t3/$t_3$, t4/$t_4$}{
      \node[font=\tiny, gray!55, below=3pt of \n] {\lab};
    }
    \draw[->, gray!50, thin] (-0.28,-0.75) -- (2.53,-0.75)
      node[right, font=\tiny, gray!55]{$t$};
  \end{tikzpicture}
  \par
  {\small\textbf{(a) Sample Time Graph Recovery}}\par
  \vspace{3pt}
  {\scriptsize Recover the time causal graph specific to a single
  sequence of events with lagged effects.}
  \end{minipage}
& \cellcolor{cellblue}
  \begin{minipage}[c]{5.2cm}\centering
  \begin{tikzpicture}[baseline=-0.5ex, scale=0.88]
    \node[ev] (x1) at (0,    0) {$A$};
    \node[ev] (x2) at (0.75, 0) {$B$};
    \node[ev] (x3) at (1.50, 0) {$C$};
    \node[yn] (y)  at (2.25, 0) {$y$};
    \draw[ca] (x1) to[bend left=40] (y);
    \draw[ca] (x3) -- (y);
    \node[draw=orange!65, dashed, thick, rounded corners=2pt,
          inner sep=3pt, fit=(x1)] {};
    \node[draw=orange!65, dashed, thick, rounded corners=2pt,
          inner sep=3pt, fit=(x3)] {};
    \node[font=\tiny, orange!75!black, above=6pt of x2] {$\mathcal{MB}(y)$};
  \end{tikzpicture}
  \par
  {\small\textbf{(b) Sample Markov Boundary}}\par
  \vspace{3pt}
  {\scriptsize Identify the direct causes of each outcome $y$
  (e.g., defects, diseases) from a single sequence.}
  \end{minipage}
\\\hline   

\cellcolor{hdr}\textcolor{hdrtxt}{\makecell[c]{
  \textbf{Population-Level}\\[3pt]\small(Full Dataset)}}
& \cellcolor{cellblue}
  \begin{minipage}[c]{5.2cm}\centering
  \begin{tikzpicture}[baseline=-0.5ex, scale=0.88]
    \node[sev] (i1a) at (0,    0.55) {$A$};
    \node[sev] (i1b) at (0.60, 0.55) {$B$};
    \draw[ca, thin] (i1a) -- (i1b);
    \node[font=\tiny, gray!55] at (-0.5, 0.55) {$s^{(1)}$};
    \node[sev] (i2a) at (0,    -0.10) {$A$};
    \node[sev] (i2c) at (0.60, -0.10) {$C$};
    \draw[ca, thin] (i2a) -- (i2c);
    \node[font=\tiny, gray!55] at (-0.5,-0.10) {$s^{(2)}$};
    \draw[ag] (0.82, 0.22) -- (1.5, 0.22);
    \node[sev] (sa) at (1.60,  0.55)  {$A$};
    \node[sev] (sb) at (2.35,  0.55)  {$B$};
    \node[sev] (sc) at (1.98, -0.10)  {$C$};
    \draw[ca] (sa) -- (sb);
    \draw[ca] (sa) -- (sc);
  \end{tikzpicture}
  \par\vspace{2pt}
  {\small\textbf{(c) Population Summary Graph}}\par
  \vspace{3pt}
  {\scriptsize Summary causal graph over event types $\mathcal{X}$:
  \textit{which event caused other events across samples?}}
  \end{minipage}
& \cellcolor{cellblue}
  \begin{minipage}[c]{5.2cm}\centering
  \begin{tikzpicture}[baseline=-0.5ex, scale=0.88]
  \node[sev] (a1) at (0,    0.55)  {$A$};
  \node[sev] (b1) at (0.55, 0.55)  {$B$};
  \node[syn] (y1) at (1.10, 0.55)  {$y$};
  \draw[ca, thin] (a1) -- (b1);
  \draw[ca, thin] (b1) -- (y1);
  \node[font=\tiny, gray!55] at (-0.5, 0.55) {$s^{(1)}$};
  \node[sev] (a2) at (0,    -0.10) {$A$};
  \node[sev] (c2) at (0.55, -0.10) {$C$};
  \node[syn] (y2) at (1.10, -0.10) {$y$};
  \draw[ca, thin] (a2) -- (c2);
  \draw[ca, thin] (c2) -- (y2);
  \node[font=\tiny, gray!55] at (-0.5,-0.10) {$s^{(2)}$};
  \draw[ag] (1.28, 0.22) -- (1.90, 0.22);
  \node[ev] (gb) at (2.50,  0.55)  {$B$};
  \node[ev] (gc) at (2.50, -0.10)  {$C$};
  \node[yn] (gy) at (3.4,  0.22)  {$y$};
  \draw[ca] (gb) -- (gy);
  \draw[ca] (gc) -- (gy);
  \end{tikzpicture}
  \par\vspace{2pt}
  {\small\textbf{(d) Population-Markov Boundary}}\par
  \vspace{3pt}
  {\scriptsize Identify the Causes of outcomes aggregated across sequences:
  \textit{What events lead to this product defect?}}
  \end{minipage}
\\  
\end{tabular}
\vspace{-6pt}
\end{table}

The key observation underlying this work is that an autoregressive (AR) model~\cite{seq2seq_learning, touvron2023llamaopenefficientfoundation} trained by next-token prediction implicitly estimates the full joint distribution via the chain rule \(\prod_t P(x_t | x_{<t})\). These conditionals are \textit{precisely} the quantities required to evaluate conditional mutual information (CMI), and perform conditional independence (CI) testing, without any task-specific retraining. Given a pretrained AR model with excess cross-entropy \(\epsilon\), causal discovery reduces to a batched forward pass — spanning all four regimes — with identification error vanishing at rate \(\epsilon \to 0\). \textbf{Every frontier-scale next-token checkpoint is, for free, a causal discovery engine}.
We introduce \textsc{seq2cause}\footnote{Code, trained backbones and the unified evaluation are released as the \href{https://github.com/Mathugo/seq2cause}{seq2cause} Python package.}, the first method to populate all four cells simultaneously at scale. \textsc{seq2cause} repurposes any pretrained AR model, without fine-tuning for parallelized causal discovery on GPUs. It recovers the MB of each outcome \((\mathcal{X} \to \mathcal{Y})\) and the causal graph over event types (\(\mathcal{X} \to X\)), from a single sequence at inference time (\textit{sample-level}) or aggregated across a dataset (\textit{population-level}), with guarantees derived directly from the model's pretraining quality $\epsilon$. 
Our contributions are:
\begin{itemize}
    \item \textbf{A four-regime taxonomy and its unification}. We formalize causal discovery in discrete event sequences with large vocabularies as a \(2 \times 2 \) problem defined by dependency type (\(\mathcal{X} \to \mathcal{X}\) vs. \(\mathcal{X} \to \mathcal{Y})\) and causal scope (sample-level vs. population-level). We establish that a single pretrained backbone serves all four cells without task-specific adaptation.
    \item \textbf{A prediction–causality duality}. We prove that a single quantity — the AR model's excess cross-entropy \(\epsilon\) — simultaneously controls (i) CMI estimation error (Theorem~\ref{thm:cmi_error_bound}), (ii) sample-time graph soundness (Theorem~\ref{thm:soudness_sample_level}), and (iii) MDL population edge-inclusion error (Lem.~\ref{lem:mdl_deviation}). Causal identification across all four regimes is therefore a function of pretraining quality, with no regime-specific calibration.
    \item \textbf{Empirical validation across all four regimes at scale.} On nonlinear SCMs $(|\mathcal{X}| \in [100, 8000])$ and real-world vehicle diagnostic sequences (\(|\mathcal{X}|=29,100, |\mathcal{Y}|=474\)), 
     \textsc{seq2cause} is the first method to populate all four cells of Tab~\ref{tab:framework} with a single frozen backbone where all existing methods are either inaccurate, or computationally intractable.
\end{itemize}

\section{Related Work}

Causal discovery in sequence of discrete events splits into two structurally 
distinct regimes that are not interchangeable. For a full overview, see Appendix~\ref{appendix:related_work}.

\paragraph{Multi-stream (Standard).}
The dominant paradigm operates on $|\mathcal{X}|$ parallel 
streams, 
each assigned to 
a distinct entity (sensor, user). The causal target 
is a population-level summary graph $\mathcal{G} = (\mathcal{X}, \mathcal{E})$ 
estimated from $T \to \infty$ observations \emph{per stream}. 
Methods in this family, such as Granger-based~\cite{granger_causality, 
tcdf, cause}, constraint-based~\cite{pcmci}, 
functional~\cite{varlingam}, optimization-based~\cite{dynotears}, 
and neural point processes~\cite{transformerhawkeprocess, shtp, cueppers2024causal} scale quadratically or worse with the number of events \(|\mathcal{X}|\).
For Hawkes-based methods specifically, the excitation matrix 
$\Phi \in \mathbb{R}^{|\mathcal{X}| \times |\mathcal{X}|}$ 
carries $|\mathcal{X}|^2$ parameters, and its maximum likelihood estimation  scales quadratically with the event types or worse~\cite{hawke_process_and_app_2025}, 
rendering the entire family \emph{computationally intractable} 
at $|\mathcal{X}| > 10^3$. We explain this theoretically in Appendix~\ref{derivation_ll_hawke_mutual}.

\paragraph{Single-stream (Sample- and Population-level).}
The general setting is fundamentally different. By analogy with NLP, a single sentence over a large vocabulary is the sample; a corpus is the population. We observe $n$ 
i.i.d sequences $\{s^{(k)}\}_{k=1}^n$ over a shared massive vocabulary $|\mathcal{X}| > 1000$, this is the \emph{population-level} regime. 
If during inference only one realization $s = (x_0, \cdots, x_L)$ is 
available, this is the \emph{sample-level} regime. The causal target decomposes into two subproblems: 
recovering event-to-event structure and 
event-to-outcomes structure (Tab.~\ref{tab:framework}). 
This regime is strictly harder than multi-stream for two 
compounding reasons. First, the co-occurrence frequency of 
any pair $(u, v) \in \mathcal{X}^2$ within a single sequence 
of moderate length $L$ satisfies:
\begin{equation}\label{eq:co_occ}
    \mathbb{E}\left[\#(u,v \text{ co-occur in } s)\right] 
    = \mathcal{O}\!\left(\frac{L^2}{|\mathcal{X}|^2}\right)
    \xrightarrow{|\mathcal{X}| \to \infty} 0
\end{equation}
making empirical joint distributions degenerate. Second, there is no long stationary 
trajectory to extract a population from: one vehicle trip, 
one patient visit, one genome sequence \emph{is the observation of interest}. 
Sliding-window arguments that justify multi-stream methods 
\emph{do not apply here}.  

\paragraph{Multi-label Causal Discovery.}
Multi-label causal discovery seeks to identify the Markov 
Boundary $\mathcal{MB}(Y_j)$ of each outcome label — its minimal 
conditioning set rendering $Y_j$ independent of all other 
variables~\cite{optimal_feature_set_cd}. Classical local structure learning (LSL) algorithms estimate these 
boundaries directly: IAMB~\cite{iamb}, CMB~\cite{cmb}, 
MB-by-MB~\cite{WANG2014252}, PCDbyPCD~\cite{pcdpcd} 
and MI-MCF~\cite{mimcf} all recover the \(\mathcal{MB}\) via iterative CI testing. Their extension to event sequences is obstructed by 
dimensionality ($|\mathcal{X}|^2$ candidate pairs) and distributional 
assumptions that rarely hold. Critically, none of these methods addresses the $\mathcal{X} \to \mathcal{Y}$ dependencies in the single-stream, high-vocabulary setting, nor do they handle $\mathcal{X} \to \mathcal{X}$ discovery jointly (see Appendix,  Tab~\ref{tab:related_work}). 
Similarly, existing RCA methods for event sequences (identifying primary causes of an outcome) either require a 
known system topology~\cite{chain_of_event_2024_sre}, operate 
on continuous multivariate streams~\cite{han2025root}, or 
reduce root causes to anomaly scores without a causal 
graph~\cite{assaad_rca_time_series_pmlr_2023} — none work at the sample-level contrary to \textsc{seq2cause} with Cor.~\ref{cor:rca}. 

\section{Notations and Background}\label{appendix:notations}
\paragraph{Notations}
We use capital letters (e.g., \(X\)) to denote random variables, \(P(X)\) the probability distribution of \(X\), \(P(X = x) = p(x)\) the probability of the realisation \(x\) for the random variable \(X\) and bold capital letters (e.g., \(\boldsymbol{X}\)) for sets of random variables.
We only work with discrete distributions. Let \(\mathcal{X}\) denote an alphabet and \(D_{KL}\) the \textit{Kullback-Leibler divergence} \cite{cover1999elements}. 

\paragraph{Probability space.}
Let $\Omega = \mathcal{X}^\mathbb{N}$ be the space 
of infinite sequences $\omega = (x_0, x_1, \ldots)$ 
over the finite alphabet $\mathcal{X}$.
Let $\mathcal{F} = \mathcal{B}(\mathcal{X}^\mathbb{N})$ 
be the product $\sigma$-algebra generated by 
cylinder sets, and let $P$ be the law of the 
event chain Eq.~\ref{eq:joint_dgp} on this space.

Labels $Y_1,\ldots,Y_c$ are \emph{not} part of $\Omega$: 
by the joint DGP \eqref{eq:joint_dgp}, each $Y_j$ is 
a function $Y_j : \Omega \to \{0,1\}$ defined by 
$Y_j(\omega) = \mathbf{1}[{\omega \in A_j}]$ for some 
measurable set $A_j \in \mathcal{F}$ depending on the 
full trajectory. The label head $P_{\theta_y}$ 
estimates $P(Y_j = 1 \mid X_{<t})$ 
at each step $t$ from the event prefixes.

The target variable $T_t : \Omega \to \{0,1\}$ is 
therefore, always a measurable function on 
$(\Omega, \mathcal{F}, P)$, whether 
$T_t = E_t \triangleq \mathbf{1}_{X_t = x_t}$ 
(event indicator) or $T_t = Y_j$ (outcome label).
This unified representation is what allows a single 
probability space to cover all four regimes.

\begin{table}[!h]
\centering
\caption{\textbf{Graph objects in \textsc{seq2cause}.}
  All time graphs are DAGs by temporal precedence.
  Summary graphs may contain cycles~\cite{assaad_survey_ijcai_cd_time_series}.
  The sample-time graph decomposes into two subgraphs
  ($\mathcal{G}^s_{\mathcal{XX}}$ and $\mathcal{G}^s_{\mathcal{XY}}$) by dependency type;
  their union is the complete sample-level causal graph $\mathcal{G}^s$.}
\label{tab:graph_notation}
\renewcommand{\arraystretch}{1.55}
\setlength{\tabcolsep}{5pt}
\begin{tabular}{@{}
  >{\small}p{3cm}
  >{\small}p{3.cm}
  >{\small}p{3.cm}
  >{\small}p{3.3cm}@{}
}
\toprule
\textbf{Symbol}
  & \textbf{Name}
  & \textbf{Nodes}
  & \textbf{Edge meaning} \\
\midrule
$\mathcal{G}^s_{\mathcal{XX}}$
  & Sample-time\newline event-to-event graph
  & Time indices\newline $\{0,\ldots,L{-}1\}$
  & $(t{-}\ell)\!\to\!t$: realization $x_{t-\ell}$ caused $x_t$ in \(s\)\\
$\mathcal{G}^s_{\mathcal{XY}}$
  & Sample-time\newline event-to-outcome graph
  & Time indices $\{0,\ldots,L{-}1\}$ $\cup$ outcomes $\mathcal{Y}_s$
  & $t\!\to\!Y_j$: realization \(x_t\) caused \(Y_j\) in $s$ \\
$\mathcal{G}^s = \mathcal{G}^s_{\mathcal{XX}}\cup\mathcal{G}^s_{\mathcal{XY}}$
  & Sample-time\newline causal graph
  & Time indices $\{0,\ldots,L{-}1\}$ $\cup$ outcomes $\mathcal{Y}_s$
  & Joint DAG over all events and outcomes in $s$ \\
\midrule
$\bar{\mathcal{G}} = \pi(\mathcal{G}^s)$
  & Sample-summary\newline graph
  & Event types $\mathcal{X}_s \subseteq \mathcal{X}$, outcomes $\mathcal{Y}_s \subseteq \mathcal{Y}$
  & $u\!\to\!v$: type $u$ caused type $v$ in $s$ \\
\midrule
$\mathcal{G}$
  & Population\newline summary graph
  & Event types $\mathcal{X}$, outcomes $\mathcal{Y}$
  & $u\!\to\!v$: type $u$ causes type $v$ across $\mathcal{D}$ \\
\bottomrule
\end{tabular}
\end{table}

\subsection{Definitions}

\begin{definition}[Markov Boundary]\label{def:mb} \citet{optimal_feature_set_cd}.
In a faithful BN \(<\boldsymbol{U}, \mathcal{G}, P>\), for a set of variables \(\boldsymbol{Z} \subset \boldsymbol{U}\) and label \(Y \in \boldsymbol{U}\), if all other variables \(X \in \{\boldsymbol{X} - \boldsymbol{Z}\}\) are independent of \(Y\) conditioned on \(\boldsymbol{Z}\) and any proper subset of \(\boldsymbol{Z}\) do not satisfy the condition, then \(\boldsymbol{Z}\) is the Markov Boundary of \(Y\): \(\textbf{MB}(Y)\).
\end{definition}

\subsection{Summary Graph}
Operators are often confronted with observing a single sequence during inference to understand generalizable rules (e.g., "Fire causes Smoke") rather than specific timestamps \((\mathcal{G}^s)\). We adapt the terminology from~\cite{assaad_survey_ijcai_cd_time_series} regarding summary causal graph (SCG), which treats the unique event types in \(\mathcal{X}\) as the variables of interest. Therefore, for a single sequence \(s\), we aim to recover its \emph{Summary Causal Graph}.
\begin{definition}[Sample-Summary Graph]\label{def:sample_summary_graph}
    Let \(\mathcal{X}_s \subseteq \mathcal{X}\) be the set of unique event types in \(s\). The Sample-Summary Graph \(\bar{\mathcal{G}} = (\mathcal{X}_s, \mathcal{E}_s)\) is the surjective projection of the Sample-Time Causal Graph \(\mathcal{G}^s\) onto \(\mathcal{X}_s\). Specifically, a type-level edge \(u \to v\) exists in \(\bar{\mathcal{G}}\) if and only if it appears at least once in the time graph:
    \begin{align}
        u \to v \in \mathcal{E}_s \iff \exists t, k \text{ s.t. } ((t-k) \to t) \in \notag\\ 
        \mathcal{E}_{t} \land x_{t-k} = u \land x_t = v \notag
    \end{align}
\end{definition}

\section{Background and Problem Formulation}

\paragraph{Data-Generating Process.}

We model the joint generation of events and outcomes as a stochastic process over a shared time index $t \in \mathbb{N}$ (Appendix, Fig.~\ref{fig:sample_level_graph}). Notation is described in Appendix~\ref{appendix:notations}.

Let $\{X_t\}_{t \geq 0}$ be a stochastic process taking values in 
a finite discrete alphabet $\mathcal{X}$. 
We make no finite-order Markov assumption; the full conditional \(P(X_t|X_{<t})\) may depend on the entire history — a great departure from standard methods~\cite{cd_temporaldata_review}.
A realization of length $L+1$ is denoted 
$s = (x_0, \ldots, x_L) \in \mathcal{X}^{L+1}$. Let $\mathcal{Y} = \{Y_1, \ldots, Y_c\}$ be a set of discrete outcome variables. Each binary outcome \(Y_j\) is generated conditionally on the event history as \(P(Y_j \mid X_{<t}) \) for \(c\) outcomes. Critically, the event chain Eq.~\eqref{eq:joint_dgp} is 
autonomous: it does not depend on $\mathcal{Y}$. We assume that outcomes are solely driven by the events (no inter-outcome effects~\cite{learningcommoncausalvarlabel}). The joint distribution therefore factorizes as:
\begin{equation}\label{eq:joint_dgp}
    P\!\left(X_{0:L},\, Y_{1:n}\right)
    \;=\;
    \prod_{t=0}^{L} P(X_t \mid X_{<t})\prod_{j=1}^{c} P\!\left(Y_j \mid X_{<L}\right),
\end{equation}

\begin{remark}[Two-chain DGP and Markovianity]\label{rem:imbrication}
The two-chain structure defines a natural edge partition. Intra-chain edges 
$X_{t-\ell} \to X_t$ describe the event-to-event time causal graph $\mathcal{G}^s_{\mathcal{X}\mathcal{X}}$; cross-chain edges 
$X_t \to Y_j$ form the event-to-outcome subgraph 
$\mathcal{G}^s_{\mathcal{X}\mathcal{Y}}$. 
No edges of the form $Y_j \to X_t$ or $Y_j \to Y_k$ exist, 
thus the joint sample-level graph \(\mathcal{G}^s\) is a DAG by construction. The sample-level regime of Fig.~\ref{fig:sample_level_graph} corresponds exactly to recovering subgraphs at different levels 
of aggregation. 
\end{remark}

\paragraph{Sample-Level Causal Graphs.}
In several domains, a practitioner has access to a single sequence at inference time. Population-level aggregation is either unavailable or
scientifically undesirable, as the causal structure of this sequence not
the average across a cohort, \emph{is the object of interest}. We define the causal graph types following the literature~\cite{assaad_survey_ijcai_cd_time_series}:

\begin{definition}[Sample-Time Causal Graph]\label{def:stcg}
Let $s$ be a realization of the event chain in Eq.~\eqref{eq:joint_dgp}. 
The Sample-Time Causal Graph
$\mathcal{G}^s_{\mathcal{X}\mathcal{X}} = (\mathcal{T},\, \mathcal{E}_t)$ is a DAG 
where nodes $\mathcal{T} = \{0, \ldots, L\}$ are time steps. 
A directed edge $(t-\ell) \to t$ exists in $\mathcal{E}_t$ if and 
only if the realization of the event at $t-\ell$ is a direct cause 
of the event at $t$.
\(\mathcal{G}^s_{\mathcal{XX}}\) answers: did $Fire@t_1 \to Smoke@t_2$ at the sample level. 
\end{definition}

\begin{definition}[Sample-Markov Boundary Graph]
\label{def:sample_mb_graph}
The Sample Markov Boundary Graph $\mathcal{G}^s_{\mathcal{XY}} = 
(\mathcal{T} \cup \mathcal{Y},\, \mathcal{E}_y)$ is a bipartite DAG. 
Edge $t \to Y_j$ exists in $\mathcal{E}_y$ if and only if the event 
realization $x_t$ causes $Y_j$. Under 
Eq.~\eqref{eq:joint_dgp}, $Y_j$ has neither children nor spouses, 
so $\mathcal{MB}(Y_j) = \mathrm{Pa}_{\mathcal{G}^s_{\mathcal{XY}}}(Y_j)$.
\end{definition}

\section{The \textsc{seq2cause} Framework}
\label{sec:framework}
\subsection{Neural Autoregressive Density Estimation}
\label{sec:arde}
To learn the described DGP we employ two AR models~\cite{math2024harnessingeventsensorydata} namely the event and outcome models $\mathrm{f}_{\theta_x}, \mathrm{f}_{\theta_y}$ respectively with parameters \(\theta_x, \theta_y\)\footnote{\(\mathrm{f}_{\theta_y}\) is stacked on top of the event model which serves as a backbone containing the majority of the parameters.}. They are trained once via cross-entropy minimization. They both map the trajectory \(x_{<t}\) to a distribution over the next event type and outcomes:

\begin{align}\label{eq:ar_event_head}
    P_{\theta_x}(X_t \mid X_{<t})
    &=\; \mathrm{Softmax}(\mathrm{f}_{\theta_x}(x_{<t})) \\
     P_{\theta_y}(Y_j\mid X_{<t})
    &= [\mathrm{Sigmoid}\!\bigl(\mathrm{f}_{\theta_y}(\mathrm{f}_{\theta_x}(x_{<t})))\bigr)]_j,
    \quad j = 1,\ldots,c.
\end{align}
Crucially, the likelihood of \(Y_j\) is estimated \emph{at every step $t$} even though it is only realized at $t = L$; this forces the model to recover the outcome likelihood from partial histories \cite{math2024harnessingeventsensorydata}, which is precisely the sequential conditional probabilities required by the described CI-test in \S\ref{sec:estimation}. 

\subsection{Assumptions and $\epsilon$-Oracle Model}
\label{sec:assumptions}

\textsc{seq2cause} assumes \emph{no parametric form for the DGP}, above the expressivity of the AR estimator. Beyond temporal precedence 
(A\ref{ass:temporal}) and causal sufficiency (A\ref{ass:sufficiency}), 
two assumptions distinguish our framework from prior work~\cite{assaad_survey_ijcai_cd_time_series}:

\begin{assumption}[$\epsilon$-Oracle Model]\label{ass:oracle}
The AR model $\mathrm{f}_\theta$ trained by maximum likelihood  
on data from $P$ satisfies:
\begin{equation}\label{eq:oracle_kl}
    D_{\mathrm{KL}}\!\bigl(P(X_t \mid X_{<t}) \,\|\,
    P_\theta(X_t \mid X_{<t})\bigr) \leq \epsilon \quad \forall\, t.
\end{equation}
\end{assumption}

Assumption~\ref{ass:oracle} does \emph{not} presuppose a perfect model: it states that whatever quality the backbone achieves, the causal identification error is bounded by that same quantity (Thm.~\ref{thm:cmi_error_bound}, Cor.~\ref{cor:adaptive_threshold}). 
Empirically, Fig.~\ref{fig:robustness} confirms that structural recovery succeeds well before full convergence ($\epsilon \approx 0.1$ suffices). No oracle is assumed; only that the practitioner can \emph{monitor} the gap (Appendix~\ref{app:epsilon_monitor}). Secondly, standard faithfulness~\cite{constrainct_based_cd} requires CI-tests 
to return exact zeros --- unrealistic under
imperfect density estimation. We instead adopt strong 
faithfulness~\cite{uhler2013geometry}: valid causal associations 
must exceed a significance threshold $\tau$. 
We provide a robustness analysis to assumption violation in Appendix~\ref{appendix:limitation}.

\subsection{Sample-Level: The Universal CMI Primitive}
\label{sec:estimator}

Both the \(\mathcal{X} \to \mathcal{X}\) and \(\mathcal{X} \to \mathcal{Y}\) regimes reduce to a 
single statistical primitive: testing whether a past event carries information about a target variable (event or outcome), given the 
observed trajectory. Let \(T_{t+1} \;\in \;  \bigl\{E_{t+1}\bigr\}\;\cup\; 
    \bigl\{Y_1, \ldots, Y_c\bigr\}, \) the target variable where $E_{t+1} \triangleq \mathbf{1}[{X_{t+1} = x_{t+1}]}$ is the 
binary event indicator and $Y_j$ is the $j$-th outcome variable. Nodes in \(\mathcal{G}^s\) are indexed by these binary random variables; an edge \(E_{t-\ell} \to E_t\) denotes a direct causal effect between the corresponding event realizations\footnote{\(E_t\) projects the multinomial sequence onto a binary sequence in which causal dependencies between events realization are well-defined in the sample-level graph \(\mathcal{G}^s\): \(E_0 \to E_4\) means that the occurrence of \(x_0\) at \(t_0\) caused event \(x_4\) at \(t_4\).}.
\(T_{t+1}\) is a binary variable, either the indicator of the next event type or outcome.

\paragraph{Conditional Mutual Information.}
\label{sec:cmi_unified}

In a sequence, we measure the remaining uncertainty of a target \(T_{t+1}\) when observing event \(E_t\) given $X_{<t} \triangleq \{X_0, \cdots, X_{t-1}\}$:
\begin{equation}\label{eq:cmi_unified}
    I(T_{t+1},\, E_t \mid X_{<t}) 
    \;\triangleq\; 
    H(T_{t+1} \mid X_{<t}) \;-\; H(T_{t+1} \mid E_t,\, X_{<t})
    \;=\; 
    \mathbb{E}_{e_t, x_{<t}}\!\left[
        I_G(T_{t+1},\, e_t \mid x_{<t})
    \right],
\end{equation}
where $I_G(T, e_t \mid x_{<t}) = D_{\mathrm{KL}}\!\bigl(
P(T \mid e_t, x_{<t}) \,\|\, P(T \mid x_{<t})\bigr)$ 
is the information gain~\cite{quinlan:induction, math2025oneshot} and \(I\) is the conditional mutual information (CMI) \cite{cover1999elements}. The unified CI-test follows directly:
\begin{equation}\label{eq:ci_test_unified}
    T_{t+1} \not\perp E_t \mid X_{<t} 
    \;\iff\; 
    I(T_{t+1},\, E_t \mid X_{<t}) > 0.
\end{equation}
Crucially, Eq.~\eqref{eq:cmi_unified} is identical for events or outcomes. The only regime-specific element is which 
conditional distribution $P_\theta(T_{t+1} \mid \cdot)$ is queried by the event or outcome head from the AR model.

\begin{remark}[Full-History Conditioning and Memory]\label{remark:full_history_conditioning}
We condition on the full trajectory $X_{<t}$ rather than the 
coarsened binary history $E_{<t}$. Under causal sufficiency~(A\ref{ass:sufficiency}), this blocks a potential backdoor 
path that would otherwise remain hidden in the binary projection. The \textit{sparse variant} (Appendix~\ref{app:sparse_aprox}) truncates \(X_{<t}\) to an horizon \(m\) to reduce memory consumption in long sequences for \(\mathcal{X} \to \mathcal{X}\). Empirically, the F1 score > \(80\%\) persists up to \(m=20\) (see Appendix, Fig.~\ref{fig:main_results}(c)).
\end{remark}

\paragraph{Lagged Effects \(\mathcal{X}\to\mathcal{X}\).}
Notably, for lagged effects $(t-\ell) \to t$ between events, the mediators \(M = X_{t-\ell+1:t-1}\) open an indirect path from \(E_{t-\ell}\) to \(E_t\); we randomize \(M\) to close it, which combined with temporal precedence (A\ref{ass:temporal}) and causal sufficiency (A\ref{ass:sufficiency}) making \(X_{<t-\ell}\) a valid backdoor adjustment set. We thus identify the existence of a direct edge \(E_{t-\ell} \to E_t\) as a randomized Controlled Direct Effect~\cite{pearl_2009}. For the \(\mathcal{X}\to\mathcal{Y}\) regime this issue does not arise: by the two-chain DGP~(Eq.~\ref{eq:joint_dgp}), \(Y_j\) has no children or spouses, so \(X_{<t}\) is already a valid adjustment set and no intervention is needed\footnote{We provide the full derivation, implementation, and parallelization in Appendix~\ref{appendix:algorithm_parallelization}, and use \(T_{t+1}\) throughout for clarity.}. \paragraph{Estimation and Error Bound.}\label{sec:estimation}
Since the true distribution $P$ is unknown, we use an 
$\epsilon$-oracle AR model (A\ref{ass:oracle}) 
to simulate $N$ i.i.d.\ history particles 
$\{x^{(l)}_{<t}\}_{l=1}^N \sim P_{\theta_x}(X_{<t})$ autoregressively via next-token prediction. Hence, the Monte Carlo estimator of Eq. \eqref{eq:cmi_unified} is given by:
\begin{equation}\label{eq:cmi_estimator}
    \hat{I}_N(T_{t+1},\, E_{t} \mid X_{<t})
    \;=\; 
    \frac{1}{N} \sum_{l=1}^N 
        \mathbb{E}_{e_t \sim P_\theta}\!\left[
            I_G\!\left(T_{t+1},\, e_t \;\middle|\; x^{(l)}_{<t}\right)
        \right].
\end{equation}
This estimator is consistent (Prop.~\ref{prop:consistency_cmi}) and requires only two queries to the model 
per particle: $P_\theta(T_{t+1} \mid x^{(l)}_{<t})$ and 
$P_\theta(T_{t+1} \mid e_t,\, x^{(l)}_{<t})$ (i.e., \textit{without} and \textit{with} cause \(E_t\)). Importantly, it is 
fully parallelizable on GPUs across positions $t$ and particles $l$. Since \(\hat{I}_N\) is an approximation and estimation of the true CMI \(I\), we now show that it is bounded by the model approximation error \(\epsilon\)  (Thm.~\ref{thm:cmi_error_bound}).
\subsection{Sample-Level Soundness}
\begin{theorem}[CMI Error Bound]\label{thm:cmi_error_bound}
Let $\hat{I}_N$ be the Monte Carlo estimator~(Eq. \ref{eq:cmi_estimator}) and $I$ the true CMI for the binary target $T_{t+1}$. Under the $\epsilon$-oracle assumption (A\ref{ass:oracle}) the error is bounded by:
\begin{equation}\label{eq:cmi_bound}
\limsup_{N \to \infty} \big|\hat{I}_N - I\big| \;\leq\; 2\,\Phi\!\Big(\!\sqrt{\epsilon_T\,/\,2}\Big) \;\leq\; 2\,\Phi\!\Big(\!\sqrt{\epsilon/2}\Big),
\end{equation}
where $\Phi(\delta) := \delta \ln 2 + (1{+}\delta)\,h_b\!\big(\tfrac{\delta}{1+\delta}\big)$, $h_b$ is the binary entropy, and \(\epsilon_T\)
is the binary approximation error for binary target $T$. The right inequality is the computable worst-case bound, monitored via the cross entropy loss. The left inequality is the operative bound: since each CI test queries a binary projection of the full categorical distribution, $\epsilon_T$ absorbs only the fraction of $\epsilon$ relevant to target $T$. 
\end{theorem}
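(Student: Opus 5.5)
The proof splits the error into a Monte Carlo term and a model-misspecification term, and controls the latter by continuity of entropy on binary distributions. Write $I_\theta$ for the CMI of Eq.~\eqref{eq:cmi_unified} computed under the model law $P_\theta$ instead of $P$. Then
\begin{equation*}
|\hat I_N - I| \le |\hat I_N - I_\theta| + |I_\theta - I|.
\end{equation*}
The particles $x^{(l)}_{<t}$ are i.i.d.\ from $P_{\theta_x}$. Each summand $\mathbb{E}_{e_t}[I_G]$ is a KL divergence between Bernoulli laws whose parameters are bounded away from degeneracy, or at least integrable. So the strong law of large numbers (equivalently Prop.~\ref{prop:consistency_cmi}) gives $\hat I_N \to I_\theta$ almost surely. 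The first term therefore vanishes in the $\limsup$, and it remains to bound $|I_\theta - I|$ deterministically.

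For the second term I use the entropy form of Eq.~\eqref{eq:cmi_unified}:
\begin{equation*}
I = H(T_{t+1}\mid X_{<t}) - H(T_{t+1}\mid E_t, X_{<t}),
\end{equation*}
and similarly for $I_\theta$. By the triangle inequality, $|I_\theta - I|$ is at most the sum of the two conditional-entropy discrepancies. This sum is the source of the factor $2$.

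Each discrepancy is an average over histories of the quantity $|H_b(p) - H_b(q)|$, where $p$ and $q$ are the true and model Bernoulli parameters of $T_{t+1}$ given the same conditioning. For each such pair I do two things:
\begin{enumerate}
\item Apply Pinsker's inequality to the binary projections. Here $\epsilon_T$ is defined as the KL divergence between the binarized true and model conditionals, so the total-variation distance satisfies $\delta = \|p - q\|_{TV} \le \sqrt{\epsilon_T/2}$.
\item Apply a continuity bound for entropy in total variation, of Fannes--Audenaert/Zhang type, in the form $|H(P)-H(Q)| \le \delta \ln(|\mathcal{A}|-1) + (1+\delta)\,h_b\!\big(\tfrac{\delta}{1+\delta}\big)$, or a variant whose first term gives the $\delta\ln 2$ appearing in $\Phi$.
\end{enumerate}
Since $\Phi$ is increasing and concave on the relevant range, Jensen's inequality lets me pass the average over histories inside. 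The resulting bound is $\Phi(\sqrt{\epsilon_T/2})$ for each entropy term, and hence $2\Phi(\sqrt{\epsilon_T/2})$ overall.

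The right inequality $\epsilon_T \le \epsilon$ follows from the data-processing inequality for KL divergence. The map $X_{t+1} \mapsto \mathbf{1}[X_{t+1}=x_{t+1}]$ is a deterministic channel, so $D_{KL}$ between the binarized laws is at most $D_{KL}$ between the full categorical laws, which is at most $\epsilon$ by Eq.~\eqref{eq:oracle_kl}. Monotonicity of $\Phi$ then finishes. For an outcome target $T = Y_j$, I interpret $\epsilon_T$ as the analogous KL divergence of the label head; this is where I am least sure the paper's assumption covers it. I would state that the outcome head obeys the same oracle bound.

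The main obstacle is that A\ref{ass:oracle} controls KL divergence only for conditionals given the true prefix $X_{<t}$. By contrast, $H(T_{t+1}\mid E_t, X_{<t})$ under $P_\theta$ also involves conditioning on $e_t$ and averaging over $P_\theta$-generated histories. I would handle the first issue by noting that conditioning on $(e_t, x_{<t})$ is conditioning on a longer prefix, to which the per-$t$ bound also applies. For the second, the simplest fix is to evaluate the entropy averages under the same history measure. Any remaining mismatch in the history measure must either be absorbed, via chain-rule accumulation of KL divergence, or assumed away as part of the $\epsilon$-oracle regime. I expect this bookkeeping, rather than the Pinsker-plus-Fannes step, to require care.
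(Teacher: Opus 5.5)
This is essentially the paper's proof: the same Monte Carlo/bias split closed by Prop.~\ref{prop:consistency_cmi}, the same two conditional-entropy discrepancies giving the factor $2$, and Pinsker combined with the Alicki--Fannes--Winter bound with $d_A=2$, with two small differences: the paper applies that bound directly to each conditional entropy rather than pointwise plus Jensen, and it takes $\epsilon_T\le\epsilon$ from the exact chain-rule decomposition of Prop.~\ref{prop:binary_projection} rather than from data processing. The paper does not resolve the obstacle you flag either, since it simply invokes teacher forcing and reuses $\delta=\sqrt{\epsilon/2}$ for both terms, but your ``longer prefix'' fix only covers $e_t=1$: both $H(T_{t+1}\mid X_{<t})$ and the $e_t=0$ branch involve $P_\theta$-weighted mixtures over $X_t$, which A\ref{ass:oracle} controls only through the chain rule on the two-step joint law, so the clean repair is to apply the conditional (AFW) form to the joint law of $(E_t,T_{t+1})$ given $x_{<t}$, at the cost of a somewhat larger $\delta$.
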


\textit{(Proof Sketch)} We use the Alicki-Fannes-Winter \cite{Winter_2016} inequality for the difference between conditional entropies of two distributions with a small total variation distance \(TV(P, P_\theta) \leq 1/2\). 

Using Thm.~\ref{thm:cmi_error_bound}, as $\epsilon \to 0$, the CMI error vanishes and structural recovery becomes exact using standard faithfulness or must exceed the noise floor induced by \(\epsilon_T\). Even for finite \(N\), it is empirically dominated by the approximation bias induced by \(\epsilon_T\) for \(N>64\) (Appendix, Fig.~\ref{fig:ablation_n_particles}).

\begin{theorem}[Sample-Level Soundness]
\label{thm:soudness_sample_level}
Under \(\epsilon\)-strong faithfulness~(Def. \ref{def:strong_faithfulness}), causal sufficiency (A\ref{ass:sufficiency}), and temporal precedence 
(A\ref{ass:temporal}), \textsc{seq2cause} recovers the correct 
Sample-Time Causal Graph $\mathcal{G}^s$ as $N \to \infty$: 
an edge $(t{-}\ell) \to t$ is included iff 
$\hat{I}_N > 2\Phi(\sqrt{\epsilon_T/2})$ 
and excluded iff $\hat{I}_N \leq 2\Phi(\sqrt{\epsilon_T/2})$, 
with no false positives in the limit $N\to \infty$ and $\epsilon \to 0$.
\end{theorem}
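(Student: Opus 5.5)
The plan is to reduce soundness of each edge decision to the CMI error bound of Theorem~\ref{thm:cmi_error_bound}, combined with the separation guaranteed by $\epsilon$-strong faithfulness (Def.~\ref{def:strong_faithfulness}). I will first fix a candidate edge $(t-\ell)\to t$, or $t\to Y_j$, and identify the population quantity that \textsc{seq2cause} targets. For $\mathcal{X}\to\mathcal{X}$ edges this is the CMI $I(E_t, E_{t-\ell}\mid X_{<t-\ell}, \mathrm{do}(M))$, with the mediators $M$ randomized. For $\mathcal{X}\to\mathcal{Y}$ edges it is $I(Y_j,E_t\mid X_{<t})$.

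The first step is to show that this quantity is positive iff the edge is in $\mathcal{G}^s$. By temporal precedence (A\ref{ass:temporal}), only earlier events can be parents. Causal sufficiency (A\ref{ass:sufficiency}) together with randomizing $M$ makes $X_{<t-\ell}$ a valid backdoor adjustment set, so the estimand equals a controlled direct effect measure. It is therefore zero when there is no direct edge, by the Markov property of the DAG in Remark~\ref{rem:imbrication}. For outcomes, $Y_j$ has no children or spouses, so $\mathcal{MB}(Y_j)=\mathrm{Pa}(Y_j)$ (Def.~\ref{def:sample_mb_graph}), and the CI test \eqref{eq:ci_test_unified} characterizes parenthood directly.

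The second step strengthens this positivity using $\epsilon$-strong faithfulness. I take the definition to say that true edges have $I > 4\Phi(\sqrt{\epsilon_T/2})$, or more generally a margin exceeding twice the noise floor, while absent edges have $I=0$. The third step invokes Theorem~\ref{thm:cmi_error_bound}, which gives $\limsup_N|\hat I_N - I|\le 2\Phi(\sqrt{\epsilon_T/2})$. On a non-edge this yields $\hat I_N\le 2\Phi(\sqrt{\epsilon_T/2})$ asymptotically, so the edge is excluded. On a true edge it yields $\hat I_N \ge I - 2\Phi(\sqrt{\epsilon_T/2}) > 2\Phi(\sqrt{\epsilon_T/2})$, so the edge is included. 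Because the graph has finitely many candidate pairs ($\mathcal{O}(L^2 + Lc)$), a union over pairs gives exact recovery of $\mathcal{G}^s$ in the limit. Finally, as $\epsilon\to0$ we have $\Phi(\sqrt{\epsilon_T/2})\to0$, since $\Phi(\delta)\to0$ as $\delta\to0$ by continuity of $h_b$. The threshold then collapses to $0$ and we recover the standard-faithfulness statement with no false positives.

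The main obstacle is the boundary case. The statement claims inclusion iff $\hat I_N>2\Phi$, but the limsup bound only controls deviations asymptotically and non-strictly. A non-edge could therefore sit exactly at the threshold, and a true edge needs a strict margin. I would handle this by reading $\epsilon$-strong faithfulness as giving a gap of at least $4\Phi(\sqrt{\epsilon_T/2})$ with strict inequality. I would also rely on the consistency of $\hat I_N$ (Prop.~\ref{prop:consistency_cmi}) to convert limsup statements into eventual almost-sure statements for each fixed pair. A secondary subtlety is that Theorem~\ref{thm:cmi_error_bound} is stated with conditioning on $X_{<t}$, whereas the interventional estimand uses randomized $M$. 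Here I would argue that the same Alicki--Fannes--Winter bound applies particle-wise, since the simulated mediators are drawn from $P_\theta$ and each conditional query still obeys the $\epsilon$-oracle KL bound.
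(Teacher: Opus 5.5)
Your proposal matches the paper's proof: each candidate edge is decided by pairing $\epsilon$-strong faithfulness ($I>4\Phi(\sqrt{\epsilon_T/2})$ on true edges) with the $2\Phi(\sqrt{\epsilon_T/2})$ bound of Theorem~\ref{thm:cmi_error_bound}; non-parents get $I=0$ from causal sufficiency and the Markov property, with mediators randomized for lagged $\mathcal{X}\to\mathcal{X}$ edges; and the threshold vanishes as $\epsilon\to 0$. The paper wraps this in an induction on $t$ whose hypothesis is never used, so your direct per-pair argument with a finite union is equivalent, and your boundary and interventional-estimand caveats go beyond what the paper checks. One correction: the mediators are drawn from the proposal $Q$ (e.g.\ uniform), not from $P_\theta$, so your particle-wise bound needs A\ref{ass:oracle} to hold at every conditioning history, including off-distribution ones.
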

\textit{(Proof Sketch)} By induction, we show that for each sequential step \(t\), we can recover the potential causes \(E_{<t}\) of the effect event \(E_t\) or \(Y_j\). By temporal precedence (A\ref{ass:temporal}) and causal sufficiency (A\ref{ass:sufficiency}), we can conclude that no other events affect the effect and thus verify the heredity.

\begin{corollary}[Root-Cause Analysis as Graph Traversal]
\label{cor:rca}
Under the conditions of Thm.~\ref{thm:soudness_sample_level}, 
\textsc{seq2cause} recovers the complete joint graph 
$\mathcal{G}^s = \mathcal{G}^s_{\mathcal{XX}} \cup 
\mathcal{G}^s_{\mathcal{XY}}$ from a single sequence~$s$. 
The root cause of outcome~$Y_j$ is then identifiable as the 
event realization~$x_{t^*}$ at the earliest time index 
admitting a directed path to the Markov boundary of~$Y_j$:
\begin{equation}
    \mathrm{RC}(Y_j,\, s) 
    \;=\; x_{t^*}, 
    \qquad
    t^* = \argmin_{t}\,\Bigl\{\,
        t \;\Big|\; 
        \exists\, t' \in \mathrm{MB}(Y_j),\;\,
        t \to t' 
        \text{ in } \mathcal{G}^s_{\mathcal{XX}}
    \Bigr\},
\end{equation}
This is the first identifiability result for sample-level 
root-cause analysis in discrete event sequences without 
topology assumptions or interventional 
data~\cite{assaad_rca_time_series_pmlr_2023, 
online_rca_kdd_2023, chain_of_event_2024_sre, han2025root} 
under the listed assumptions.
\end{corollary}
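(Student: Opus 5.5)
The proof has two parts. First, show that \textsc{seq2cause} recovers the full joint graph $\mathcal{G}^s$. Second, show that the root-cause formula is then a well-defined, deterministic function of $\mathcal{G}^s$, so identifiability transfers.

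\textbf{Step 1: both subgraphs are recovered.} Apply Thm.~\ref{thm:soudness_sample_level} twice, once for each choice of the binary target $T_{t+1}$ in the unified CI test (Eq.~\ref{eq:ci_test_unified}).
\begin{itemize}
\item With $T_{t+1}=E_{t+1}$, the randomized controlled-direct-effect construction (mediators randomized, $X_{<t-\ell}$ as adjustment set) yields $\mathcal{G}^s_{\mathcal{XX}}$.
\item With $T_{t+1}=Y_j$, it yields the parents of each outcome. By Def.~\ref{def:sample_mb_graph} and the two-chain DGP (Eq.~\ref{eq:joint_dgp}, Remark~\ref{rem:imbrication}), these parents are exactly $\mathcal{MB}(Y_j)$, since $Y_j$ has no children or spouses. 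This gives $\mathcal{G}^s_{\mathcal{XY}}$.
\end{itemize}
The error control is the same in both cases. Thm.~\ref{thm:cmi_error_bound} bounds every test uniformly by $2\Phi(\sqrt{\epsilon/2})$. Under $\epsilon$-strong faithfulness, every true edge has CMI above twice this floor and every non-edge has CMI zero. So in the limit $N\to\infty$ all finitely many tests (at most $O(L^2+Lc)$) are decided correctly simultaneously, by a union over a finite set. Because no edges $Y\to X$ or $Y\to Y$ exist, the union of the two subgraphs is the full $\mathcal{G}^s$ and is a DAG.

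\textbf{Step 2: the root cause is identifiable.} On the event $\hat{\mathcal{G}}^s=\mathcal{G}^s$, the set
\[
S_j=\{t : \exists t'\in\mathrm{MB}(Y_j),\ t\to t' \text{ in } \mathcal{G}^s_{\mathcal{XX}}\}
\]
is a function of the recovered graph only. It is a finite subset of $\{0,\dots,L\}$, so if it is nonempty its $\argmin$ exists and is unique. Hence $\mathrm{RC}(Y_j,s)=x_{t^*}$ is identified.

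Two conventions need stating.
\begin{itemize}
\item If $S_j=\emptyset$, the root cause is defined as the earliest element of $\mathrm{MB}(Y_j)$.
\item The arrow in the definition of $S_j$ should be read as a directed path (ancestor relation), which the corollary's prose says explicitly. Ancestry in a DAG on time indices is computed exactly by reachability (BFS), so it is again a deterministic function of $\mathcal{G}^s$.
\end{itemize}

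Next, argue that $t^*$ really is a root. Temporal precedence (A\ref{ass:temporal}) makes every edge point forward in time. Causal sufficiency (A\ref{ass:sufficiency}) excludes hidden ancestors. Together these mean no earlier recovered event has a directed path into $\mathrm{MB}(Y_j)$, so $x_{t^*}$ is the earliest causal ancestor of $Y_j$ within $s$.

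The uniqueness claim ("without topology assumptions or interventional data") then follows. Only the single observed sequence $s$ and the frozen model are used, and the mediator randomization is simulated through $P_\theta$ rather than performed on the real system.

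\textbf{Main obstacle.} Step 1 only guarantees correct recovery of $\mathcal{G}^s$ in the joint limit $N\to\infty$, $\epsilon\to0$. A single missed edge can change $t^*$, so I would not try to get robustness from the graph side. Instead I would sharpen the uniform-thresholding argument so that exact recovery holds whenever $\epsilon_T$ is below the strong-faithfulness margin. From there, graph equality makes the traversal step exact with no further error propagation.
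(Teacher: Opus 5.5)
Your proposal is correct and takes the paper's route: the paper gives no separate proof of this corollary, which is read off from the closing step of the proof of Theorem~\ref{thm:soudness_sample_level} (the union of the recovered parent sets over targets $T_t \in \{E_{t+1}\}\cup\{Y_1,\dots,Y_c\}$ is $\mathcal{G}^s$), after which $\mathrm{RC}(Y_j,s)$ is a deterministic function of the recovered graph, exactly as in your Step~2. There are two small slips to fix: first, $\epsilon$-strong faithfulness only guarantees $I > 4\Phi(\sqrt{\epsilon_T/2})$, not $I > 4\Phi(\sqrt{\epsilon/2})$, so you must keep the per-target threshold $2\Phi(\sqrt{\epsilon_T/2})$ rather than the uniform one; second, your claim that $x_{t^*}$ is the earliest causal ancestor of $Y_j$ requires $\mathrm{MB}(Y_j)\subseteq S_j$ (length-zero paths allowed, which also makes your $S_j=\emptyset$ convention unnecessary), since with strict paths $\mathrm{MB}(Y_j)=\{0,5\}$ and the single edge $3\to 5$ give $t^*=3$ although $x_0$ is an earlier parent of $Y_j$ --- and what justifies that claim is that the only parents of $Y_j$ are $\mathrm{MB}(Y_j)$ and no edge leaves $\mathcal{Y}$, not temporal precedence or causal sufficiency.
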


\subsection{Population-Level Discovery via Minimum Description Length}
\label{sec:population}
Population-level causal discovery seeks the global population summary graph $\mathcal{G} = 
(\mathcal{X} \cup \mathcal{Y},\, \mathcal{E})$ given $n$ i.i.d.\ sequences $\{s^{(k)}\}_{k=1}^n$. A directed 
edge $u \to v$ encodes the \emph{type-level} causal claim: 
``\texttt{Fire} causes \texttt{Smoke} across the population''
~\cite{math2025towards, assaad_survey_ijcai_cd_time_series}, as opposed to the 
sample-level claim ``\texttt{Fire}@$t_1$ caused 
\texttt{Smoke}@$t_2$'' recovered by the single sample time graph~\cite{math2025oneshot, tf_causalinterpretation_neurips_2023}.
\paragraph{Two-Part Code.}
To aggregate causal evidences, we formulate a minimum description length (MDL)~\cite{grunwald2004tutorialintroductionminimumdescription} criterion that operates directly on the summary graph $\mathcal{G}$ over event types and outcomes. Formally, the best graph is the one that minimized the cost \(\mathcal{L}(\mathcal{D},\mathcal{G})\)~\cite{grunwald2004tutorialintroductionminimumdescription, janzing_amc} as: 
\begin{equation}\label{eq:vanilla_mdl}
    \mathcal{L}(\mathcal{D}, \mathcal{G}) = \underbrace{\mathcal{L}(\mathcal{D|\mathcal{G}})}_{\text{data encoding cost}} + \underbrace{\mathcal{L}(\mathcal{G})}_{\text{model cost}}
\end{equation}
 
Each type-level edge $u \to v$ encodes a source $u \in \mathcal{X}$ 
and target $v \in \mathcal{X} \cup \mathcal{Y}$ at a cost of \(c_{\mathrm{type}} = \log|\mathcal{X}| 
    + \log(|\mathcal{X}|+|\mathcal{Y}|)\) nats by Rissanen's prefix-free integer code~\cite{rissanen1978modeling}. Then, the data encoding cost is the negative log likelihood (NLL) of the dataset under $\mathcal{G}$~\cite{cover1999elements}, well-defined through the 
acyclicity of the sample-level factorization 
(Eq.~\ref{eq:joint_dgp}). Therefore, adding edge $u \to v$ changes $\mathcal{L}(\mathcal{D}, \mathcal{G})$:
\begin{small}
\begin{equation}\label{eq:mdl_delta}
    \Delta\mathcal{L}(\mathcal{D}, \mathcal{G})
    = 
      \sum_{(k, t,\ell)\,\in\,\mathcal{M}_{uv}}
      \log\frac{
          P\!\bigl(T^{(k)}_t = 1\mid 
                \mathrm{Pa}_{\mathcal{G}}(T^{(k)}_t) \cup 
                \{E^{(k)}_{t-\ell}\}\bigr)
      }{
          P\!\bigl(T^{(k)}_t = 1\mid 
                 \mathrm{Pa}_{\mathcal{G}}(T^{(k)}_t) \bigr)
      } - c_{\mathrm{type}} 
\end{equation}
\end{small} 
where $\mathcal{M}_{uv} := \{(k, t,\ell) : x_{t-\ell}^{(k)} = u, x^{(k)}_t \;\text{or}\; y_j^{(k)} = v\}$ are the matching positions of source type $u$ at lag $\ell$ and target \(v\) at \(t\) in sequence $k$, $\mathrm{Pa}_{\mathcal{G}}(T^{(k)}_t)$ the current parent set of $T_t$ in $\mathcal{G}$, and $E_{t-\ell} = \mathbf{1}[{X_{t-\ell} = u]}$ the 
candidate cause at lag $\ell$. Crucially, the edge \(u\to v\) \emph{is included if $\Delta\mathcal{L} > 0$} (the 
accumulated likelihood gain exceeds the model cost).

\paragraph{Generative Error Correction for MDL Inclusion.}
Since $P$ is replaced by $P_\theta$ (A\ref{ass:oracle}), 
the approximated log-ratio in Eq.~\ref{eq:mdl_delta} deviates at every matched position. It therefore accumulates across sequences and degrades precision as the number of samples \(n\) grows. 
The following Lem.~\ref{lem:mdl_deviation} characterizes this deviation 
in function of \(\epsilon\) and yields the self-corrected inclusion criterion of 
Cor.~\ref{cor:adaptive_threshold}.
\begin{lemma}[Data Encoding Cost Deviation under the $\epsilon$-Oracle]
\label{lem:mdl_deviation}
Let $\Delta L^P$ and $\Delta L^\theta$ denote the true and 
approximated MDL gains for edge $u \to v$ 
(Eq.~\ref{eq:mdl_delta}) under $P$ and $P_\theta$ respectively.
Under A\ref{ass:oracle}, the accumulated deviation satisfies:
\begin{equation}
    \left|\Delta \mathcal{L}^\theta - \Delta \mathcal{L}\right|
    \;\leq\; \sqrt{\epsilon_T/2}\;\cdot\;\eta_{uv}
\end{equation}
\end{lemma}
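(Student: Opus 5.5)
The plan is to write both gains as sums over the matched set $\mathcal{M}_{uv}$ of the same log-ratio, once under $P$ and once under $P_\theta$. The model cost $c_{\mathrm{type}}$ is identical in both and cancels. By the triangle inequality this gives
\begin{equation*}
\left|\Delta\mathcal{L}^\theta-\Delta\mathcal{L}\right|\le\sum_{(k,t,\ell)\in\mathcal{M}_{uv}}\left(\left|\log\tfrac{P_\theta(T=1\mid \mathrm{Pa}\cup\{E\})}{P(T=1\mid \mathrm{Pa}\cup\{E\})}\right|+\left|\log\tfrac{P_\theta(T=1\mid \mathrm{Pa})}{P(T=1\mid \mathrm{Pa})}\right|\right).
\end{equation*}
So it suffices to control, at each matched position, the log-ratio between model and true conditional probabilities of the binary target $T$.

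\textbf{Per-position step.} By the data-processing inequality, projecting the categorical conditional onto the binary target $T$ can only shrink the KL divergence. Hence the binary projection satisfies $D_{\mathrm{KL}}(P_T\|P_{\theta,T})\le\epsilon_T\le\epsilon$ under A\ref{ass:oracle}, exactly as in Theorem~\ref{thm:cmi_error_bound}. Pinsker then gives $|P(T=1\mid\cdot)-P_\theta(T=1\mid\cdot)|\le\mathrm{TV}\le\sqrt{\epsilon_T/2}$. To turn this probability gap into a log-ratio gap I would use the mean value theorem: $|\log a-\log b|\le |a-b|/\min(a,b)$. This bounds each term by $\sqrt{\epsilon_T/2}/p_{\min}$, where $p_{\min}$ is a lower bound on the relevant conditional probabilities at matched positions. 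These probabilities are positive on $\mathcal{M}_{uv}$ because the target was actually realized there.

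\textbf{Collecting the constant.} Summing over matched positions produces the bound with
\begin{equation*}
\eta_{uv}:=\sum_{(k,t,\ell)\in\mathcal{M}_{uv}}\left(\frac{1}{\min(P,P_\theta)(T=1\mid \mathrm{Pa}\cup\{E\})}+\frac{1}{\min(P,P_\theta)(T=1\mid \mathrm{Pa})}\right),
\end{equation*}
which is at most $2|\mathcal{M}_{uv}|/p_{\min}$. I would define $\eta_{uv}$ this way, as an edge-specific effective count. This matches the statement's remark that the deviation accumulates with the number of samples $n$, through $|\mathcal{M}_{uv}|$. The conditioning sets $\mathrm{Pa}_{\mathcal{G}}(T)$ and $\mathrm{Pa}_{\mathcal{G}}(T)\cup\{E\}$ are coarsenings of the full history $X_{<t}$. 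I would argue the binary-projection bound still holds for them: the model's conditional given a coarser set is obtained by marginalizing the history particles, and KL is jointly convex. So the per-history bound $\epsilon_T$ is inherited after averaging.

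\textbf{Main obstacle.} The hard step is the log-ratio conversion. The log is not Lipschitz near zero, so some positivity or bounded-away-from-zero condition is unavoidable. I would first try to absorb it into $\eta_{uv}$ as above. If a cleaner constant is desired, I would instead impose a clipping floor $p_{\min}$ on the sigmoid and softmax outputs, which is standard in implementations. A secondary subtlety is the conditioning on intervened mediators for lagged edges. There I would invoke the same randomization used in \S\ref{sec:estimator}, so that each term is again a conditional of $P_\theta$ versus $P$ with the same $\epsilon_T$ control.
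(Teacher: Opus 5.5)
Your skeleton matches the paper's: cancel $c_{\mathrm{type}}$, split each matched log-ratio into a with-cause and a without-cause term, then apply data processing, Pinsker, and $|\log a-\log b|\le |a-b|/\min(a,b)$. The gap is in the last step. In this lemma $\eta_{uv}$ is not a constant you are free to choose. The text right after the statement fixes it as $\sum_{(k,t)\in\mathcal{M}_{uv}}\kappa_t^{(k)}$ with $\kappa_t^{(k)}\propto 1/(q_\pm-\sqrt{\epsilon_T/2})$, computable from the two forward passes of $P_\theta$ alone. Cor.~\ref{cor:adaptive_threshold} relies on exactly this, since $\tau^*_{uv}$ must be evaluated without access to $P$. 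Your $\eta_{uv}$ is built from $1/\min(P,P_\theta)$ and so depends on the unknown true conditionals; what you prove is a bound with an uncomputable constant.

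The missing idea is to use the Pinsker bound a second time to certify the denominator from the model side: $p\ge q-|q-p|\ge q-\sqrt{\epsilon_T/2}$. Whenever $q>\sqrt{\epsilon_T/2}$ this gives $\min(p,q)\ge q-\sqrt{\epsilon_T/2}>0$, hence $|\log q-\log p|\le \sqrt{\epsilon_T/2}/(q-\sqrt{\epsilon_T/2})$. Positions with $q\le\sqrt{\epsilon_T/2}$, where nothing certifies $p>0$, are declared under-resolved and excluded from the accumulation. On resolved positions your per-position constant is pointwise no larger than the paper's, so your inequality implies theirs after this one line; it just cannot serve as a threshold on its own.

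Your fallback does not close the gap. Clipping the sigmoid or softmax outputs bounds $q$ from below but says nothing about $p$. A lower bound on $p$ follows only if the floor exceeds $\sqrt{\epsilon_T/2}$, which is precisely the paper's Case~A condition. Likewise, realizing the target at a matched position does not guarantee $p_\pm>0$ once mediators are randomized, because those conditioning sets are counterfactual. The exclusion rule is what disposes of such positions, since $p=0$ forces $q\le\sqrt{\epsilon_T/2}$.

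A secondary point concerns the conditioning sets. Joint convexity of KL transfers the per-history bound only when $P$ and $P_\theta$ are mixed with the same weights. That holds for the randomized mediators, where both sides average over the same proposal $Q$. It fails if you coarsen to $\mathrm{Pa}_{\mathcal{G}}$ by marginalizing histories weighted by $P_\theta$ on one side and by $P$ on the other. The chain rule then adds a term $D_{\mathrm{KL}}(P(H\mid Z)\,\|\,P_\theta(H\mid Z))$ that accumulates over the prefix. The paper avoids this by taking $Z_\pm$ to be the conditioning sets actually fed to the frozen model: the observed prefix with or without the cause, mediators randomized by $Q$. It applies the binary projection bound $\epsilon_T^{(k,t)}\le\epsilon_T$ there, which is also where the $q_\pm$ entering $\kappa_t^{(k)}$ are evaluated.
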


where $\epsilon_T \leq \epsilon$ is the local binary 
approximation error (Prop.~\ref{prop:binary_projection}), 
and $\eta_{uv}$ is fully computable from $P_\theta$ and 
the data (Appendix~\ref{proof:threshold_mdl}, Eq.~\ref{eq:compute_threshold}). Concretely, $\eta_{uv} = \sum_{(k,t) \in \mathcal{M}_{uv}} \kappa_t^{(k)}$ accumulates per-position inverse-margin terms $\kappa_t^{(k)} \propto 1/(q_\pm - \sqrt{\epsilon_T/2})$, where $q_\pm = P_\theta(T_t \mid Z_\pm)$ are the \emph{same} two AR forward passes already evaluated for $\Delta\mathcal{L}^\theta$ --- so $\eta_{uv}$ requires no additional model queries.


\begin{corollary}[Adaptive Population Threshold]
\label{cor:adaptive_threshold}
Using Lem.~\ref{lem:mdl_deviation}, edge $u \to v$ is included 
in the population summary graph $\mathcal{G}$ if and only if:
\begin{equation}\label{eq:adaptive_threshold}
\boxed{
    u \to v \in \mathcal{G}
    \;\iff\;
    \Delta \mathcal{L}^\theta  \;>\; \tau_{uv}^*,
    \qquad
    \tau_{uv}^* 
    = \underbrace{c_{\mathrm{type}}}_{\text{MDL penalty}}
    + \;\;\; \underbrace{ \sqrt{\epsilon_T/2}\cdot\eta_{uv}}_{\text{error correction}}
}
\end{equation}
\end{corollary}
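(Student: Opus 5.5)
The plan is to derive Cor.~\ref{cor:adaptive_threshold} by combining the exact MDL inclusion rule of Eq.~\eqref{eq:mdl_delta} with the deviation bound of Lem.~\ref{lem:mdl_deviation}. Under the true law $P$, the two-part code criterion of Eq.~\eqref{eq:vanilla_mdl} includes $u\to v$ iff the net change, data-cost gain minus $c_{\mathrm{type}}$, is positive. So the first step is notational. We split the true quantity as $\Delta\mathcal{L} = G^P_{uv} - c_{\mathrm{type}}$, where $G^P_{uv}$ is the accumulated log-ratio over $\mathcal{M}_{uv}$, and similarly $\Delta\mathcal{L}^\theta = G^\theta_{uv} - c_{\mathrm{type}}$. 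The penalty $c_{\mathrm{type}}$ is model-independent, so it cancels in the difference. Lem.~\ref{lem:mdl_deviation} therefore bounds $|G^\theta_{uv}-G^P_{uv}|\le \sqrt{\epsilon_T/2}\,\eta_{uv}$. The Corollary's phrase ``$\Delta\mathcal{L}^\theta>\tau^*_{uv}$'' is to be read with $\Delta\mathcal{L}^\theta$ denoting the accumulated likelihood gain, so that $c_{\mathrm{type}}$ appears in $\tau^*_{uv}$. I will state this convention explicitly at the outset.

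The core step is a one-sided sandwich. Suppose $G^\theta_{uv} > c_{\mathrm{type}} + \sqrt{\epsilon_T/2}\,\eta_{uv}$. Then $G^P_{uv} \ge G^\theta_{uv} - \sqrt{\epsilon_T/2}\,\eta_{uv} > c_{\mathrm{type}}$, so the edge is also accepted by the oracle MDL criterion. This gives soundness: no edge is included that the true code would reject. For the converse, suppose the edge satisfies the true criterion with margin, $G^P_{uv} > c_{\mathrm{type}} + 2\sqrt{\epsilon_T/2}\,\eta_{uv}$. Then $G^\theta_{uv} > \tau^*_{uv}$, so the edge is retained. The ``if and only if'' therefore holds exactly for all edges outside the ambiguity band $|G^P_{uv}-c_{\mathrm{type}}|\le 2\sqrt{\epsilon_T/2}\,\eta_{uv}$. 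Edges inside the band are excluded by this band argument in the strong-faithfulness spirit of Def.~\ref{def:strong_faithfulness}: either we assume, as for Thm.~\ref{thm:soudness_sample_level}, that true population edges clear this margin, or we phrase the corollary as the decision rule that is conservative with respect to the oracle rule. As $\epsilon\to 0$, the band collapses and $\tau^*_{uv}\to c_{\mathrm{type}}$, which recovers the vanilla criterion.

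The remaining checks concern the lemma's constant. First, $\eta_{uv}$ must be finite and computable from $P_\theta$ alone, so that the threshold is operational. This requires the per-position margins $q_\pm - \sqrt{\epsilon_T/2}$ to be positive. I would impose this as a standing condition: positions with nonpositive margin are either discarded or contribute $+\infty$, which forces exclusion and is again conservative. Second, the deviation bound is applied conditionally on the current parent set $\mathrm{Pa}_{\mathcal{G}}(T_t)$, so the corollary holds at each greedy step of the graph search. I expect the main obstacle to be the honest handling of this margin condition together with the ambiguity band. Turning a two-sided deviation bound into an exact ``iff'' is only possible with such a margin assumption, so I would make it explicit rather than leave it implicit.
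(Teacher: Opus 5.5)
Your argument is correct and is the reasoning the paper leaves implicit: the paper gives no separate proof of Cor.~\ref{cor:adaptive_threshold}, it simply raises the MDL penalty $c_{\mathrm{type}}$ by the deviation bound $\sqrt{\epsilon_T/2}\,\eta_{uv}$ of Lem.~\ref{lem:mdl_deviation} so that $\Delta\mathcal{L}^\theta>\tau^*_{uv}$ certifies that the true accumulated gain clears $c_{\mathrm{type}}$, and it uses the ``iff'' as the definition of the self-corrected decision rule rather than as an equivalence with the oracle criterion. Your extra care is accurate in three places. First, $\Delta\mathcal{L}^\theta$ must be read without the $-c_{\mathrm{type}}$ of Eq.~\eqref{eq:mdl_delta}, or the penalty is counted twice. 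Second, only soundness follows from the lemma, and the margin is needed only for edges with $G^P_{uv}>c_{\mathrm{type}}$, since those with $G^P_{uv}\le c_{\mathrm{type}}$ are always rejected. Third, of your two options for unresolved positions only the $+\infty$ one is conservative: discarding them, as the paper does, leaves the lemma bounding only a partial sum, which no longer certifies the full true gain.
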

The threshold $\tau_{uv}^*$ is adaptive in two compounding senses. The MDL penalty $c_{\mathrm{type}}$ is a fixed one-time encoding cost, penalizing weak evidence for adding \(u \to v\). And the error correction $\sqrt{\epsilon_T/2}\cdot\eta_{uv}$ is 
pair-specific, vanishing as $\epsilon_T \to 0$ such that a better oracle requires no manual threshold recalibration. 
\begin{remark}[Computation of $\tau^*_{uv}$]
\label{rem:tau_computation}
Importantly, $\tau_{uv}^*$ requires no Monte-Carlo estimation, yielding memory complexity 
$\mathcal{O}(B \times M \times L^2)$ for the GPU staircase 
implementation (Appendix, Fig.~\ref{fig:trace_diagram}) for \(\mathcal{X} \to \mathcal{X}\) and just $\mathcal{O}(B \times L)$ for \(\mathcal{X} \to \mathcal{Y}\), where $B$ is the 
batch size and $M$ mediators to sample.
\end{remark}

\section{Experiments}
We evaluate \textsc{seq2cause} across all four regimes of
Tab.~\ref{tab:framework}, unifying results from synthetic non-linear SCMs and real-world vehicle diagnostic logs. Our evaluation always proceeds in two phases (1) reuse or train a pretrained AR model (2) apply \textsc{seq2cause}. Full evaluation protocol, robustness analysis and ablations are further discussed in Appendix~\ref{appendix:evaluation}. Sample-Level experiments are standardized under 10 runs using cross-validation, since population-level is an average, we report directly the mean.
\subsection{Setup}
\label{sec:setup}
 We recall that methods like PCMCI~\cite{pcmci}, DYNOTEARS~\cite{dynotears}, VARLiNGAM~\cite{varlingam}, RHINO~\cite{gong2023rhino}, GOLEM~\cite{golem},  TCDF~\cite{tcdf}, CUTS~\cite{yuxiao2023cuts}, NOTEARS~\cite{dag_no_tears}, DAG-GNN~\cite{neural_dag}) all operate on 
\emph{multivariate continuous time series or tabular data}, not on discrete sequences of events.

 
\paragraph{Nonlinear-SCMs (\textit{\(\mathcal{X} \to \mathcal{X}\), sample and population)}.} We validate \textsc{seq2cause} on sequences generated by nonlinear-SCMs with controllable memory $m$, sequence length \(L\), and vocabulary size $|\mathcal{X}|$. We train a LLaMa model~\cite{touvron2023llamaopenefficientfoundation} on the SCMs and validate the trainings by monitoring the normalized oracle scores \(\hat{\epsilon}\) (Eq.~\ref{eq:oracle_score}). We then apply \textsc{seq2cause} to recover the summary graphs. For the sample-level, we compare it against Neural Granger
(same AR backbone, probability difference instead of CMI),
Attention-BERT~\cite{bert},
Saliency (Input$\times$Gradient)~\cite{inputxgradient},
Shapley Value Sampling~\cite{shap},
and naive Random / Frequency baselines. For the population-level, we compare against Hawkes Processes (SHTP)~\cite{shtp}, CASCADE~\cite{cueppers2024causal} and light co-occurences baselines such as point-wise mutual information (PMI)~\cite{church-hanks-1990-word} and TF-IDF ~\cite{tfidf} baselines using individual significance threshold.


\paragraph{Vehicle Diagnostic Sequences (\textit{\(\mathcal{X} \to \mathcal{Y}\), sample and population}).}
We adopt a real-world vehicle dataset~\cite{math2024harnessingeventsensorydata}, comprising
$m{=}300{,}000$ sequences of error codes
over $|\mathcal{X}|{=}29{,}100$ event types and
$|\mathcal{Y}|{=}474$ failure outcomes
with $150$ events on average per sequence.
The two frozen AR backbones from~\cite{math2024harnessingeventsensorydata}
($105$M parameters with a cross entropy loss of \(1.91\) nats on the test set) serve as $\mathrm{f}_{\theta_x}$
and $\mathrm{f}_{\theta_y}$; neither saw the test set during training.
Ground truth MB were annotated by domain
experts; some outcomes contain partial annotations, making this a conservative evaluation. We compare against:
IAMB~\cite{iamb}, CMB~\cite{cmb},
MB-by-MB~\cite{mbb-by-mbb}, PCDbyPCD~\cite{pcdpcd},
and MI-MCF~\cite{mimcf} via the
\textit{PyCausalFS} package~\cite{causality_based_feature_selection_2019} and add a correlation method TF-IDF~\cite{tfidf}. We report weighted Precision, Recall, and F1 on the predicted set. An example of summary graph is shown in Fig.~\ref{fig:time_instance_graph_dtc}.



\begin{table}[!t]
\centering
\caption{%
  \textbf{Unified evaluation of \textsc{seq2cause} across all four causal regimes.}
  \textbf{(a)} $|\mathcal{X}|=1{,}000$, $L=64$, $\epsilon{=}0.05$,
  $\tau{=}3{\times}10^{-5}$, $N{=}128, H(P)=2.4$ nats.
  \textbf{(b, d)} Diagnostic vehicle sequences leading to critical failures as outcomes, $|\mathcal{X}|{=}29{,}100$,
  $|\mathcal{Y}|{=}474$, $L{\approx}150$, \(n=50, 000\) sequences, weighted avg, $\dagger$: timeout ${>}3$ days \textbf{(c)} $|\mathcal{X}|=1{,}000$, $L=64$, $\epsilon{=}0.05$, $n{=}50,000 \;\text{sequences}, H(P)=2.4$ nats.
}
\label{tab:unified}
\setlength{\tabcolsep}{4pt}
\renewcommand{\arraystretch}{1.2}
\resizebox{\linewidth}{!}{%
\begin{tabular}{
  @{} 
  r                              
  c                              
  ccc                            
  | c                            
  ccc                            
  @{}
}
 
\toprule
\rowcolor{headerblue}
& \multicolumn{4}{c|}{\textbf{Event-to-Event}\quad$\mathcal{X}\to\mathcal{X}$}
& \multicolumn{4}{c}{\textbf{Events-to-Outcomes}\quad$\mathcal{X}\to\mathcal{Y}$} \\
 
\midrule
\rowcolor{cellbg}
& \multicolumn{4}{c|}{\textbf{(a)} \textit{Sample-Summary Graph}
    $\;\mathcal{G}^s_{\mathcal{XX}}$  (nonlinear-SCM)}
& \multicolumn{4}{c}{\textbf{(b)} \textit{Sample-Markov Boundary}
    $\;\mathcal{G}^s_{\mathcal{XY}}$~(Vehicle Diagnostics)} \\
 
\multirow{6}{*}{\rotatebox{90}{\small\textbf{Sample-Level}}}
& \textit{Method} & SHD$\downarrow$ & F1$\uparrow$ & Prec.$\uparrow$
& \textit{Method} & Prec.$\uparrow$ & Rec.$\uparrow$ & F1$\uparrow$ \\
\cmidrule(lr){2-5}\cmidrule(lr){6-9}
& Attention (BERT)          & 321.0{\tiny$\pm$15}   & 50.1{\tiny$\pm$.01} & 35.0{\tiny$\pm$.01}
& IAMB~\cite{iamb}                      & \na & \na & \na \\
& Saliency (IG, LLaMA)~\cite{inputxgradient}      & 160.2{\tiny$\pm$6.6}  & 67.3{\tiny$\pm$.01} & 51.4{\tiny$\pm$.01}
& CMB~\cite{cmb}                       & \na & \na & \na \\
& Shapley (LLaMA)~\cite{shap}           & 148.0{\tiny$\pm$5.1}  & 60.0{\tiny$\pm$.01} & 55.5{\tiny$\pm$.01}
& MB-by-MB~\cite{mbb-by-mbb}                  & \na & \na & \na \\
& Neural Granger (LLaMA)~\cite{granger_causality}    & 100.2{\tiny$\pm$14.6} & 69.2{\tiny$\pm$.04} & 71.6{\tiny$\pm$.04}
& PCDbyPCD\cite{pcdpcd}                  & \na & \na & \na \\
\cmidrule(lr){2-5}\cmidrule(lr){6-9}
& \textbf{seq2cause}        
    & \textbf{28.6}{\tiny$\pm$2.8} 
    & \textbf{91.3}{\tiny$\pm$.01} 
    & \textbf{89.7}{\tiny$\pm$.01}
& \textbf{seq2cause}        
    & \textbf{58.2}{\tiny$\pm$1.4} 
    & \textbf{35.5}{\tiny$\pm$0.8} 
    & \textbf{46.1}{\tiny$\pm$1.0} \\
 
\midrule
\rowcolor{cellbg}
& \multicolumn{4}{c|}{\textbf{(c)} \textit{Population-Summary Graph}
    $\;\mathcal{G}_{\mathcal{XX}}$\;\textnormal{(nonlinear-SCM)}}
& \multicolumn{4}{c}{\textbf{(d)} \textit{Population-Markov Boundary}
    $\;\mathcal{G}_{\mathcal{XY}}$ (Vehicle Diagnostics)} \\
\multirow{6}{*}{\rotatebox{90}{\small\textbf{Population-Level}}}
& \textit{Method} & Prec.$\uparrow$ & Rec.$\uparrow$ &  F1$\uparrow$
& \textit{Method} & Prec.$\uparrow$ & Rec.$\uparrow$ & F1$\uparrow$ \\
\cmidrule(lr){2-5}\cmidrule(lr){6-9}
& PMI~\cite{church-hanks-1990-word} \((\tau = 1)\)         & 5.6 & 91.6 & 10.5
& IAMB~\cite{iamb}            & \na & \na & \na \\
& TF-IDF~\cite{tfidf} \((\tau=0.05)\) & 17.4 & 89.6 & 29.1 
& CMB~\cite{cmb}             & \na & \na & \na \\
& SHP~\cite{shtp} (Hawkes)   & \na & \na & \na 
& PCDbyPCD~\cite{pcdpcd}        & \na & \na & \na \\
& CASCADE~\cite{cueppers2024causal}         & \na & \na & \na
& MI-MCF~\cite{mimcf}          & \na & \na & \na \\
& \textsc{seq2cause} \textit{w/o corr.} & 4.2 & 99.7 & \textbf{8.0}
& TF-IDF~\cite{tfidf} & 41.3{\tiny$\pm$2.1} & 18.7{\tiny$\pm$1.4} & 25.8{\tiny$\pm$1.6} \\
\cmidrule(lr){2-5}\cmidrule(lr){6-9}
& \textbf{seq2cause} & \textbf{81.5} & \textbf{54.6} & \textbf{65.4}
& \textbf{seq2cause}     
    & \textbf{68.6}{\tiny$\pm$1.4} 
    & \textbf{48.0}{\tiny$\pm$1.6} 
    & \textbf{56.8}{\tiny$\pm$1.2} \\
\bottomrule
\end{tabular}
}
\end{table}

\subsection{Unified Four-Regime Evaluation}
\label{sec:unified} 

\paragraph{Cell (a) — Sample-level $\mathcal{X}\!\to\!\mathcal{X}$.}
\textsc{seq2cause} outperforms the strongest baseline
(Neural Granger) by over 20~F1 points,
achieving $91\%$ F1 and $89\%$ Precision at
$|\mathcal{X}|\!=\!1{,}000$.
Attention-based methods fail to separate correlation
from causation ($50\%$ F1 score); saliency and Shapley methods
offer partial signal ($67\%$ and $60\%$ F1 respectively)
but lack the interventional grounding of CMI testing.
This confirms that \textbf{measuring the CMI is a
fundamentally more reliable causal signal} than
monitoring single-token probability fluctuations
or gradient-based attributions.

\paragraph{Cell (c) — Population-level $\mathcal{X}\!\to\!\mathcal{X}$.}
Hawkes-based methods (SHP) and CASCADE are architecturally 
excluded: co-occurrence degeneracy (Eq.~\ref{eq:co_occ}) renders 
their excitation matrices unidentifiable at $|\mathcal{X}|{=}1{,}000$, 
and both time out before $N{=}100$ sequences.
Correlation baselines (PMI, TF-IDF) terminate in time but 
conflate causation with association: PMI achieves Recall~91.6\% 
at Precision~5.6\% — effectively predicting all edges — while 
TF-IDF's heuristic threshold yields F1~29.1\%, with no mechanism 
to distinguish direct causes from spurious co-occurrences.
\textsc{seq2cause} \textit{without} the MDL error correction 
(Cor.~\ref{cor:adaptive_threshold}) mirrors this failure: 
Precision collapses to 4.2\% as oracle approximation errors 
accumulate unchecked beyond $N{\approx}1{,}000$ sequences, 
providing direct empirical validation of Lem.~\ref{lem:mdl_deviation}.
With correction, \textsc{seq2cause} achieves F1~65.4\% 
(Prec.~81.5\%, Rec.~54.6\%) — the only method to maintain 
both high precision and moderate recall simultaneously, and \textbf{the only 
causally grounded solution viable}. Association baselines are acutely sensitive to $\tau$: a shift of $0.02$ in TF-IDF's threshold alone can collapse F1 by over 20 points, with no principled criterion to guide that choice (Appendix~\ref{sec:baselines}).
\textsc{seq2cause}'s decision boundary is \textbf{derived analytically from the error bound and requires no
calibration}.
Full evolution across samples is shown in 
Appendix Fig.~\ref{fig:nb_seq_baseline}. We provide a deeper discussion regarding threshold calibration for the baselines in Appendix~\ref{sec:baselines}.
\paragraph{Cells (b, d) — Sample and Population $\mathcal{X}\!\to\!\mathcal{Y}$.}
Classical algorithms are architecturally excluded from this regime by two compounding degeneracies that no amount of compute can resolve.
\textit{Degeneracy 1:} vocabulary-driven intractability. At $|\mathcal{X}|=29,100$, the candidate conditioning-set space is $2^{|\mathcal{X}|}$, and even the greedy IAMB shell requires $\mathcal{O}(|\mathcal{X}|)$ CI tests per outcome, each conditioning on a growing set of discrete covariates whose joint support is never populated. All five baselines exceed a 3-day wall-clock budget on $n = 50,000$ sequences across 4 GPUs.
\textit{Degeneracy 2}: sample-driven impossibility at reduced vocabulary. To rule out "more compute would help", we evaluate on a stress-tested reduced setting in Appendix Tab.~\ref{tab:reduced_n_mb}: we restricted the vocabulary to the top-K most frequent event types ($K \in \{500, 1000, 2000\}$) and the sample count to $n \in \{500, 5000\}$. At $K = 500, n = 500$, classical MB algorithms do run to completion, but the expected co-occurrence count for any $(u, Y_j)$ pair 
tend to \(0\) — therefore any asymptotic CI test reject independence. The empirical result confirms this: Precision and Recall collapse to 0 across all five baselines (Tab.~\ref{tab:reduced_n_mb}), with CI tests returning "independent" on essentially all pairs.
Scaling up either axis closes one degeneracy and opens the other: more samples at fixed $K$ makes Deg. 2 less acute but blows up Deg. 1; more vocabulary at fixed $n$ re-opens Deg. 2. \textsc{seq2cause} escapes this trap because the AR backbone amortizes density estimation across all $(u, Y_j)$ pairs simultaneously, reusing the same representations for every CI test rather than re-estimating joint supports per pair.




\subsection{Generative Error and Scalability}
\label{sec:scalability}

 \paragraph{Identifiability precedes convergence.}
Figure~\ref{fig:robustness_and_scale}(a) shows that exact model
convergence ($\epsilon\!\to\!0$) is not necessary
for structural recovery.
A distinct phase transition occurs at $\epsilon\!\approx\!0.1$:
the coarse-grained causal graph is recovered before
the model masters fine-grained transition probabilities.
\textsc{seq2cause} exhibits a \textbf{conservative
failure mode} — as $\epsilon$ increases,
Recall degrades while Precision stays near 0.95,
meaning errors manifest as missed edges rather
than hallucinated ones.
This is a desirable property for safety-critical settings
where false positives are more costly than false negatives.
\begin{figure}[t]
\centering
\begin{subfigure}[t]{0.32\columnwidth}
    \centering
    \includegraphics[width=\linewidth, height=3.8cm, keepaspectratio=false]%
        {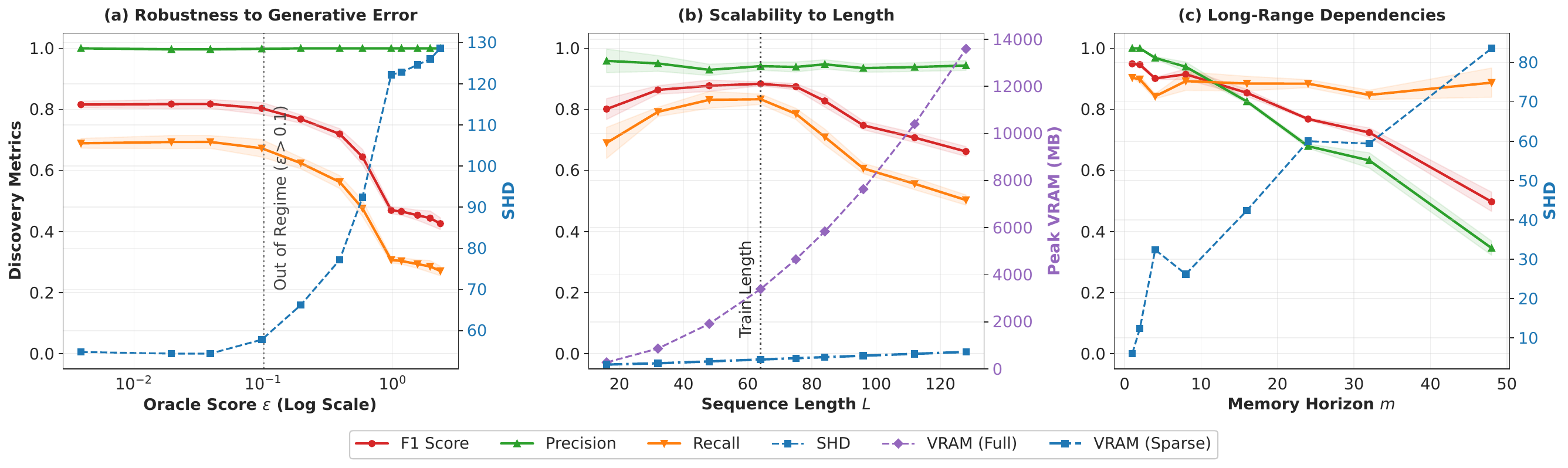}
    \caption{Sample-Level: Robustness to \(\epsilon\)}
    \label{fig:robustness}
\end{subfigure}
\hspace{0.02\columnwidth}
\begin{subfigure}[t]{0.65\columnwidth}
    \centering
    \includegraphics[width=\linewidth, height=3.8cm, keepaspectratio=false]%
        {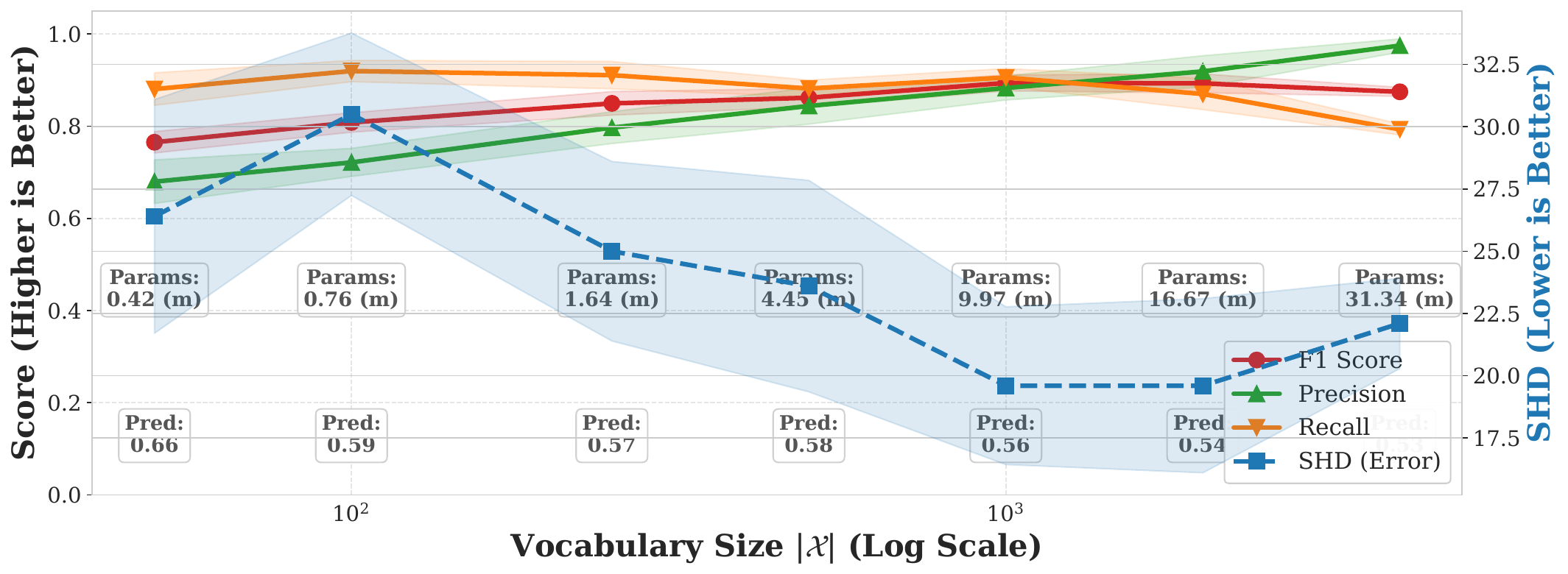}
    \caption{Sample-Level: Scalability to Vocabulary Size}
    \label{fig:vocabulary}
\end{subfigure}
\caption{%
  \textbf{Robustness and scalability of \textsc{seq2cause}.}
  \textbf{(a)}~Phase transition at $\epsilon{\approx}0.1$:
  Precision stays near 0.95 as $\epsilon$ increases
  (conservative failure mode).
  \textbf{(b)}~Stable performance across vocabularies from
  $|\mathcal{X}|{=}10^2$ to $8{\times}10^3$; Shannon Redundancy (\textit{Pred} = $1- H(P)/H_{max}$) accounts for modeling difficulty ($\hat{\epsilon}=0.01, L=64, N=64, \tau=10^{-4}$).%
}
\label{fig:robustness_and_scale}
\end{figure}

\paragraph{Breaking the curse of dimensionality.}
Figure~\ref{fig:robustness_and_scale}(b) evaluates \textsc{seq2cause}
across vocabularies from $|\mathcal{X}|\!=\!10$
to $8{,}000$.
F1 remains stable at ${\approx}81\%$ despite the
exponential growth of the candidate edge space,
confirming that the pretrained AR backbone effectively
amortizes the density estimation cost across all
event types.
No existing constraint-based or Hawkes-based method
can operate at $|\mathcal{X}|\!>\!100$ in this regime.
 
\textbf{Self-correcting MDL threshold $\tau^*_{uv}$ prevents
  precision collapse.} Tab.~\ref{tab:unified} and Appendix Fig.~\ref{fig:ablation_correction} confirms this 
empirically: without the correction, Precision collapses to 
near zero beyond $N{\approx}1{,}000$ sequences as approximation 
errors accumulate~(Lem.~\ref{lem:mdl_deviation}), while $\tau^*_{uv}$ maintains Precision 
above \(80\%\) across the full range $n \in [10,\, 20\cdot 10^4]$.
\section{Conclusion}
Empirically, \textsc{seq2cause} is the first method to populate 
all four regimes of Tab.~\ref{tab:framework} with a single frozen 
backbone: it achieves \(91\%\) F1 on sample-level event graphs where 
the strongest baseline reaches \(69 \%\). It recovers MB algorithms at \(|\mathcal{X}|=29,100\) intractable while recovering population-level boundaries in minutes. The MDL aggregation enables bypassing the noise when stacking thousands of sequences and is the most striking superiority over baselines, confirming Lem.~\ref{lem:mdl_deviation}. 
Thms.~\ref{thm:cmi_error_bound}, \ref{thm:soudness_sample_level} 
explain why: minimizing next-token cross-entropy simultaneously 
bounds CMI error, guarantees soundness, and self-calibrates the 
population threshold through the single monitorable quantity 
$\epsilon$, so every advance in sequence modeling tightens 
causal identification guarantees without retraining.

\bibliography{neurips_2026}


\appendix

\section{On the Limitation of Multivariate Representation for Event Sequences}
We detail the limitation of representing discrete event sequences as multivariate time series in the high-dimensional regime. 

\subsection{Multivariate Hawkes Process Log-likelihood Derivation}\label{derivation_ll_hawke_mutual}

\begin{definition}[Mutually-Exciting Hawkes processes]
    Consider \(\mathbf{N}(t) = (N^1(t), \cdots, N^{|\mathcal{X}|}(t))\) as a collection of \(|\mathcal{X}|\) counting processes with \(N^k(t)\)'s occurrence times denoted as \(t^k_1, t^k_2\) etc. They are mutually-exciting Hawkes processes if \(N^k(t)\)'s conditional intensity follows:
    \[\lambda^*_k(t) = \lambda + \sum^{|\mathcal{X}|}_{j=1}\sum_{t^j_i < t} \mu_{j,k}(t-t^j_i) \; \text{for }\; k=1, \cdots, |\mathcal{X}|\]
    where \(\mu_{j,k}(s) \geq 0\)
\end{definition}

The likelihood for any point process parametrized by \(\theta\) with observations \(\{t_1, \cdots, t_L\}\) within a time horizon \(0 \leq t \leq T\) can be computed as the sum of the log-likelihood for each process~\cite{hawke_process_and_app_2025}: 

\begin{equation*}
    \ln \mathcal{L} (\theta|t_1, \cdots, t_L, T) = \sum^{|\mathcal{X}|}_{k=1} \ln \mathcal{L}_k (\theta|t_1, \cdots t_L)
\end{equation*}
where each term is defined by:
\begin{align}
    \ln{\mathcal{L}_k}(t_1, \cdots, t_L) &= \ln \left(\left[\prod^L_{i=1} \lambda^{*}_k(t_i)\right]\exp{(-\int^T_0 \lambda^{*}_k(t) dt)}\right) \notag\\
\end{align}
This expression reduces to:
\begin{align}
    &= \sum^L_{i=1}\ln{\lambda^{*}_k(t_i)} + \ln{\left(\exp{(-\int^T_0 \lambda^{*}_k(t) dt)}\right)} \notag\\
    &= \sum^L_{i=1}\ln{\lambda^{*}_k(t_i)} -\int^T_0 \lambda^{*}_k(t) dt\label{eq:mll_mutual_hawkes_per_event}
\end{align}
Finally, we have for the total log-likelihood:
\begin{equation}\label{eq:mll_mutual_hawkes_appendix}
     \ln \mathcal{L} (\theta|t_1, \cdots, t_L, T) = \sum^{|\mathcal{X} |}_{k=1}\sum^L_{i=1}\ln{\lambda^{*}_k(t_i)} - \sum^{|\mathcal{X}|}_{k=1}\int^T_0 \lambda^{*}_k(t)dt
\end{equation}

\subsection{Limitations}

\paragraph{Number of parameters.}
The \(N^j_t\) process has either an excitatory effect on \(N^k_t(\phi_{j,k}>0)\) or no effect on \(N^k_t(\phi_{j,k} = 0)\) for \(j = k \) or \(j \not= k\). Importantly, these processes have \(\mathcal{O}(|\mathcal{X}|^2)\) parameters to fit, which is often intractable in practice for many applications~\cite{hawke_process_and_app_2025}. For \(|\mathcal{X}|= 10^3\) the number of parameters is already \(1\) million. Moreover, the actual number of scalars being optimized in practice is more than \(|\mathcal{X}|^2\), since there is the base intensity \(\lambda\), the parameters of the excitation functions \(\mu\) (decay rates) and interaction magnitude. The literature often assumes \(|\mathcal{X}|^2+|\mathcal{X}|\)~\cite{hawke_process_and_app_2025}. 

\paragraph{Optimization.}
To understand why the Hawkes process is insufficient for our setting, we first show that the MLE of its parameters is computationally intractable at scale. We therefore seek the parameters (base rate $\alpha$, decay $\beta$) that make the observed sequence of events most probable. The likelihood for any point process parametrized by \(\theta\) with observations \(\{t_1, \cdots, t_L\}\) within a time horizon \(0 \leq t \leq T\) can be computed as the sum of the log-likelihood for each process: 
\begin{equation}\label{eq:mll_mutual_hawkes}
     \ln \mathcal{L} (\theta|t_1, \cdots, t_L, T) = \sum^{|\mathcal{X} |}_{k=1}\sum^L_{i=1}\ln{\lambda^{*}_k(t_i)} - \sum^{|\mathcal{X}|}_{k=1}\int^T_0 \lambda^{*}_k(t)dt
\end{equation}
Importantly, Eq.~\ref{eq:mll_mutual_hawkes} cannot be computed easily since the sum over all event types \(|\mathcal{X}| \) and the double summation over all pairs of event \(t_i\) and \(t_j\), leading to \(\mathcal{O}(|\mathcal{X}|^2\cdot N(T)^2)\) at worst and \(\mathcal{O}(|\mathcal{X}|^2\cdot N(T))\) if using exponential decay~\cite{hawke_process_and_app_2025}. This is a regime where classical MLE is computationally prohibitive and statistically prone to overfitting~\cite{hawke_sparse_mutual, sparsetemporalattention}.

\section{Assumption}\label{appendix:assumption}

\subsection{Definitions}\label{appendix:assumptions:def}
\begin{assumption}[Temporal Precedence]
\label{ass:temporal}
Given a recorded sequence with monotonically increasing 
timestamps $0 \leq t_1 \leq \cdots \leq t_L$, an event $x_t$ 
may influence any subsequent event $x_{t'}$ only if $t < t'$.
Edge orientation in $\mathcal{G}$ is therefore determined by 
time order.
\end{assumption}

\begin{assumption}[Causal Sufficiency]
\label{ass:sufficiency}
All causally relevant variables are observed. 
There are no latent confounders affecting the events or the outcomes.
\end{assumption}

\begin{definition}[$\epsilon$-Strong Faithfulness]\label{def:strong_faithfulness}
Let $\tau_{\epsilon_T} = 2\Phi(\sqrt{\epsilon_T/2})$ be the asymptotic approximation error bound of the model (Thm.~\ref{thm:cmi_error_bound}). A distribution $P$ is \textbf{$\epsilon$-Strong Faithful} to a causal graph $\mathcal{G}$ with respect to the estimator $P_\theta$ if, for every active edge $E_{t-\ell} \to T_t \; $ the \textbf{true} CMI satisfies:
\begin{equation}
I(T_t; E_{t-\ell} \mid X_{<t}) > 2\cdot \tau_{\epsilon_T}
\end{equation}
\end{definition}
\begin{example}[$\epsilon$-Strong Faithful CI-test at two marginal frequencies]
Fix $|\mathcal{X}| = 1{,}000$, $\epsilon = 0.05$ (a well-trained backbone,
cf.\ Table~\ref{fig:main_results}), and consider a candidate cause 
$E_{t-\ell}$ whose target $T_t = E_{t+1}$ has marginal frequency
$p_v = \mathbb{P}(X_{t+1} = v)$.
By Prop.~\ref{prop:binary_projection}, the binary approximation error
for target $T_t$ satisfies $\epsilon_T \approx \epsilon \cdot p_v \cdot |\mathcal{X}|^{-1} \cdot |\mathcal{X}| = \epsilon \cdot p_v$;
more precisely, for the two regimes below:

\smallskip
\noindent\textbf{Typical event} ($p_v = 1/|\mathcal{X}| = 10^{-3}$).
Prop.~\ref{prop:binary_projection} gives $\epsilon_T \approx \epsilon/|\mathcal{X}| = 5\times 10^{-5}$.
The noise floor evaluates to
\begin{equation}
    \tau_{\epsilon_T}
    = 2\Phi\!\left(\sqrt{\epsilon_T/2}\right)
 = 0.053 \; \text{nats},
\end{equation}
Strong faithfulness (Def.~\ref{def:strong_faithfulness}) then requires
any true causal association to satisfy 
$I(T_t; E_{t-\ell}\mid X_{<t}) > 2\tau_{\epsilon_T} \approx 0.01\ \text{nats}$.
\smallskip
Two consequences follow directly.
First, the noise floor \emph{decreases} as $p_v$ decreases: rarer events 
admit a tighter decision boundary, because the binary projection of a 
low-probability target retains only a small slice of the total 
approximation error $\epsilon$.
Second, both floors vanish as $\epsilon \to 0$, confirming that an 
exact oracle recovers every causal edge regardless of event frequency.
This counterintuitive scaling — larger vocabularies yield lower per-test 
noise floors.
\end{example}

\subsection{Limitations}\label{appendix:limitation}
We now include a discussion regarding the main assumptions taken in this paper and the one that we exclude.

\subsubsection{Causal Sufficiency}\label{sec:limitation_causal_sufficency}
A fundamental assumption in causal discovery is \textit{Causal Sufficiency} (Assumption~\ref{ass:sufficiency})—the premise that no unobserved confounders influence the system. Since \textsc{seq2cause} relies on pre-trained backbones which may have learned from noisy or incomplete data, we empirically evaluate its robustness under controlled violations of causal sufficiency, focusing on two realistic forms of hidden confounding.

\paragraph{Measurement Error (Noise Injection).} In Fig. \ref{fig:limitations}(a), we simulate measurement error by randomly replacing valid tokens in the history with noise ($P_{noise}$). While Recall naturally degrades as the true causal parents are obscured, \textbf{Precision remains high} ($>0.8$) even when 40\% of the context is corrupted. This confirms that \textsc{seq2cause} does not "rationalize" noise; if the causal signal $X \to Y$ is destroyed by measurement error, the model assigns $CMI \approx 0$ rather than hallucinating a spurious link.

\paragraph{Missing Intermediaries (Temporal Drops).} In the same Fig. \ref{fig:limitations}(b), we simulate missing data by randomly dropping time steps, effectively hiding intermediate nodes in the causal chain ($X \to Z_{hidden} \to Y$). This is a more critical scenario where the conditioning sets are broken. We observe that \textsc{seq2cause} is robust to moderate data loss ($P_{drop} < 0.2$). 
\begin{figure}[!h]
    \centering
    \includegraphics[width=0.95\linewidth]{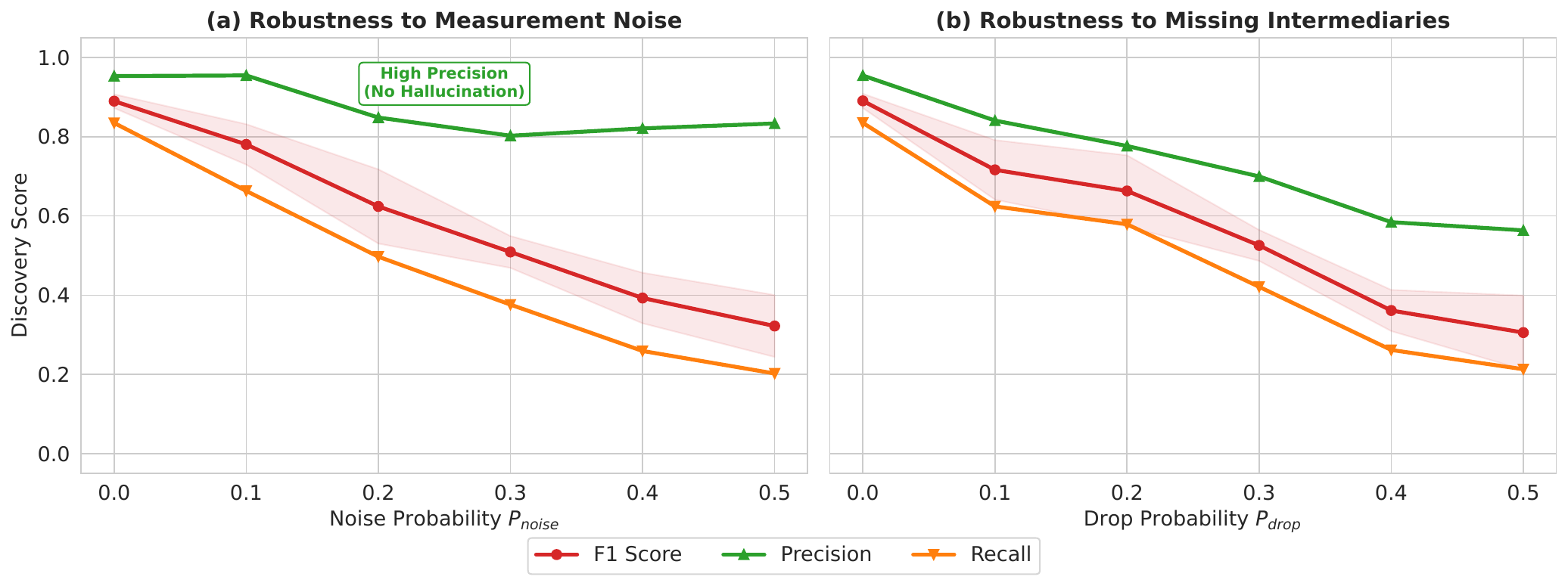}
\caption{\textbf{Robustness to Hidden Confounding.} Evaluation of \textsc{seq2cause} under violations of causal sufficiency. \textbf{(a)} Measurement Error: Random noise is injected into the context. Precision stays high, indicating resistance to hallucination. \textbf{(b)} Temporal Drops: Time steps are randomly deleted, thus conditioning sets are broken. \textsc{seq2cause} still recovers the correct sample-summary causal graph structure despite missing intermediaries but the discovery scores quickly decrease.}
\label{fig:limitations}
\end{figure}
\subsubsection{Temporal Precedence \& Instantaneous Effects}
We assumes that causal influence respects temporal precedence, which means observation are perfectly recorded over time and therefore does not model instantaneous causal effects between events occurring at the same time index. This assumption is standard in sequential causal discovery and ensures that the recovered causal graph is acyclic and identifiable in the single observed sequence setting.

From a theoretical standpoint, instantaneous effects are not identifiable from a single observed trajectory without additional parametric assumptions, repeated samples, or access to interventions. In practice, apparent simultaneity often \emph{arises from time discretization}, logging resolution, or batching effects. \textsc{seq2cause} interprets such cases \emph{through the earliest observable temporal ordering}, yielding a conservative but identifiable causal structure. As a result, the recovered summary graph captures directed causal influence with positive temporal delay, rather than true simultaneity.

\subsubsection{Consistency Through Time}
We assume \emph{consistency through time}: the causal mechanisms governing the generation of events are invariant across time indices. In other words, causal directionality does not reverse over time. For instance, if \(A \rightarrow B\), then latter it is assume that \(B \not\rightarrow A\)

This assumption is strictly weaker than stationarity~\cite{assaad_survey_ijcai_cd_time_series}. While the marginal distribution of $\{X_t\}$ may vary over time, the underlying causal dependencies—encoded by the directed edges of the instance time causal graph remain stable. This form of causal invariance is standard in sequential settings and underlies the validity of summary causal graphs that collapse time-indexed relations into event-to-event dependencies.

\subsubsection{No Inter-Label Effects}
We assume $Y_j \perp Y_k \mid X_{<L}$ for all $j \neq k$:
labels do not cause one another, they are jointly caused by 
the event history.
This is not a restrictive assumption in our setting — it is 
structurally implied by the two-chain DGP (Eq.~2).
Since outcomes are realized at the end of the sequence and 
the event chain is autonomous (no $Y \to X$ edges exist), 
any apparent statistical dependence between $Y_j$ and $Y_k$ 
is fully mediated by their shared event causes in $X_{<L}$.
Conditioning on $X_{<L}$ therefore blocks every path between 
labels, making inter-label CI-tests redundant rather than 
informative.
This assumption is standard in multi-label causal discovery 
where the label space is defined exogenously~\citep{Wu_Jiang_Yu_Chen_Miao_2020, 
learningcommoncausalvarlabel}, and matches the real-world 
structure of our target domains: a vehicle failure $Y_j$ 
and a separate failure $Y_k$ are both downstream of the 
same diagnostic event stream, not causally related to each 
other.

Nevertheless, we can consider for the \(\mathcal{X} \to \mathcal{Y}\) a second light phase to check inter-label dependencies. Let's consider the Markov Boundaries \(MB_1 =  [X_1, X_3], MB_2 = [X1, X_2]\) of two labels \(Y_1, Y_2\). We propose to investigate a 'Phase 2', focusing on inter-label dependencies through simulated interventions. For instance, if we consider a sequence \(S_1\) of two labels \(Y_1, Y_2\) with the MB above, we can perform counterfactual interventions by applying \(do(X_1=0), do(X_3=0)\) to \(S_1\) (under A\ref{ass:sufficiency}). Then we would observe the average change in the likelihood of \(Y_1\)  which if it is non-zero, would indicate a dependence between \(Y_1\) and \(Y_2\).

\subsubsection{Not-required for \textsc{seq2cause}}
We now list the notable assumption that we \textbf{don't take} in this paper. 

\paragraph{Stationarity}
The stationarity assumption states that the generative process \( \{X_t\}\) does not change with respect to time. Stationarity of the underlying process is a common simplifying assumption in time series causal discovery~\cite{assaad_survey_ijcai_cd_time_series}, but causal structure itself—defined in terms of temporal precedence and directed edges—is a property of the generative mechanisms and does not by itself imply stationarity of the observed sequence.

Importantly, \textsc{seq2cause} does not require stationarity of the observed sequence. In practice, if using autoregressive Transformers as the AR Model, they can represent non-stationary distributions through contextualized representations (e.g., positional or time embeddings~\cite{transformerhawkeprocess}), allowing the model to adapt its predictions to evolving regimes without assuming time-invariant marginals. This enables modeling evolving regimes common in real-world logs or patients trajectories.

\paragraph{Parametric Assumption}
No parametric form is assumed for the transition dynamics beyond the expressivity of the autoregressive model

\subsubsection{Broader Impact Limitations}\label{app:broader_impact_limitation}
\paragraph{Risks under degraded models and imbalanced data.}
\textsc{seq2cause} inherits the biases of its autoregressive backbone: if the AR model is trained on imbalanced event data---where certain event types are overrepresented due to systematic logging biases, sensor placement, or population skew---the estimated conditionals $P_\theta(X_t \mid X_{<t})$ will be poorly calibrated, inflating $\epsilon$ non-uniformly across the vocabulary. By Thm.~\ref{thm:cmi_error_bound}, this inflates the CMI error bound precisely for the under-represented pairs, causing the framework to systematically miss causal edges involving minority event types while confidently recovering edges among frequent ones. In safety-critical deployments (e.g., vehicle diagnostics, clinical pathways), this failure mode is insidious: the returned causal graph appears high-precision but silently omits rare-but-critical failure chains. Practitioners should monitor the excess cross-entropy loss and treat any causal graph produced with $\epsilon > 0.1$ as unreliable (Fig.~\ref{fig:robustness_and_scale}(a). More broadly, deploying \textsc{seq2cause} on biased event logs without human oversight risks reinforcing existing system biases---for instance, concluding that a fault code is non-causal simply because it was underlogged in the training fleet.

\section{Proofs}\label{app:proofs}

\subsection{Proof of Proposition~\ref{prop:consistency_cmi} (CMI Estimator Consistency)}

\begin{proposition}[Convergence to the $\epsilon$-Proxy]\label{prop:consistency_cmi}
The estimator $\hat{I}_N$ is a consistent estimator of the \(\epsilon\)-oracle induced CMI denoted as $I_\theta$. By the Strong Law of Large Numbers, as $N \to \infty$:
$$\hat{I}_N \xrightarrow[N \to +\infty]{\text{a.s.}} I_\theta(E_{t+1}; E_t \mid X_{<t})$$
\end{proposition}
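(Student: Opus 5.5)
The plan is to write $\hat{I}_N$ as an empirical mean of i.i.d.\ integrable random variables whose common expectation is exactly the $\epsilon$-oracle CMI, and then invoke Kolmogorov's Strong Law of Large Numbers.

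\textbf{Step 1: the summand.} For a fixed position $t$ and a fixed binary target $T_{t+1}$ (here $E_{t+1}$), define
\[
g(x_{<t}) \;\triangleq\; \mathbb{E}_{e_t \sim P_\theta(\cdot \mid x_{<t})}\!\left[ D_{\mathrm{KL}}\!\bigl(P_\theta(T_{t+1}\mid e_t, x_{<t}) \,\|\, P_\theta(T_{t+1}\mid x_{<t})\bigr)\right].
\]
Then Eq.~\eqref{eq:cmi_estimator} reads $\hat{I}_N = \frac{1}{N}\sum_{l=1}^N g(x^{(l)}_{<t})$. The particles $x^{(l)}_{<t}$ are drawn i.i.d.\ from $P_{\theta_x}(X_{<t})$ by ancestral sampling, so the $g(x^{(l)}_{<t})$ are i.i.d.

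\textbf{Step 2: identification of the mean.} Apply Eq.~\eqref{eq:cmi_unified} with every distribution replaced by the model law $P_\theta$ on $(\Omega,\mathcal{F})$. This law is well defined via the chain rule $\prod_t P_\theta(X_t\mid X_{<t})$. The identity ``CMI equals the expected information gain'' holds for any joint law, so
\[
\mathbb{E}_{X_{<t}\sim P_\theta}[g(X_{<t})] = H_\theta(T_{t+1}\mid X_{<t}) - H_\theta(T_{t+1}\mid E_t, X_{<t}) = I_\theta(E_{t+1};E_t\mid X_{<t}).
\]
One consistency check is needed here: the conditional $P_\theta(T_{t+1}\mid x_{<t})$ queried by the model must equal the mixture $\sum_{e_t} P_\theta(e_t\mid x_{<t})\,P_\theta(T_{t+1}\mid e_t,x_{<t})$. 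For the event head this holds automatically, because both quantities come from the same autoregressive joint law. For the outcome head, the statement should be read with $I_\theta$ defined through the same queried conditionals.

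\textbf{Step 3: integrability, which I expect to be the only real obstacle.} The SLLN requires $\mathbb{E}|g| < \infty$. Since $T_{t+1}$ is binary, the following bounds hold:
\[
0 \le g(x_{<t}) \le H_\theta(T_{t+1}\mid x_{<t}) \le \ln 2.
\]
The first inequality is nonnegativity of KL. The second follows because the expected KL between conditional and marginal equals a pointwise conditional mutual information, which is at most the conditional entropy of the binary target. To make this pointwise bound valid, I would again rely on the mixture consistency from Step 2, which guarantees that the KL divergences are finite. Softmax and sigmoid outputs lie strictly inside $(0,1)$, so no infinite KL can arise from zero probabilities in any case. Hence $g$ is bounded, and therefore integrable.

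\textbf{Conclusion.} By Kolmogorov's SLLN, $\hat{I}_N \to I_\theta(E_{t+1};E_t\mid X_{<t})$ almost surely as $N\to\infty$. Because $g$ is bounded, Hoeffding's inequality additionally gives the rate $|\hat{I}_N - I_\theta| = O_P(N^{-1/2})$ with constant $\ln 2$. This is worth recording for the finite-$N$ discussion following Theorem~\ref{thm:cmi_error_bound}. If the inner expectation over $e_t$ is instead approximated by sampling, the same argument applies to the joint samples $(x^{(l)}_{<t}, e_t^{(l)})$.
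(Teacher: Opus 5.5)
Your proposal is correct and follows essentially the same route as the paper: both bound the per-particle term $\mathbb{E}_{e_t\sim P_\theta}[I_G]$ in $[0,\ln 2]$, using its representation as a difference of conditional entropies under $P_\theta$. Both then invoke the Strong Law of Large Numbers, with the limit being the expected information gain under the model law. Your explicit mixture-consistency check (the no-cause conditional must be the $P_\theta$-mixture of the with-cause conditionals, which the entropy-difference identity needs) and your use of integrability instead of finite variance are small refinements of points the paper leaves implicit, since it simply defines $I_\theta$ as that limiting expectation.
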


\begin{proof}\label{proof:consistency_estimator_cmi}
The particles $x^{(l)}_{<t}$ are sampled directly from the model $f_\theta$.
Let $f_\theta(x^{(l)}_{<t})$ represents the estimation CMI for a fixed history \(x_{<t}\) and \(I_\theta\) the CMI with the approximated distribution \(P_\theta\). Expressing this as a difference of conditional entropies: 
\begin{equation}\label{eq:info_gain_bounded}
\begin{aligned}
0 
&\leq \mathbb{E}_{e_t \sim P_\theta} I_G\!\left(E_{t+1}, e_t| x^{(l)}_{<t}\right) \\
&= H_\theta\!\left(E_{t+1} | x^{(l)}_{<t}\right)
   - H_\theta\!\left(E_{t+1}| E_t, x^{(l)}_{<t}\right) \\
&\leq H_\theta\!\left(E_{t+1}\right)
\leq \log{2}.
\end{aligned}
\end{equation}
Thus the posterior variance of \(f_\theta(x^{(l)}_{<t})\) satisfies \(\sigma^2_{f} \triangleq \mathbb{E}_{x_{<t}}[f_\theta^2(x_{<t})] - I_\theta^2(f_\theta) < +\infty\) \cite{Doucet2001} then the variance of \(\hat{I}_N(f)\) is equal \(\frac{\sigma^2_{f}}{N}\) and from the strong law of large numbers: 
\begin{align}
\hat{I}_N &\xrightarrow[N \to +\infty]{\text{a.s.}} 
\mathbb{E}_{e_t \sim P_\theta, x_{<t} \sim P_\theta}\!\left[ I_G(E_{t+1}, e_t \mid x_{<t}) \right] 
\triangleq I_\theta(f_\theta)
\end{align}
\end{proof}

arx\subsection{Proof of Proposition \ref{prop:binary_projection} (Binary Projection of \(\epsilon\))}
\begin{proposition}[Binary Projection Inequality]\label{prop:binary_projection}
Let $P, Q$ be distributions over $\mathcal{X}$ with $D_{\mathrm{KL}}(P \| Q) = \epsilon$. For any event type $v \in \mathcal{X}$, let $p_v = P(X{=}v)$, $q_v = Q(X{=}v)$, and $T = \mathbf{1}[X{=}v]$. Then the model approximation error \(\epsilon_T\) for the binary target \(T\) is:
\begin{equation}\label{eq:exact_decomp}
    \epsilon_T \;=\; D_{\mathrm{KL}}\!\big(\mathrm{Bern}(p_v) \;\big\|\; \mathrm{Bern}(q_v)\big) \;=\; \epsilon \;-\; (1 - p_v) \cdot D_{\mathrm{KL}}\!\big(P_{\neg v} \;\big\|\; Q_{\neg v}\big),
    \end{equation}
    where $P_{\neg v}(u) := p_u\,/\,(1 - p_v)$ for $u \neq v$ is the conditional distribution over $\mathcal{X} \setminus \{v\}$, and similarly for $Q_{\neg v}$.
\end{proposition}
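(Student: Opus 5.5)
This is the chain rule for relative entropy applied to the coarsening $X \mapsto T = \mathbf{1}[X{=}v]$. The plan is to split $D_{\mathrm{KL}}(P\|Q)$ into a term for the binary variable $T$ and a term for $X$ given $T$, and then observe that the branch $T{=}1$ contributes nothing because $X$ is then deterministic.

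\textbf{Step 1: factor both distributions through $T$.} Since $T$ is a deterministic function of $X$, for every $u \in \mathcal{X}$ we can write $p_u = P(T{=}t_u)\,P(X{=}u \mid T{=}t_u)$, where $t_u = \mathbf{1}[u{=}v]$, and do the same for $Q$. Concretely:
\begin{itemize}
\item the marginal of $T$ is $\mathrm{Bern}(p_v)$ under $P$ and $\mathrm{Bern}(q_v)$ under $Q$;
\item conditional on $T{=}1$, both laws are the point mass at $v$;
\item conditional on $T{=}0$, they are exactly $P_{\neg v}$ and $Q_{\neg v}$ as defined in the statement.
\end{itemize}

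\textbf{Step 2: expand the divergence.} Substitute this factorization into
\[
D_{\mathrm{KL}}(P\|Q)=\sum_u p_u \log\frac{p_u}{q_u}.
\]
Split the logarithm of each ratio into two pieces: the ratio of the $T$-marginals, and the ratio of the conditionals given $T$. Then group the terms by the value of $T$. This gives
\[
\epsilon \;=\; D_{\mathrm{KL}}\big(\mathrm{Bern}(p_v)\,\|\,\mathrm{Bern}(q_v)\big) \;+\; p_v\cdot 0 \;+\; (1-p_v)\, D_{\mathrm{KL}}\big(P_{\neg v}\,\|\,Q_{\neg v}\big).
\]
The middle term vanishes because the two conditionals given $T{=}1$ coincide. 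Rearranging yields the stated identity, and nonnegativity of the last term immediately gives $\epsilon_T \le \epsilon$.

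\textbf{Main obstacle: degenerate cases.} The only delicate point is handling edge cases where the ratios above are undefined, using the usual conventions $0\log 0 = 0$ and $\epsilon<\infty$.
\begin{itemize}
\item \emph{If $p_v = 1$:} $P_{\neg v}$ is undefined, but its coefficient $(1-p_v)$ is zero, so the term is taken as zero and the identity still holds.
\item \emph{If $q_v = 1$ while $p_v < 1$:} absolute continuity fails and $\epsilon = \infty$, which the finiteness hypothesis excludes.
\item \emph{Support condition:} finiteness of $\epsilon$ forces $\mathrm{supp}(P) \subseteq \mathrm{supp}(Q)$. This makes $P_{\neg v} \ll Q_{\neg v}$, so every term in the decomposition is finite and the rearrangement in Step 2 is legitimate.
\end{itemize}
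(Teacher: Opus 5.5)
Your proposal is correct and follows the paper's proof essentially verbatim: both treat $X$ as a two-stage draw through the indicator $\mathbf{1}[X{=}v]$, apply the chain rule for relative entropy, observe that the $T{=}1$ branch contributes zero because both conditionals are $\delta_v$, and rearrange. Your explicit handling of the degenerate cases ($p_v=1$, and finiteness of $\epsilon$ forcing $P_{\neg v}\ll Q_{\neg v}$) is a small addition that the paper leaves implicit.
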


\begin{proof}
We apply the chain rule for relative entropy (\cite{cover1999elements}, Thm.~2.5.3). Any distribution over $\mathcal{X}$ can be viewed as a two-stage draw: first sample the binary indicator $B = \mathbf{1}[X{=}v] \sim \mathrm{Bern}(p_v)$, then, if $B = 0$, draw from the conditional $P_{\neg v}$ over the remaining $|\mathcal{X}| - 1$ types; if $B = 1$, output $v$ deterministically. The chain rule gives:
\begin{align}\label{eq:chain_rule_app}
D_{\mathrm{KL}}(P \| Q) = D_{\mathrm{KL}}\!\big(\mathrm{Bern}(p_v) \| \mathrm{Bern}(q_v)\big) + p_v \cdot D_{\mathrm{KL}}\!\big(P(\cdot \mid B{=}1) \;\|\; Q(\cdot \mid B{=}1)\big) + \\ (1 - p_v) \cdot D_{\mathrm{KL}}\!\big(P_{\neg v} \;\|\; Q_{\neg v}\big).
\end{align}

Conditioned on $B = 1$, both distributions place all mass on $v$, so $D_{\mathrm{KL}}(P(\cdot \mid B{=}1) \| Q(\cdot \mid B{=}1)) = D_{\mathrm{KL}}(\delta_v \| \delta_v) = 0$. Substituting $D_{\mathrm{KL}}(P \| Q) = \epsilon$ and rearranging yields Eq.~(\ref{eq:exact_decomp}).
 
Since $D_{\mathrm{KL}}(P_{\neg v} \| Q_{\neg v}) \geq 0$, the immediate consequence is $\epsilon_T \leq \epsilon$. The gap $\epsilon - \epsilon_T = (1 - p_v) \cdot D_{\mathrm{KL}}(P_{\neg v} \| Q_{\neg v})$ quantifies the fraction of the total categorical error that is \emph{internal} to $\mathcal{X} \setminus \{v\}$ and irrelevant to the binary CI test for target $v$.
 
\end{proof}
 
\subsection{Proof of Thm.~\ref{thm:cmi_error_bound} (CMI Bounded by Model Approximation Error)}

\begin{proof}
By definition, the Conditional Mutual Information is the difference of two conditional entropies~\cite{cover1999elements}:

    $$I(T_t; E_{t-\ell} | X_{<t}) = H(T_t | X_{<t}) - H(T_t | E_{t-\ell}, X_{<t})$$
    With \(T_t \in \{E_t\} \cup \{Y_1, \dots, Y_c\}\) is the target (effect) and \(E_{t-\ell}\) is the cause.
    Our framework operates in the Teacher Forcing regime. We do not sample the history $X_{<t}$ from the model's joint distribution. Instead, we estimate the CMI conditioned on the observed history $x_{<t}^{(l)}$. Consequently, the relevant error metric is the per-step conditional divergence at time $t$, given the fixed history. 

    Let $\Delta = |I - I_\theta|$ be the CMI estimation error. By the triangle inequality:

    $$\Delta \leq \underbrace{|H_P(T_t \mid X_{<t}) - H_\theta(T_t \mid X_{<t})|}_{\text{Term A}} + \underbrace{|H_P(T_t \mid E_{t-\ell}, X_{<t}) - H_\theta(T_t \mid E_{t-\ell}, X_{<t})|}_{\text{Term B}}$$

    We apply the sharp continuity bound for conditional entropy in classical systems (\cite{Winter_2016}, Lem. 2) with a small total variation distance \(\text{TV}(P, P_\theta))  \leq 1/2\). Let $\rho$ and $\sigma$ be the true and model distributions respectively. Let $\delta = |(P(E_t|\cdot) - P_\theta(E_t|\cdot)|_1$ be the total variation distance. Since the target variable $E_t$ is binary (\(d_A = 2\)), the bound is: $$|H_\rho - H_\sigma| \leq \delta \ln(d_A) + (1+\delta)h_b\left(\frac{\delta}{1+\delta}\right)$$

    Substituting $d_A=2$ (so $\ln 2 $ in nats since we are using cross entropy loss in PyTorch~\cite{pytorch}):$$|H_P(T_t \mid X_{<t}) - H_\theta(T_t \mid X_{<t})| \leq \delta \ln(2) + (1+\delta)h_b\left(\frac{\delta}{1+\delta}\right)$$

    Term B represents the same entropy difference conditioned on an augmented set $\{E_{t-\ell}, X_{<t}\}$. Since the target dimension $d_A$ remains 2, the same bound applies. Summing the terms:
    $$ |I - I_\theta| \leq 2\delta \ln(2) + 2(1+\delta)h_b\left(\frac{\delta}{1+\delta}\right)$$


    
    Substituting $\delta = \sqrt{\epsilon/2}$ from Pinsker's inequality, and $h_b(x)$ monotonically increasing for small $x$, we obtain the final bound in terms of the oracle score $\epsilon$ as:
    
\begin{equation}
| I - I_\theta  | \leq 2 \sqrt{\epsilon/2} \ln(2) + 2(1+\sqrt{\epsilon/2} )h_b\left(\frac{\sqrt{\epsilon/2} }{1+\sqrt{\epsilon/2} }\right)
\end{equation}    

Finally, we decompose the total error into estimation variance and approximation bias using the triangle inequality:
\begin{equation}
    | \hat{I}_N - I | = | (\hat{I}_N - I_\theta) + (I_\theta - I) | \leq \underbrace{| \hat{I}_N - I_\theta |}_{\text{Estimation Error}} + \underbrace{| I_\theta - I |}_{\text{Approximation Bias}}
\end{equation}
Given that $\hat{I}_N$ is a consistent estimator of the model's internal CMI, $I_\theta$ (Prop.~\ref{prop:consistency_cmi}). By the Strong Law of Large Numbers, $\hat{I}_N \xrightarrow{a.s.} I_\theta$ as $N \to \infty$ the stochastic estimation error vanishes, leaving only the irreducible approximation bias:
$$ \limsup_{N \to \infty} | \hat{I}_N - I | \leq 0 + |I - I_\theta|$$

We thus obtain the final bound in terms of the oracle score as:

$$ \limsup_{N \to \infty} | \hat{I}_N - I | \leq 2 \sqrt{\epsilon/2} \ln(2) + 2(1+\sqrt{\epsilon/2} )h_b\left(\frac{\sqrt{\epsilon/2} }{1+\sqrt{\epsilon/2} }\right)$$

 Using the model approximation error \(\epsilon_T\) for the binary target \(T\) which is lower than the global model approximation error \(\epsilon\), using Prop. \ref{prop:binary_projection} we have: 

\begin{equation}
\limsup_{N \to \infty} \big|\hat{I}_N - I\big| \;\leq\; 2\,\Phi\!\Big(\!\sqrt{\epsilon_T\,/\,2}\Big) \;\leq\; 2\,\Phi\!\Big(\!\sqrt{\epsilon/2}\Big),
\end{equation} 
where $\Phi(\delta) := \delta \ln 2 + (1{+}\delta)\,h_b\!\big(\tfrac{\delta}{1+\delta}\big)$, $h_b$ is the binary entropy.

    
This confirms that minimizing the cross-entropy loss ($\epsilon$) directly minimizes the upper bound on structural causal error using the CMI as causal strength.
\end{proof}

\subsection{Proof of Thm.~\ref{thm:soudness_sample_level} (Soundness of the Sample-Level)}
\begin{proof}
We proceed by induction on the time index $t \in \{1, \dots, L\}$, 
exploiting temporal precedence~(A\ref{ass:temporal}) 
to reduce the parent search at each step to the strict past 
$\{E_{t-\ell}\}_{\ell \geq 1}$.
The goal is to show that the CMI-based inclusion rule recovers 
$\widehat{\mathrm{Pa}}(T_t) = \mathrm{Pa}_{\mathcal{G}^s}(T_t)$ 
for every target $T_t \in \mathcal{T}_t 
= \{E_{t+1}\} \cup \{Y_1, \dots, Y_c\}$, 
and hence the complete joint graph 
$\mathcal{G}^s = \mathcal{G}^s_{\mathcal{XX}} \cup \mathcal{G}^s_{\mathcal{XY}}$ (Fig.~\ref{fig:sample_level_graph}).

\paragraph{Base case ($t = 1$).}
Consider any target $T_1 \in \mathcal{T}_1$.
By temporal precedence~(A\ref{ass:temporal}), 
no event precedes $t=1$; the candidate set 
$\{E_{1-\ell}\}_{\ell \geq 1}$ is empty.
\textsc{seq2cause} therefore returns 
$\widehat{\mathrm{Pa}}(T_1) = \emptyset = \mathrm{Pa}_{\mathcal{G}^s}(T_1)$.

\paragraph{Inductive step.}
Fix $t \geq 2$ and assume that for all $j < t$ the parent sets have been correctly identified. Let $T_t \in \mathcal{T}_t$ be an arbitrary target and let 
$E_{t-\ell}$, $\ell \in \{1,\dots,t-1\}$, be a candidate cause.

\emph{For $T_t = E_{t+1}$ with $\ell > 1$}, the CMI is evaluated after do-intervening on the intermediate events (Def.~\ref{def:do_in_sequences_expected}); this blocks the front-door \(X_{t-\ell+1:t-1}\) by randomizing the mediators. Using the lagged information gain \(I^L_G\) (Eq.~\ref{eq:lagged_info_gain}), the CMI is replaced and we condition on \(X_{<t-\ell}\) while controlling for the mediators. The bound of Eq.~\ref{eq:cmi_bound} applies.

\emph{For $T_t = Y_j$ or $\ell = 1$}, the standard CMI of 
Eq.~\eqref{eq:ci_test_unified} applies without modification, no intervention needed.
In both cases the inclusion rule is identical and the two structural cases below apply.

\textbf{$E_{t-\ell}$ is a true parent 
($E_{t-\ell} \in \mathrm{Pa}_{\mathcal{G}^s}(T_t)$).}
By $\epsilon$-strong faithfulness 
(Def.~\ref{def:strong_faithfulness}), every true causal 
association satisfies
\begin{equation}
    I\!\left(T_t;\, E_{t-\ell} \mid X_{<t}\right)
    > 4\Phi\!\left(\sqrt{\epsilon_T/2}\right).
\end{equation}
By Thm.~\ref{thm:cmi_error_bound}, the Monte Carlo estimator 
satisfies $\limsup_{N \to \infty} \left|\hat{I}_N - I\right|
\leq 2\Phi\!\left(\sqrt{\epsilon_T/2}\right)$,
so asymptotically $\hat{I}_N > 2\Phi\!\left(\sqrt{\epsilon_T/2}\right)$
and the edge is \emph{accepted}.

\textbf{$E_{t-\ell}$ is not a parent 
($E_{t-\ell} \notin \mathrm{Pa}_{\mathcal{G}^s}(T_t)$).}
By the two-chain factorization~(Eq.~\ref{eq:joint_dgp}) 
and causal sufficiency~(A\ref{ass:sufficiency}), 
conditioning on the full history $X_{<t}$ blocks every active path 
from $E_{t-\ell}$ to $T_t$ via the Causal Markov Condition.
Hence the true CMI vanishes:
\begin{equation}
    I\!\left(T_t;\, E_{t-\ell} \mid X_{<t}\right) = 0.
\end{equation}
By Remark~\ref{remark:full_history_conditioning}, 
full-history conditioning also closes any backdoor path that would 
remain open under the coarsened binary history $E_{<t}$.
Thm.~\ref{thm:cmi_error_bound} then gives
$\hat{I}_N \leq 2\Phi\!\left(\sqrt{\epsilon_T/2}\right)$
asymptotically, so the edge is \emph{rejected}.

Since the accept/reject decision is correct for every candidate 
$E_{t-\ell}$, $\ell = 1, \dots, t-1$, independently of whether 
$T_t = E_{t+1}$ (the $\mathcal{X} \to \mathcal{X}$ regime) 
or $T_t = Y_j$ (the $\mathcal{X} \to \mathcal{Y}$ regime),
we obtain $\widehat{\mathrm{Pa}}(T_t) = \mathrm{Pa}_{\mathcal{G}^s}(T_t)$.

By induction, $\widehat{\mathrm{Pa}}(T_t) = \mathrm{Pa}_{\mathcal{G}^s}(T_t)$ 
holds for all $t = 1, \dots, L$ and all $T_t \in \mathcal{T}_t$.
The sample-time causal graph $\mathcal{G}^s_{\mathcal{XX}}$ is the union of 
the intra-chain parent sets; the sample Markov boundary graph 
$\mathcal{G}^s_{\mathcal{XY}}$ is the union of the cross-chain parent sets.
Their union $\mathcal{G}^s = \mathcal{G}^s_{\mathcal{XX}} \cup \mathcal{G}^s_{\mathcal{XY}}$ 
is therefore recovered. 
\textit{Note:} As 
$\epsilon \to 0$ since $\Phi\!\left(\sqrt{\epsilon_T/2}\right) \to 0$, under standard faithfulness, we recover the joint graph for perfect density estimation.
\end{proof}

\subsection{Proof of Lem.~\ref{lem:mdl_deviation} 
(Data Encoding Cost Deviation under the \(\epsilon\)-Oracle)}
\label{proof:threshold_mdl}

\begin{proof}
Fix a candidate edge \(u \to v\) and a matched position 
\((k,t) \in \mathcal{M}_{uv}\). Let \(Z_+^{(k,t)}\) and \(Z_-^{(k,t)}\) denote 
the conditioning sets with and without the candidate cause 
\(E^{(k)}_{t-\ell} = u\) (randomized mediators for \(\mathcal{X}\to\mathcal{X}\); 
trivial for \(\mathcal{X}\to\mathcal{Y}\) by the two-chain DGP). To lighten 
notation, write
\[
p_\pm \;=\; P(T^{(k)}_t = t_{t,v}^{(k)} \mid Z_\pm^{(k,t)}), 
\qquad 
q_\pm \;=\; P_\theta(T^{(k)}_t = t_{t,v}^{(k)} \mid Z_\pm^{(k,t)}),
\]
so \(\ell_t^{(k)} = \log p_+ - \log p_-\) and \(\ell_t^{(k),\theta} = \log q_+ - \log q_-\).

\medskip
By the triangle inequality,
\begin{equation}\label{eq:decomp_lemma1}
\bigl|\ell_t^{(k),\theta} - \ell_t^{(k)}\bigr|
\;\leq\; |\log q_+ - \log p_+| \;+\; |\log q_- - \log p_-|.
\end{equation}
Each summand is handled identically; we bound the generic term 
\(|\log q - \log p|\) for \(p, q \in (0,1)\).

\medskip
For any \(a, b \in (0, 1)\), 
\(|\log a - \log b| \leq |a-b|/\min(a, b)\) 
\cite{cover1999elements}. Hence
\begin{equation}\label{eq:logratio_step}
|\log q - \log p| \;\leq\; \frac{|q - p|}{\min(p, q)}.
\end{equation}

\medskip
The binary target \(T_v\) is a deterministic function of the full categorical 
outcome \(X_t\), so by the data processing inequality for KL divergence,
\[
D_{\mathrm{KL}}\bigl(\mathrm{Bern}(p) \,\|\, \mathrm{Bern}(q)\bigr) 
\;\leq\; D_{\mathrm{KL}}\bigl(P(\cdot \mid Z) \,\|\, P_\theta(\cdot \mid Z)\bigr) 
\;\leq\; \epsilon_T^{(k,t)} \;\leq\; \epsilon,
\]
where \(\epsilon_T^{(k,t)}\) is the local binary approximation error 
(Prop.~\ref{prop:binary_projection}). Pinsker's inequality then gives
\begin{equation}\label{eq:pinsker_step}
|q - p| \;\leq\; \mathrm{TV}\bigl(\mathrm{Bern}(p), \mathrm{Bern}(q)\bigr) 
\;\leq\; \sqrt{\epsilon_T^{(k,t)} / 2}.
\end{equation}

\medskip
The denominator \(\min(p, q)\) involves \(p\), which we cannot access. However, 
\(q\) is computed directly from the frozen AR model, and combining 
\eqref{eq:pinsker_step} with the triangle inequality yields
\begin{equation}\label{eq:p_lower}
p \;\geq\; q - |q - p| \;\geq\; q - \sqrt{\epsilon_T^{(k,t)}/2}.
\end{equation}
Two cases arise.
\emph{Case A (\(q > \sqrt{\epsilon_T^{(k,t)}/2}\))}: 
\eqref{eq:p_lower} certifies \(p > 0\), and
\begin{equation}\label{eq:min_bound_caseA}
\min(p, q) \;\geq\; q - \sqrt{\epsilon_T^{(k,t)}/2} \;>\; 0.
\end{equation}

\emph{Case B (\(q \leq \sqrt{\epsilon_T^{(k,t)}/2}\))}: 
The Pinsker bound cannot certify \(p > 0\); the log-ratio may be unbounded. 
We declare this position under-resolved at the current oracle quality 
and exclude it from the MDL accumulation. At oracle-optimal quality 
(\(\epsilon \to 0\)) Case B never triggers: any position with 
\(q = P_\theta(T_v \mid Z) > 0\) is eventually resolved as \(\epsilon_T^{(k,t)}\) 
shrinks below \(2 q^2\).

\medskip
Combining \eqref{eq:logratio_step}--\eqref{eq:min_bound_caseA} over both 
conditioning sets and both non-excluded cases,
\begin{equation}\label{eq:per_pos_bound}
\bigl|\ell_t^{(k),\theta} - \ell_t^{(k)}\bigr|
\;\leq\; \sqrt{\epsilon_T^{(k,t)}/2} \cdot 
\underbrace{\left[\frac{\mathbf{1}\{q_+ > \sqrt{\epsilon_T^{(k,t)}/2}\}}
{q_+ - \sqrt{\epsilon_T^{(k,t)}/2}} 
+ \frac{\mathbf{1}\{q_- > \sqrt{\epsilon_T^{(k,t)}/2}\}}
{q_- - \sqrt{\epsilon_T^{(k,t)}/2}}\right]}_{\displaystyle \kappa_t^{(k)}(u,v)}.
\end{equation}
Both factors of \(\kappa_t^{(k)}(u,v)\) are computable from the two forward passes 
through \(P_\theta\) already required to evaluate \(\ell_t^{(k),\theta}\); no 
additional model queries are needed.

\medskip
Summing \eqref{eq:per_pos_bound} over matched positions and using 
\(\epsilon_T^{(k,t)} \leq \epsilon_T\),
\begin{equation}\label{eq:compute_threshold}
\bigl|\Delta L^\theta - \Delta L\bigr| 
\;\leq\; \sqrt{\epsilon_T/2} \cdot 
\underbrace{\sum_{(k,t)\in \mathcal{M}_{uv}} \kappa_t^{(k)}(u,v)}_{\displaystyle \eta_{uv}}.
\end{equation}
This yields the claim of Lem.~\ref{lem:mdl_deviation}.
\end{proof}

\begin{remark}[\(\eta_{uv}\) is a well-defined function of the model and data]
Every term in \(\eta_{uv}\) is observable without access to the true distribution 
\(P\): \(q_\pm\) is computed from the frozen AR forward pass, and \(\epsilon_T\) 
is read off the training diagnostics (Appendix~\ref{app:epsilon_monitor}). The 
indicator on each term makes \(\eta_{uv}\) robust to under-resolved positions — 
excluding them rather than letting a spurious \(1/0\) dominate the sum.
\end{remark}

\section{Related Work}\label{appendix:related_work}
\subsection{Comparison}
\begin{table*}[!h]
\centering
\caption{\textbf{Comparison of causal discovery methods for event 
sequences}. \textbf{Discrete}: Operates on symbolic sequences. 
\textbf{Non-Param.}: No assumed functional form. 
\textbf{Single Stream}: Inference from one observed sequence. 
$\mathcal{X}{\to}\mathcal{X}$: Event-to-event dependencies. 
$\mathcal{X}{\to}\mathcal{Y}$: Event-to-outcome. 
\textbf{Scalable}: Tractable at $|\mathcal{X}|{>}10^3$ 
(linear complexity in $|\mathcal{X}|$). 
\textbf{RCA}: Sample-level root-cause identification from 
a single sequence without topology assumptions or interventional data.}
\small
\setlength{\tabcolsep}{4pt}
\begin{tabular}{lcccccccc}
\toprule
\textbf{Method} 
& \parbox{0.8cm}{\centering \textbf{Discrete}}
& \parbox{1.0cm}{\centering \textbf{Non-}\textbf{Param.}} 
& \parbox{1.1cm}{\centering \textbf{Single} \textbf{Stream}} 
& \parbox{1.0cm}{\centering $\mathcal{X}{\to}\mathcal{X}$} 
& \parbox{1.0cm}{\centering $\mathcal{X}{\to}\mathcal{Y}$}
& \parbox{1.1cm}{\centering \textbf{Scalable}}
& \parbox{0.8cm}{\centering \textbf{RCA}} \\
\midrule
Constraint-based (PCMCI, FCI) 
& $\times$ & \cmark & $\times$ & \cmark & $\times$ & $\times$ & $\times$ \\
Score-based (DYNOTEARS) 
& $\times$ & $\times$ & $\times$ & \cmark & $\times$ & $\times$ & $\times$ \\
Granger (TCDF, CAUSE) 
& $\times$ & $\times$ & $\times$ & \cmark & $\times$ & $\times$ & $\times$ \\
Noise-based (VarLiNGAM) 
& $\times$ & $\times$ & $\times$ & \cmark & $\times$ & $\times$ & $\times$ \\
Hawkes / TPP (THP, SHTP)
& \cmark & $\times$ & $\times$ & \cmark & $\times$ & $\times$ & $\times$ \\
Info-Theoretic (NPHC, CASCADE) 
& \cmark & \cmark & $\times$ & \cmark & $\times$ & $\times$ & $\times$ \\
Multi-label (IAMB, CMB)
& \cmark & $\times$ & $\times$ & $\times$ & \cmark & $\times$ & $\times$ \\
RCA methods (AERCA~\cite{han2025root}, CoE~\cite{chain_of_event_2024_sre})
& $\times$ & $\times$ & $\circ$ & $\circ$ & $\times$ & $\times$ & $\circ^\dagger$ \\
\midrule
\textbf{\textsc{seq2cause} (Ours)} 
& \cmark & \cmark & \cmark & \cmark & \cmark & \cmark & \cmark \\
\bottomrule
\end{tabular}
\footnotesize{$\dagger$~AERCA performs RCA on continuous multivariate streams 
with known dimensionality $|\mathcal{X}| \leq 100$; 
CoE requires a known service topology and labeled incident history.}
\label{tab:related_work}
\end{table*}

\subsection{Generative Model for CI-testing}

Our work can be viewed as the discrete sequential analog of the generative-CI-test lineage initiated by  \
\cite{NIPS2017_02f03905_cit_class} and extended via \cite{gcit_gan_neurips2019, ci_flows_2023, yang2025cdcit} and score-based variants. These methods learn a fresh conditional generator per CI query over continuous i.i.d. data; \textsc{seq2cause} instead repurposes a single, pretrained, autoregressive model whose conditional is native to the data modality (discrete sequences), amortizing density estimation across all candidate edges. We provide theoretical bound analysis (Thm.~\ref{thm:cmi_error_bound}) for the sample-level and extend it to the population-level with the MDL error-correction term (Lem.~\ref{lem:mdl_deviation}) to handle population aggregation.

\subsection{LLM as Causal Reasoner}

A parallel line of work treats LLMs themselves as causal oracles, either answering pairwise causal-direction queries from variable metadata \cite{vashishtha2025causal}, serving as imperfect experts~\cite{takayama2025integrating}. These methods rely on the semantic knowledge encoded in LLM pretraining and are critically limited by hallucination, prompt sensitivity, and the absence of statistical guarantees — as recently formalized by \cite{mirzadeh2025gsmsymbolic}. \textsc{seq2cause} uses the AR model in a fundamentally different way: not as a textual oracle queried in natural language, but as a calibrated conditional density estimator over the event alphabet, with CMI tests grounded in the model's explicit output distribution and requires no assumptions regarding the data generating process~\cite{tf_causalinterpretation_neurips_2023}. The guarantee travels through \(\epsilon\), not through the prompt. Therefore, \textsc{seq2cause} aim to enhance an LLM's reasoning capability via a causal graph, similarly to \cite{math2026neurosymbolic}.

\subsection{Data Settings}
\begin{table}[!h]
    \centering
    \caption{\textbf{Data Paradigm Comparison.} Contrast between traditional event sequence causal discovery (multivariate) and our single sequence setting (Session-based/NLP-like). Standard methods require the structure on the left and scale poorly to the structure on the right.}
    \label{tab:data_comparison}
    \resizebox{\columnwidth}{!}{%
    \begin{tabular}{@{}lll@{}}
        \toprule
        \textbf{Feature} & \textbf{Traditional Approach} & \textbf{Ours (seq2cause)} \\
        & (e.g., Hawkes, Granger, PCMCI) & (Autoregressive Model) \\
        \midrule
        \textbf{Input Format} & \textbf{Long / Vertical Stream} & \textbf{Wide / Horizontal Batches} \\
        \textbf{Structure} & 
        \begin{minipage}{0.35\columnwidth}
            \scriptsize
            \texttt{Time \hspace{2pt} Event\_Type} \\
            \texttt{0.0 \hspace{5pt} E\_A} \\
            \texttt{0.4 \hspace{5pt} E\_B} \\
            \texttt{1.2 \hspace{5pt} E\_C} \\
            \texttt{... \hspace{8pt} ...}
        \end{minipage} 
        & 
        \begin{minipage}{0.45\columnwidth}
            \scriptsize
            \texttt{Seq\_ID \hspace{2pt} Sequence (Tokens) \hspace{4pt} Time} \\
            \texttt{0 \hspace{15pt} [E\_A, E\_B, E\_A, ...] [0.0, 1.3, 1.4, ...]} \\
            \texttt{1 \hspace{15pt} [E\_C, E\_D, E\_G, ...] [0.0, 1.2, 5.9, ...]}\\
            \texttt{... \hspace{12pt} ...}
        \end{minipage} \\
        \midrule
        \textbf{Dimensionality} & Low \(|\mathcal{X}| \leq 100\) & Massive \(|\mathcal{X}| > 1,000\) \\
        \textbf{Discovery Scope} & Global (Graph of all processes) & Local / Sample (Summary Graph of specific trace) \\
        \textbf{Vocabulary Complexity} & \text{Often} \(O(|\mathcal{X}|^2)\) or \(O(|\mathcal{X}|^3)\) & \(O(L \cdot |\mathcal{X}|)\) (Inference) \\
        \bottomrule
    \end{tabular}%
    }
\end{table}

\section{Additional Visualization}
\subsection{Causal Graph}
\begin{figure}[!h]
    \centering
    \begin{tikzpicture}

        \draw[thick] (-2,0) -- (3.5,0) node[right] {Time};
        \draw[thick] (-2,0) -- (-2,2) node[right] {Event Type};

        \node[state,fill=blue] (x1) at (-1.5,0.5) {};
        \node[state,fill=blue] (x2) at (-0.5,0.5) {};
        \node[state,fill=blue] (x3) at (1,0.5) {};
        \node[state,fill=blue] (x4) at (2 ,0.5) {}; 

        \node[state,fill=orange] (y) at (3,0.5) {}; 
        \node[state,fill=orange] (y) at (3, 1) {}; 

        \draw[dashed] (-1.5,0) -- (-1.5,0.5);
        \draw[dashed] (-0.5,0) -- (-0.5,0.5);
        \draw[dashed] (1,0) -- (1,0.5);
        \draw[dashed] (2,0) -- (2,0.5);
        \draw[dashed] (3,0) -- (3,0.5);

        \node[below] at (-1.5,0) {\(X_1\)};
        \node[below] at (-0.5,0) {\(\cdots\)};

        \node[below] at (1,0) {\(X_{L-2}\)};
        \node[below] at (2,0) {\(X_{L-1}\)};

        \node[below] at (3,0) {\(\boldsymbol{Y}_{L}\)};

        \draw[thick,->] (4,-1) -- (5,-1) node[midway,above] {Identification};

        \node[state] (cx1) at (5,0) {\textcolor{violet}{\(X_{1}\)}};
        \node[state] (cxd) at (7,0) {\(\cdots\)};
        \node[state] (cx2) at (9,0) {\(X_{L-2}\)};
        \node[state] (cxi) at (11,0) {\(X_{L-1}\)};
        \node[state] (cy) [above =of cxd] {\(Y_1\)};
        \node[state] (cy2) [above =of cxi] {\(Y_2\)};

        \path (cx1) edge (cxd);
        \path (cxd) edge  (cy);
        \path (cx1) edge[bend right=30] (cx2);
        \path (cx1) edge[bend right=30] (cxi);
        \path (cx2) edge (cxi);
        \path (cx2) edge[bend left=60] (cy2);
        \path (cxd) edge (cx2);
        \path (cxi) edge (cy2);
        \path (cxd) edge[bend left=60] (cxi);

        \node[draw=blue,dotted,fit=(cxd) (cxd), inner sep=0.2cm] (mb1) {};
        \node[anchor=south west, blue] at (mb1.north west) {\(\text{MB}_1\)};


        \node[draw=cyan,dotted,fit=(cx2) (cxi), inner sep=0.2cm] (mb2) {};
        \node[anchor=south west, cyan] at (mb2.north west) {\(\text{MB}_2\)};
    \end{tikzpicture}
    \caption{\textbf{An example of a sample-level causal graph} \textcolor{blue}{event} sequence leading to \textcolor{orange}{outcomes} where \textcolor{blue}{\(\text{MB}_1\)} represents the Markov Boundary of \textcolor{orange}{\(Y_1\)} and \textcolor{cyan}{\(\text{MB}_2\)} the Markov Boundary of \textcolor{red}{\(Y_2\)}. \textcolor{violet}{$X_1$} is the root-cause of \(Y_2\)}
\label{fig:sample_level_graph}
\end{figure}

\begin{figure}[!h]
    \centering
    \includegraphics[width=0.9\linewidth]{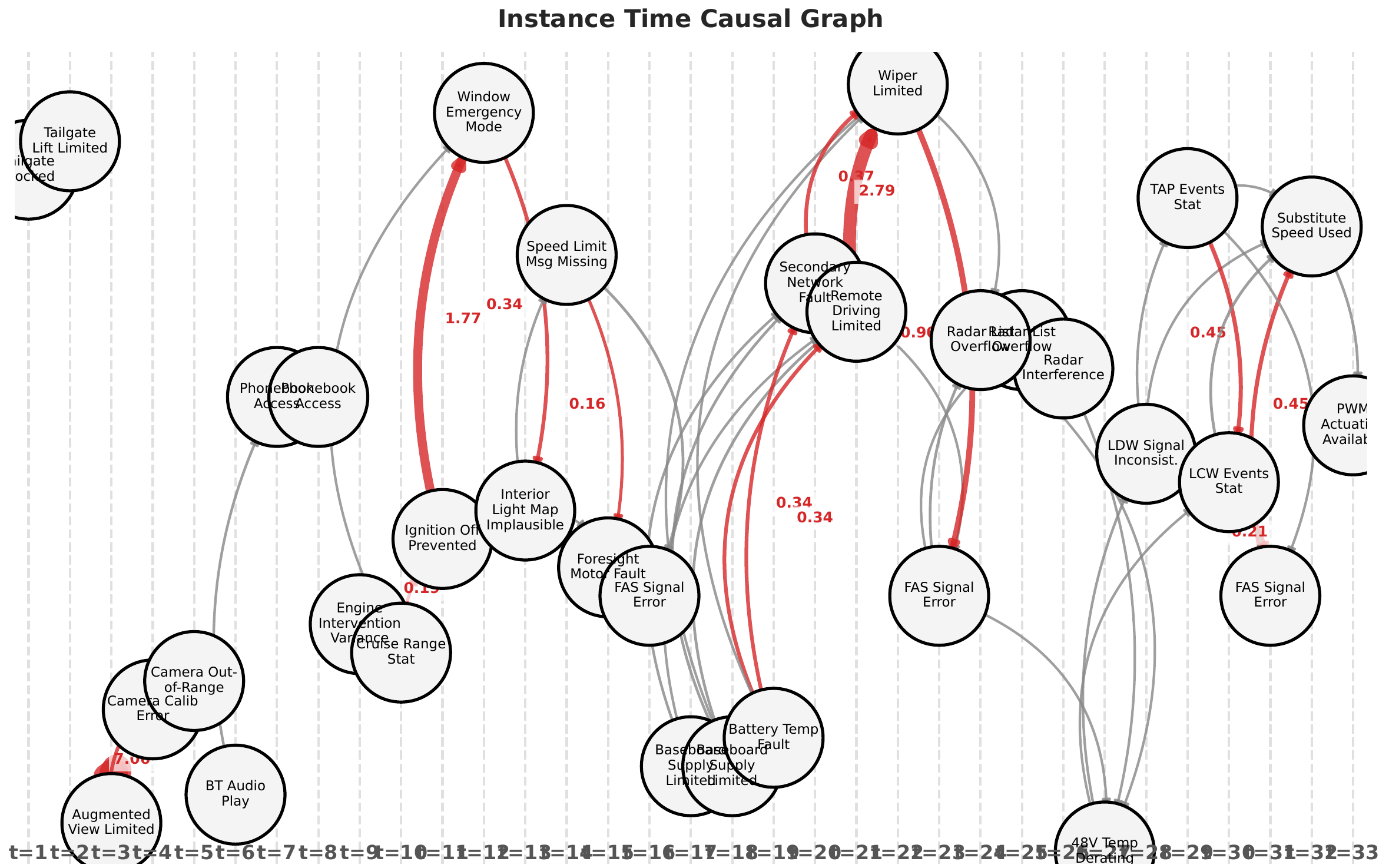}
    \caption{\textbf{Sample Time Causal Graph \(\mathcal{G}^s_{\mathcal{XX}}\).} Temporal evolution of a diagnostic defect cascade in a vehicle ($|\mathcal{X}| \approx 29,100$). \textsc{seq2cause} effectively captures causal relationships, revealing distinct \textbf{error clusters} at different time steps (e.g., initial sensor failures at $t=3$ triggering mechanical faults at $t=12$, battery at issue \(t=17\)). This enables actionable root-cause analysis by isolating the specific onset of a failure mechanism and their strength using The CMI \(\hat{I}_N\)}
    \label{fig:time_instance_graph_dtc}
\end{figure}

\section{Algorithm \& Parallelization}\label{appendix:algorithm_parallelization}
\subsection{Lagged Effects via Simulated Interventions}
To evaluate the lagged effects of an event \(E_{t-\ell}\) on \(E_{t}\), we control for the intermediate events, so-called \textit{mediators} \(\boldsymbol{M} = X_{t-\ell +1:t-1}\) by simulating a Controlled Direct Effect (CDE) \cite{pearl_2009} of \(E_{t-\ell}\) on \(E_{t}\).

\begin{definition}[Randomized Interventional Do-Operator]\label{def:do_in_sequences_expected}
Let \(\boldsymbol{M} = X_{t-\ell+1:t-1} \subset \mathcal{X}\) be the set of intermediate events between cause \(E_{t-\ell}\) and effect \(E_{t}\). We define the intervention \(do(\boldsymbol{M}\sim Q)\) as the expectation over counterfactual realizations sampled from a proposal \(Q\) (e.g., Uniform over \(|\mathcal{X}|\)) and average this effect for \(M\) counterfactual such as:
\begin{align}
  P(E_{t}|do(\boldsymbol{M} \sim Q), x_{<t-\ell}) &\triangleq \mathbb{E}_{\boldsymbol{M} \sim Q} P(E_{t}|\boldsymbol{M}, x_{<{t-\ell}}) \\
    &\approx \frac{1}{M} \sum_{l=1}^{N} P(E_{t} \mid \mathbf{m}^{(l)},  x_{<{t-\ell}})\notag
\end{align}
\end{definition}
\begin{remark}
    This effectively marginalizes out the intermediate causal mechanisms only if we assume that there are no hidden confounders (Assumption~\ref{ass:sufficiency}).
\end{remark}
Using the previous definition, we modify the standard information gain to detect lagged information gain from \(E_{t-\ell}\) to \(E_t\), namely \(I^\mathcal{L}_G\):

\begin{definition}[Lagged Information Gain]\label{def:lagged_ig}
    Let \(E_{t-\ell}\) be the cause, \(E_{t}\) be the effect and \(\boldsymbol{M} = X_{t-\ell+1:t-1}\) the set of intermediate events. The Lagged Information Gain \(I^{\mathcal{L}}_G\) is defined:
\begin{equation}\label{eq:lagged_info_gain}
I^{\mathcal{L}}_{G}(E_{t}; e_{t-\ell} \mid x_{<t-\ell}) \triangleq D_{\mathrm{KL}}\Big(  P\big(E_{t} \mid do(\boldsymbol{M} \sim Q), e_{t-\ell}, x_{< t-\ell}\big) \
\big\| P(E_{t} \mid  do(\boldsymbol{M} \sim Q), x_{< t-\ell}) \Big)
\end{equation}
\end{definition}

\subsection{Parallelization of \textsc{seq2cause} via Teacher Forcing}
\begin{figure*}[!h]
    \centering
\begin{tikzpicture}[
    font=\sffamily,
    >=Stealth,
    tensor/.style={matrix of nodes, nodes={draw, minimum size=6mm, anchor=center, font=\scriptsize}, column sep=-\pgflinewidth, row sep=-\pgflinewidth, inner sep=0pt},
    obs/.style={fill=blue!15},
    inter/.style={fill=red!15, pattern=north east lines, pattern color=red!30},
    context/.style={fill=green!15},
    gray_out/.style={fill=gray!20},
    label_text/.style={font=\bfseries\small, align=center}
]

\node[label_text] (label_seq) at (-0.3, 4.4) {1. Observed Seq $\mathbf{S}$};
\matrix[tensor] (seq) at (0, 3.8) {
    |[context]| $x^{(l)}_{0:c}$ & |[obs]| $x_1$ & |[obs]| $x_2$ & |[obs]| $x_3$ \\
};

\node[label_text] (label_noise) at (0, 3.2) {2. Noise Samples $\mathbf{M}^{(l)}$};
\matrix[tensor] (noise) at (0, 2.6) {
 |[inter]| $m^{(l)}_2$ & |[inter]| $m^{(l)}_3$  \\
};

\node[label_text] (label_broadcast) at (4.8, 5.4) {3. Parallel Input Construction};


\matrix[tensor] (staircase) at (4.5, 3.2) {
    |[context]| $x^{(l)}_{0:c}$ & |[obs]| $x_1$ & |[inter]| $m^{(l)}_2$ & |[inter]| $m^{(l)}_3$ \\
    |[context]| $x^{(l)}_{0:c}$ & |[obs]| $x_1$ & |[obs]| $x_2$ & |[inter]| $m^{(l)}_3$ \\
    |[context]| $x^{(l)}_{0:c}$ & |[obs]| $x_1$ & |[obs]| $x_2$ & |[obs]| $x_3$ \\
};

\begin{scope}[on background layer]
    \draw[fill=white, draw=black!50] ($(staircase.north west)+(0.2,0.2)$) rectangle ($(staircase.south east)+(0.2,0.2)$);
    \draw[fill=white, draw=black!50] ($(staircase.north west)+(0.1,0.1)$) rectangle ($(staircase.south east)+(0.1,0.1)$);
    \node at (6, 4.8) [text=black!50] {\scriptsize $\times N$ Particles};
\end{scope}

\draw[->, thick, blue] (seq.east) -- ++(0.3,0) |- ($(staircase.west)+(0, -0.3)$); 
\draw[->, thick, blue] (seq.east) -- ++(0.3,0) |- ($(staircase.west)+(0, -0.9)$); 

\draw[->, thick, red] (noise.east) -- ++(0.5,0) |- ($(staircase.west)+(0, 0.9)$); 
\draw[->, thick, red] (noise.east) -- ++(0.5,0) |- ($(staircase.west)+(0, 0.3)$); 

\node[label_text] (label_inference) at (8, 5.4) {4. Inference};
\node[draw, fill=orange!10, minimum width=1.5cm, minimum height=2.4cm, rounded corners, align=center] (model) at (8, 3.2) {\textbf{Tf}$_\theta$ \\ \scriptsize AR Model};

\draw[->, thick, double, black!70] (staircase.east) -- (model.west)
    node[midway, below, font=\tiny\bfseries] {$\mathbf{X}_{do}$};

\node[label_text] at (10.8, 5.4) {5. Output Tensor};

\matrix[tensor] (out_tensor) at (10.8, 3.2) {
    |[gray_out]| $p_0$ & |[gray_out]| $\dots$ & |[gray_out]| $\dots$ \\
    |[gray_out]| $p_1$ & |[gray_out]| $\dots$ & |[gray_out]| $\dots$ \\
    |[gray_out]| $p_2$ & |[gray_out]| $\dots$ & |[gray_out]| $\dots$ \\
    |[gray_out]| $p_3$ & |[gray_out]| $\dots$ & |[gray_out]| $\dots$ \\
};


\draw[->, thick, double, black!70] (model.east) -- (out_tensor.west)
    node[midway, below, font=\tiny\bfseries] {$\mathbf{P}_{raw}$};

\draw[decoration={brace, mirror, amplitude=5pt}, decorate, thick, blue] 
    ($(out_tensor.north west)+(-0.2,-0.1)$) -- ($(out_tensor.south west)+(-0.2, 0.6)$) 
    node[midway, left=3pt, font=\scriptsize, align=right] {Shift \\ $[:-1]$};

\draw[decoration={brace, amplitude=5pt}, decorate, thick, red] 
    ($(out_tensor.north east)+(0.2,-0.6)$) -- ($(out_tensor.south east)+(0.2, 0.1)$) 
    node[midway, right=3pt, font=\scriptsize, align=left] {Shift \\ $[1:]$};

\node[draw, anchor=south east, fill=white] at (2.5, 4.6) {
    \scriptsize
    \begin{tabular}{cl}
         \tikz\node[draw, fill=green!15, inner sep=2pt] {}; & Fixed Context $x^{(l)}_{0:c}$ \\
         \tikz\node[draw, fill=blue!15, inner sep=2pt] {}; & Observed History $x_{\le j}$ \\
         \tikz\node[draw, fill=red!15, pattern=north east lines, pattern color=red!30, inner sep=2pt] {}; & Intervened Mediators $m^{(l)}_{j+1:i-1}$ \\
    \end{tabular}
};

\end{tikzpicture}
\caption{\textbf{Overview of \textsc{seq2cause} Parallel CI-tests.} We construct a single broadcasted tensor $\mathbf{X}_{do}$ where each row $j$ incrementally fixes the history $x_{\le j}$ while randomizing the future (staircase pattern). The model processes this tensor in parallel to produce raw probabilities $\mathbf{P}_{raw}$ (grey). We then compute the CMI by comparing adjacent rows: the distribution at row $j-1$ serves as the baseline ($\mathbf{P}_{base}$, blue) for the intervention at row $j$ ($\mathbf{P}_{do}$, red).}
\label{fig:trace_diagram}
\end{figure*}

\subsection{Scalability of \textsc{seq2cause}}
\paragraph{Sample-Level}
To parallelize the Monte-Carlo estimation (Eq.~\eqref{eq:cmi_estimator}) and counterfactual sampling (Eq.~\ref{eq:lagged_info_gain}) we avoid computing the CMI for the full trajectory \(x_{0:t-1}\) but a truncated version, called \emph{context} as 
\[x_{<t} \approx  x_{0:c} \; \text{for} \; c < t \; \text{and} \; 0 < c \ll L\]
We argue that it makes it possible to parallelize the CI-tests by having one common dimension for the sampling and renders the problem feasible on GPUs. We usually take \(c = \text{max}(0.1L, 20)\) in our experiments. Although it might break Markovianity for long sequence, empirical results show robustness to this truncation. We provide ablation to unseen sequence lengths during training and show our method to be robust to high delayed effects in Fig~\ref{fig:main_results}. 
\paragraph{Sparse Approximation for lagged effect in \(\mathcal{X}\to \mathcal{X}\).}\label{app:sparse_aprox}
For each time step \(t\), we must perform \(t\) CI-tests (one for every potential lag per step). Thus, for a sequence of length \(L\), the total number of CI-tests is given by \(\sum^L_{t=1} t = \frac{L(L+1)}{2}\) which grows quadratically with the sequence length. As a result, even on multiple GPUs, it becomes tricky to infer for long sequences \(L > 100\).

To solve this, we propose a \emph{sparse variant} for which we bound the lagged effects of previous events on future events up to a memory \(m\). Thus the DGP \(\{X_t\}\) becomes an \(m\)-order Markov chain. 
With \(m \ll L\), \textsc{seq2cause} scales linearly with the sequence length \(L\). The memory complexity transitions from:
\begin{equation*}\label{eq:linearization_complexity}
    \mathcal{O}( N \cdot (L-c) \cdot L \cdot |\mathcal{X}|) \xrightarrow{\text{Bounded Memory}} \mathcal{O}(N \cdot m \cdot L \cdot |\mathcal{X}|)
\end{equation*}

\paragraph{Population-Level.}
Population-level inference via the MDL criterion (Eq.~\ref{eq:mdl_delta}) 
is strictly cheaper than sample-level inference: it requires \emph{no 
Monte-Carlo estimation}. For each matched position 
$(k,t) \in \mathcal{M}_{uv}$, the per-step log-ratio $\ell_t^{(k),\theta}(u,v)$ 
reduces to two forward passes through the frozen AR model — one at 
$Z_u^{(k,t)}$ (cause present) and one at $Z_\varnothing^{(k,t)}$ (cause 
absent) — whose outputs are directly compared as a log-ratio. No particle 
sampling, no history simulation, no stochastic approximation: the total 
accumulated evidence $\hat{\Delta}_m(u,v)$ is obtained by summing 
$n_{uv}$ (number of event pairs $u \to v)$ such log-ratios deterministically across the dataset, at a 
memory complexity of:
\begin{equation*}\label{eq:memory_complexity}
    \mathcal{O}(L^2 \cdot M \cdot |\mathcal{X}|)
\end{equation*}
where $M$ is the number of sampled mediators required to evaluate 
the do-operator (empirically $M{=}4$ suffices, 
Appendix~\ref{fig:ablation_m_mediators_graph_pop}). For the 
self-corrected threshold $\tau_{uv}^*$ (Cor.~\ref{cor:adaptive_threshold}), 
the error correction term $\sqrt{\epsilon_T/2} \cdot \eta_{uv}$ requires 
the local binary approximation error $\epsilon_T$ at each 
matched position. This quantity is obtained at \emph{zero additional cost}: 
recalling that the cross-entropy loss decomposes as 
$\mathcal{L}_{\mathrm{AR}}(\theta) = H(P) + D_{\mathrm{KL}}(P \| P_\theta)$, 
the generative error $\epsilon_T$ is read off directly as 
the binary cross-entropy minus the entropy floor 
$H(P)$. No additional model 
queries are required. This inference is fully parallelizable on GPUs because it uses the parallel lagged-effect implementation shown in Fig.~\ref{fig:trace_diagram}.


\section{Evaluation}\label{appendix:evaluation}
\paragraph{Synthetic experiments settings.}
We used an \(ml.g5.4xlarge\) instance from AWS Sagemaker, which contains 8 vCPUs and 1 NVIDIA A10G as GPU with 24GiB for training and inference.
\paragraph{Vehicle diagnostics experiments settings.}
We used an \(ml.g5.12xlarge\) instance from AWS Sagemaker, which contains 8 vCPUs and 4 NVIDIA A10G as GPU with 24GiB for inference.

\subsection{Nonlinear SCMs}
As we saw, standard causal discovery algorithms for multivariate time series (e.g., PCMCI, Neural Hawkes Processes, CASCADE) are not applicable in our setting. Moreover, evaluating causal discovery in high-dimensional event sequences is notoriously difficult due to the lack of ground-truth annotations in real-world traces (e.g., server logs, medical records). Furthermore, generic high-order MC are computationally intractable to materialize due to the state space \(|\mathcal{X}|^m\) of \(\{X_t\}\) and unlearnable in high dimensions without structural assumptions.

\paragraph{Synthetic Data.}\label{appendix:scm}
We introduce a synthetic benchmark based on a nonlinear Structural Causal Model (SCM). 
Let $\mathcal{X}$ be the set of event types,
$s = (x_1,\dots,x_L) \in \mathcal{X}^{L}$ a discrete sequence, and
$h$ the history window.
The SCM generative process is
\begin{equation}
  P(X_t \mid X_{t-h:t-1})
  = \operatorname{softmax}\!\left(
      \mathbf{b}
      + \sum_{k=1}^{h} e^{-(k-1)}\,\mathbf{W}[x_{t-k}]
      + \operatorname{ReLU}\!\left(
          \bigl[E_{x_{t-h}},\dots,E_{x_{t-1}}\bigr]\,\mathbf{W}_1
        \right)\mathbf{W}_2
    \right),
  \label{eq:scm}
\end{equation}
where $\mathbf{W}\in\mathbb{R}^{|\mathcal{X}|\times |\mathcal{X}|}$ is a sparse interaction matrix
(sparsity $= 0.9$, mixed-sign), $E\in\mathbb{R}^{|\mathcal{X}|\times 16}$ are fixed random
embeddings, and $\mathbf{W}_1\in\mathbb{R}^{h\cdot 16\times 64}$,
$\mathbf{W}_2\in\mathbb{R}^{64\times |\mathcal{X}|}$ form a one-hidden-layer MLP
that captures non-linear interactions.
The ground-truth adjacency is determined by counterfactual
intervention~\citep{pearl_2009}: $i\to j$ if
$\mathbb{E}\bigl[\mathrm{KL}(P_{\mathrm{orig}}\!\parallel P_{\mathrm{do}(x_{t-k}=\epsilon)})\bigr]>\delta$.
We tune the sparsity of the weight matrix \(\mathbf{W}\) to obtain the Shannon redundancy~\cite{shannon1951prediction} as \( 1-\frac{H(P)}{H_{max}} = 1-\frac{H(P)}{\log(|\mathcal{X}|)}\) superior or equal to \(58\%\) across the benchmarked SCMs. 

\paragraph{Evaluation via intervention.}
We always evaluate \textsc{seq2cause} on the summary graph. To measure performance, we perform atomic interventions by measuring the average KL divergence over 10 by uniformly randomizing \(E_{t-\ell}\) and measuring the average KL divergence over 10 counterfactual between post-intervention and observational distributions of \(E_{t}\). If the divergence is above \(\tau> 0.05\), an edge \(E_t \rightarrow E_{t-\ell}\) exists in the summary graph.

\subsection{\(\epsilon\) in Practice}\label{app:epsilon_monitor}

\paragraph{What \(\epsilon\) stands for.} 
\(\epsilon\) replaces parametric structural assumptions (linearity, Gaussianity, 
additive noise) with a single monitorable quantity: the excess cross-entropy
\begin{equation}\label{eq:epsilon_with_ar_loss}
\epsilon \;=\; \mathcal{L}_{\mathrm{AR}}(\theta) - H(P),
\end{equation}
measurable during AR pretraining without ground-truth causal labels.

\paragraph{Practical estimation.} 
Two quantities in \eqref{eq:epsilon_with_ar_loss} need handling:

\emph{(i) The cross-entropy \(\mathcal{L}_{\mathrm{AR}}(\theta)\)} is the standard 
held-out validation loss, directly available from training diagnostics.

\emph{(ii) The entropy floor \(H(P)\)} is inaccessible in practice. We approximate it 
by the minimum held-out validation loss achievable as training converges, under the 
standard assumption that \(\mathcal{L}_{\mathrm{AR}}(\theta^\ast) \to H(P)\) as 
capacity and data grow. Empirically we find \(\epsilon < 0.05\) is sufficient for 
structural recovery (Fig.~\ref{fig:robustness_and_scale}(a), phase transition at 
\(\epsilon \approx 0.1\)).


\paragraph{Comparing \(\epsilon\) across vocabularies: the Oracle Score.} 
Different DGPs have different entropy floors \(H(P)\), making raw \(\epsilon\) 
incomparable across settings. For the nonlinear-SCM experiments 
(Fig.~\ref{fig:robustness_and_scale}(b)) we use the \emph{Oracle Score} \(\hat\epsilon\) 
as a normalized estimator of the excess entropy:
\begin{equation}\label{eq:oracle_score}
\hat\epsilon(P_\theta) \;=\; \frac{\mathcal{L}_{\mathrm{AR}}(\theta) - H(P)}
                                   {H_{\max} - H(P)},
\end{equation}
where \(H_{\max} = \log|\mathcal{X}|\) is the maximum entropy (uniform noise). 
\(\hat\epsilon \to 0\) implies convergence to the theoretical limit 
(\(P_\theta \to P\)); \(\hat\epsilon = 1\) is the uniform prior.


\subsection{Population-Level \(\mathcal{X} \to \mathcal{X} \) baselines}
\label{sec:baselines}

We compare \textsc{Seq2Cause} against several families of baselines for the \(\mathcal{X} \to \mathcal{X}\) regime that span
association-based, information-theoretic, cascade-based, point-process, and
neural attribution methods. Like for \textsc{seq2cause} we apply an individual significance threshold \(\tau\) to the estimated adjacency matrix to binarize it. Importantly, \emph{even when we grant PMI and TF-IDF oracle access} to the threshold that maximises F1 on the test set — an advantage unavailable in practice — \textsc{seq2cause} still outperforms them (Appendix.~\ref{fig:baseline_pop_x_x_oracle_thres}). In contrast, \textsc{seq2cause} requires no threshold calibration: its MDL criterion derives a principled decision boundary directly from the density error bound.


\paragraph{Pointwise Mutual Information (PMI).}
For each ordered pair $(i,j)$, we count co-occurrences within a lag window of
size $h$:
\begin{equation}
  \widehat{\mathrm{PMI}}(i\to j)
  = \log\frac{\hat{P}(x_t=j,\, x_{t-k}=i \text{ for some } k\le h)}
             {\hat{P}(x_{t-k}=i)\,\hat{P}(x_t=j)},
\end{equation}
clipped at zero (positive PMI).
PMI measures symmetric association; it is included as a
simple count-based reference and does not exploit temporal order beyond the
lag constraint~\citep{church-hanks-1990-word}.

\paragraph{TF-IDF.}
We adapt the classical retrieval score~\citep{tfidf} for identifying causal pairs over a discrete event vocabulary $\mathcal{V}$.
Treating each source type $i$ as a \emph{document} whose terms are the event
types that follow it within $h$ steps, we define
\begin{equation}
  \mathrm{Score}(i \to j)
  \;=\;
  \underbrace{\frac{\mathrm{count}(i \leadsto j)}{\mathrm{count}(i)}}_{\mathrm{f}(i\to j)}
  \;\times\;
  \underbrace{\log\!\left(\frac{|\mathcal{V}|+1}
    {|\{i' : \mathrm{count}(i' \leadsto j) > 0\}|+1}\right)}_{\mathrm{IDF}(j)},
\end{equation}
where $\mathrm{count}(i \leadsto j)$ is the number of times type $j$ appears
within $h$ steps after type $i$ across all observed sequences.
The TF term measures how reliably $i$ is followed by $j$; the IDF term
down-weights effect types $j$ that are triggered by many distinct sources,
which in this setting correspond to high-frequency background events with no
specific cause.
Compared to PMI, which penalises both rare causes and rare effects, IDF
operates in log-count space and only suppresses effects that lack causal
specificity, making it more robust when many event types are infrequent.
An edge $i\to j$ is predicted whenever $\mathrm{Score}(i\to j)>\tau$.

\begin{figure}
    \centering
    \includegraphics[width=1\linewidth]{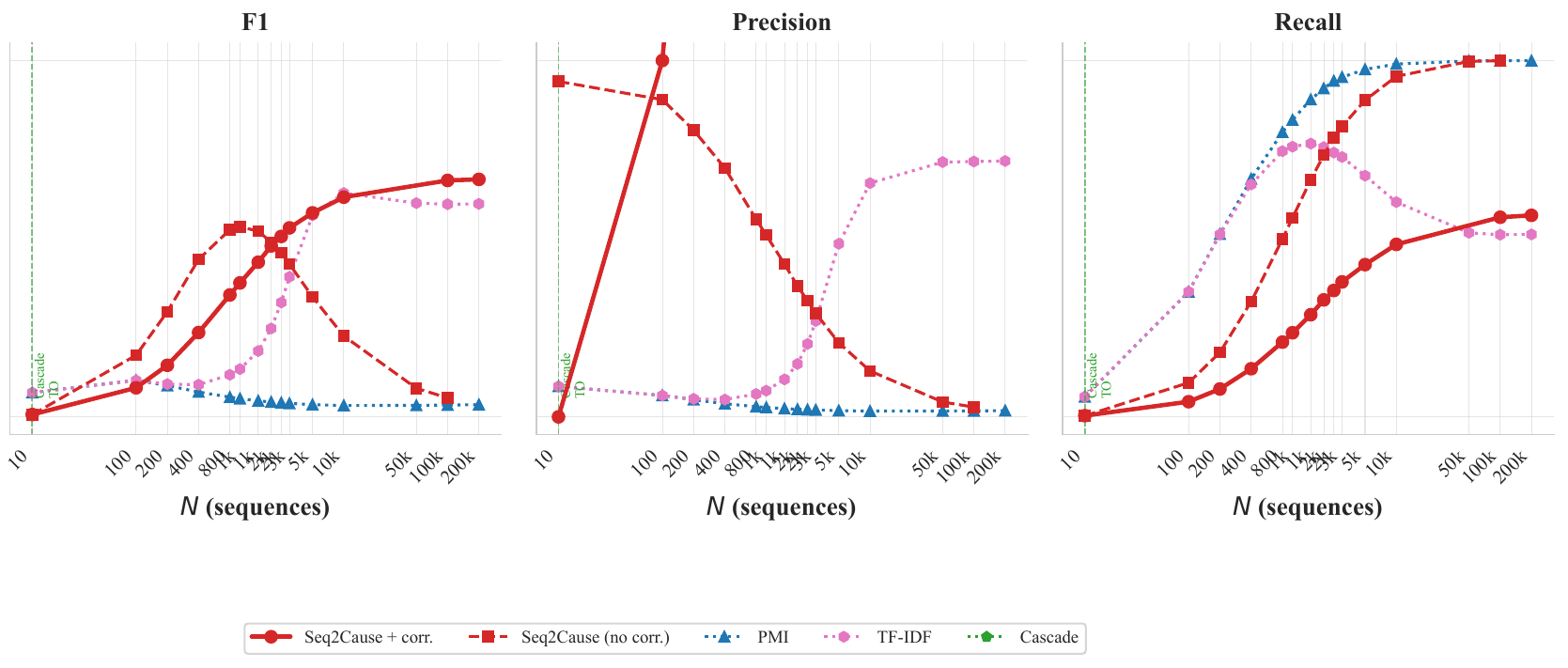}
\caption{
  \textbf{Population-level \(\mathcal{X} \to \mathcal{X}\) baseline comparison
  with oracle threshold selection.}
  F1, Precision, and Recall as a function of the number of observed sequences
  $N$ for each method on the nonlinear SCM benchmark
  ($|\mathcal{V}|=1000$, $m=6$).
  For PMI and TF-IDF, scores are binarised at the threshold $\tau^*$ that
  \emph{maximises F1 on the held-out ground truth}; this constitutes an upper
  bound on their achievable performance, since such oracle calibration is
  unavailable at test time.
  \textsc{seq2cause} uses its MDL-derived threshold
  (Corollary~\ref{cor:adaptive_threshold}) with no access to the ground truth.
  Despite this advantage, PMI and TF-IDF remain below \textsc{seq2cause}
  across all sample sizes, confirming that the performance gap is not an
  artefact of threshold miscalibration but reflects a fundamental limitation
  of association-based scores for causal discovery.
}    \label{fig:baseline_pop_x_x_oracle_thres}
\end{figure}

\paragraph{Cascade (MDL-based).}
The Cascade method~\citep{cueppers2024causal} frames causal discovery as a
minimum description length (MDL) problem: it selects the directed graph $G$
that minimises the two-part code $L(G) + L(\mathbf{x}\mid G)$, where the
second term encodes the observed event sequences given $G$.
An event $i$ is declared a cause of $j$ if adding the directed edge $i\to j$
compresses the data by more than the cost of encoding the edge.
We use the reference implementation of \citet{cueppers2024causal}\footnote{%
  \url{https://github.com/joschac/cascade}} with \texttt{max\_delay}$=h$ and
\texttt{allow\_instant}$=\mathrm{False}$, treating each sequence as an
independent stream separated by a temporal gap $>h$ to prevent spurious
cross-sequence edges.
Cascade returns a binary adjacency matrix directly; no threshold is applied.

\paragraph{SHTP; Structural Hawke Processes.}
We use the implementation of \citet{shtp} for SHTP\footnote{ \url{https://github.com/DMIRLAB-Group/SHP}} that model the event apparition as a Hawkes Process. We treat each sequence as an independent stream exactly like Cascade.

\subsubsection{Differentiation with Granger}\label{appendix:granger_vs_seq2cause}
\textsc{seq2cause} works like a constraint based algorithm~\cite{constrainct_based_cd} whereas Granger causality's core logic remains on predictability.

First, Granger causality assumes a fixed parametric form 
(linear VAR) for the DGP and requires stationarity; 
\textsc{seq2cause} assumes neither, replacing structural assumptions 
with the single monitorable quantity $\epsilon$ (Assumption~\ref{ass:oracle}).
Second, for lagged effects ($\ell > 1$) in the 
$\mathcal{X}\to\mathcal{X}$ regime, \textsc{seq2cause} performs a 
do-intervention on intermediate mediators rather than 
conditioning on them, which distinguishes direct from 
total effects — a distinction Granger methods cannot make 
without additional structural assumptions~\cite{pearl_1998_bn}.
Third, under causal sufficiency (A\ref{ass:sufficiency}), the full-history conditioning set $X_{<t}$ satisfies the back-door criterion 
for every candidate cause $E_{t-\ell}$, so the CMI test 
identifies the direct causal effect rather than a 
regression residual.
The empirical superiority over Neural Granger (same backbone, 
probability difference instead of CMI, $\Delta$F1 $>$20 
points, Tab.~\ref{tab:unified}) isolates the contribution of the 
interventional CMI signal over regressing on predictive 
residuals — confirming that the difference is not merely 
architectural.

\subsection{Ablations}
\begin{figure}[!h]
\centering
\includegraphics[width=\linewidth]{%
    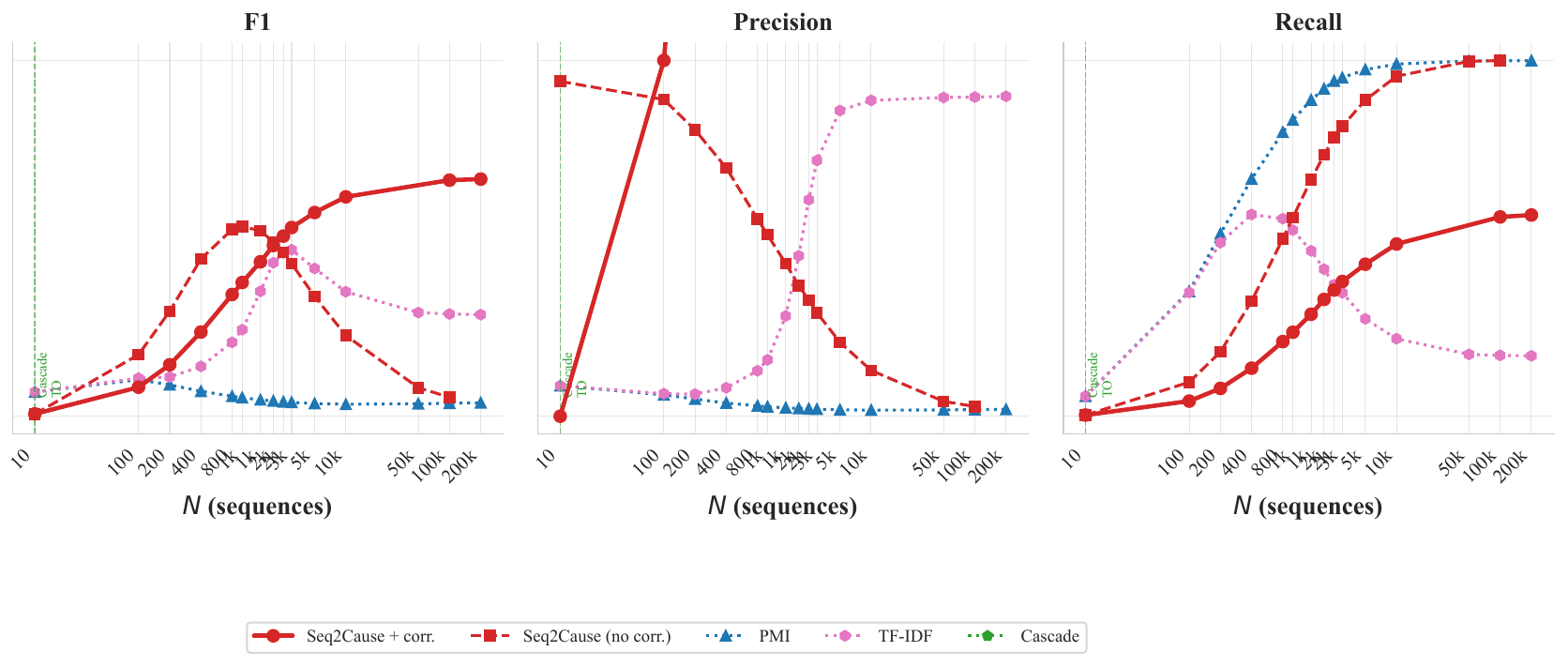}
\caption{%
  \textbf{Population-level $\mathcal{X}\!\to\!\mathcal{X}$ 
  discovery as a function of number of sequences $n$}
  ($|\mathcal{X}|{=}1{,}000$, nonlinear-SCM, 
  $\epsilon{=}0.05$).
  The green dashed line marks the CASCADE timeout ($N{=}100$);
  SHP times out at the same scale and is omitted beyond that point.
  \textit{Correlation baselines} (PMI, TF-IDF): Recall saturates 
  near 1.0 early as heuristic thresholds admit all co-occurring 
  pairs, while Precision remains near zero — confirming that 
  frequency-based methods cannot distinguish causal structure 
  from spurious association regardless of evidence volume.
  \textit{\textsc{seq2cause} w/o correction} (dashed red): 
  Precision peaks near $N{=}200$ then collapses to 4.2\% as 
  oracle approximation errors $\sqrt{\epsilon_T/2}\cdot\eta_{uv}$ 
  accumulate without correction, empirically validating 
  Lem.~\ref{lem:mdl_deviation} — the uncorrected threshold 
  becomes equivalent to no threshold at scale.
  \textit{\textsc{seq2cause} w/ correction} (solid red): the 
  adaptive $\tau^*_{uv}$ absorbs error accumulation, maintaining 
  Precision above 0.80 across the full range while Recall grows 
  steadily as genuine weak causal signals clear the corrected 
  threshold — the only method achieving balanced F1 at any scale.
  ($L{=}64$, $m{=}6$, $\tau^*_{uv}$ from 
  Cor.~\ref{cor:adaptive_threshold}.)
}
\label{fig:nb_seq_baseline}
\end{figure}

\subsubsection{Robustness analysis}
\begin{figure*}[!h]
    \centering
    \includegraphics[width=1\linewidth]{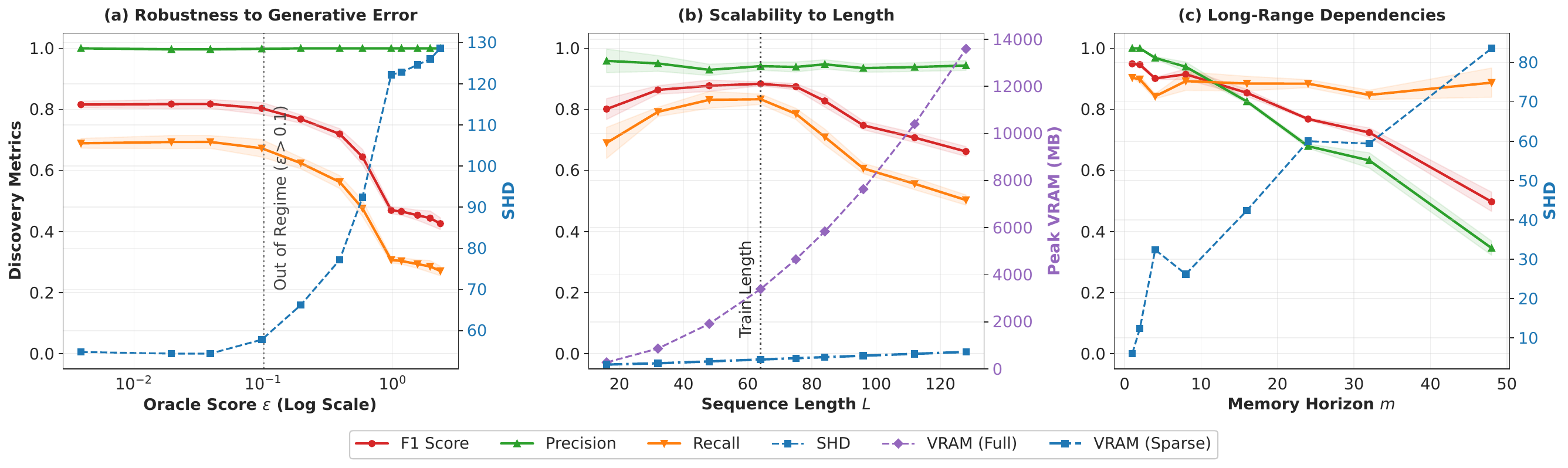}
    \caption{\textbf{Sample-Level (\(\mathcal{X} \to \mathcal{X}\)) Robustness and Scalability Analysis} ($|\mathcal{X}|=1000, N=128, \tau=10^{-4}, L=64$). \textbf{(a) Robustness to Generative Error:} Performance as a function of the model's oracle score $\epsilon$. \textsc{seq2cause} exhibits a phase transition, recovering structure \emph{even for imperfect models} ($\epsilon < 0.1$) and maintaining high Precision even as fidelity degrades. \textbf{(b) Scalability to Length:} Performance and GPU memory usage vs. sequence length $L$. The \textbf{Sparse} variant (Appendix. \ref{app:sparse_aprox}) demonstrates linear memory scaling ($O(mL)$), enabling inference on sequences far exceeding the training length ($L=64$), whereas the Full variant scales quadratically. \textbf{(c) Long-Range Dependencies:} Robustness to increasing delayed-effects $m$. \textsc{seq2cause} maintains F1 $>0.8$ even as dependencies span one third of the sequence ($m=20$), confirming the method's ability to capture distant causal mechanisms.}
    \label{fig:main_results}
\end{figure*}

\paragraph{Scalability to sequence length.}
Figure~\ref{fig:main_results}(b) evaluates performance
and GPU memory on sequences beyond the AR model's
training window.
The sparse variant ($m$-bounded lags) achieves
$\mathcal{O}(m \cdot L \cdot |\mathcal{X}|)$ memory
versus $\mathcal{O}(L^2 \cdot |\mathcal{X}|)$ for
the full variant, enabling inference on sequences
far exceeding the training length.
Precision remains stable (${\approx}0.95$) while
Recall degrades gracefully, again confirming the
conservative failure mode.

\paragraph{Deep temporal dependencies.}
Figure~\ref{fig:main_results}(c) stress-tests the
parallelised intervention mechanism by extending
the SCM memory horizon to $m\!=\!48$.
\textsc{seq2cause} maintains F1\,$>\!0.80$ up to
$m\!=\!20$ (one third of the sequence length),
confirming effective capture of long-range causal
signals without exponential degradation.

\begin{table}[!h]
\centering
\caption{\textbf{Double degeneracy of classical MB algorithms on the Sample-Level $\mathcal{X}\!\to\!\mathcal{Y}$.}
Weighted F1 (\%) on the vehicle diagnostic Markov-boundary recovery task, evaluated on restricted vocabularies ($|\mathcal{X}|$ = top-$K$ most frequent event types) and reduced sample sizes $n$. At low $(K,n)$, classical LSL algorithms run to completion but collapse to $\text{F1}=0$ because expected co-occurrence $\mathbb{E}[\#(u,Y_j)] \approx n/(K|\mathcal{Y}|)$ falls below the threshold required for any asymptotic CI test to reject independence. Scaling up either $K$ or $n$ to alleviate this sample-starvation regime immediately triggers the
dual vocabulary-driven intractability regime ($^{\dagger}$: timeout $>$3 days on 4$\times$A10G) GPUs.
\textsc{seq2cause} amortizes conditional density estimation across all $(u, Y_j)$ pairs through a
single AR forward pass and is the only method viable in both regimes.}
\label{tab:reduced_n_mb}
\small
\begin{tabular}{ll|ccccc|c}
\toprule
$|\mathcal{X}|$ & $n$ & IAMB & CMB & MB-by-MB & PCDbyPCD & MI-MCF & \textsc{seq2cause} \\
\midrule
500    & 500    & 0.0 & 0.0 & 0.0 & 0.0 & 0.0 & \textbf{58.4} \\
500    & 5,000  & 2.1 & 1.8 & 2.4 & 0.0 & 0.0 & \textbf{59.7} \\
2,000  & 500    & 0.0 & 0.0 & 0.0 & 0.0 & 0.0 & \textbf{52.2} \\
2,000  & 5,000  & 0.0 & 0.0 & 0.0 & --$^{\dagger}$ & 0.0 & \textbf{53.1} \\
2,000  & 50,000 & --$^{\dagger}$ & --$^{\dagger}$ & --$^{\dagger}$ & --$^{\dagger}$ & --$^{\dagger}$ & \textbf{55.3} \\
29,100 & 50,000 & --$^{\dagger}$ & --$^{\dagger}$ & --$^{\dagger}$ & --$^{\dagger}$ & --$^{\dagger}$ & \textbf{46.1} \\
\bottomrule
\end{tabular}
\end{table}

\subsubsection{Population-Level: Number of sequence \(n\)}
\begin{figure}[t]
\centering
\includegraphics[width=0.8\linewidth]{%
    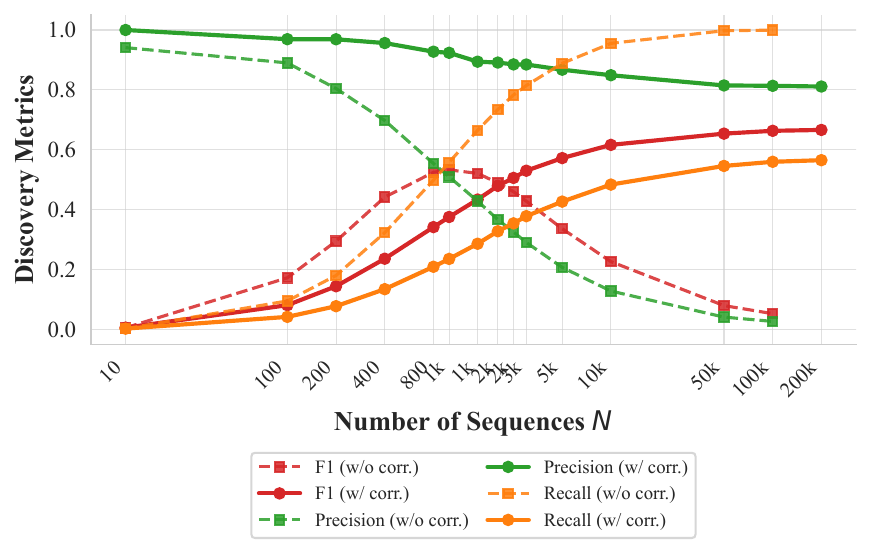}
\caption{%
  \textbf{Self-correcting MDL threshold $\tau^*_{uv}$ prevents
  precision collapse as evidence accumulates.}
  Population-level causal discovery metrics as a function of the
  number of sequences $N$, comparing the uncorrected MDL threshold
  (dashed, \textit{w/o corr.}) against the generative-error-corrected
  threshold of Cor.~\ref{cor:adaptive_threshold}
  (solid, \textit{w/ corr.}).
  \textit{Without correction:} Precision degrades catastrophically
  beyond $N{\approx}1{,}000$ sequences — each additional sequence
  accumulates approximation error from the $\epsilon$-oracle,
  progressively lowering the effective threshold until spurious
  edges flood the graph (Recall $\to 1$, Precision $\to 0$).
  \textit{With correction:} The adaptive term
  $\sqrt{\epsilon_T/2}\cdot\eta_{uv}$ in $\tau^*_{uv}$
  absorbs this drift, maintaining Precision above 0.80 across
  the full range $N \in [10, 200{,}000]$ while Recall grows
  steadily as genuine weak causal signals accumulate evidence.
  The crossover at $N{\approx}2{,}000$ marks the point where
  uncorrected evidence accumulation overtakes the model cost
  $c_{\mathrm{type}}$ (synthetic non-linear SCMs: $|\mathcal{X}|{=}1{}000, L=64$).
}
\label{fig:ablation_correction}
\end{figure}
\subsubsection{Sample-Level: Number of Particles \(N\)}
We observe in Fig.~\ref{fig:ablation_n_particles} that the number of particles \(N\) is a crucial parameter to increase Recall and reduce SHD. After \(N > 256\), however, we don't observe significant changes. 
\begin{figure}[!h]
    \centering
\includegraphics[width=0.65\linewidth]{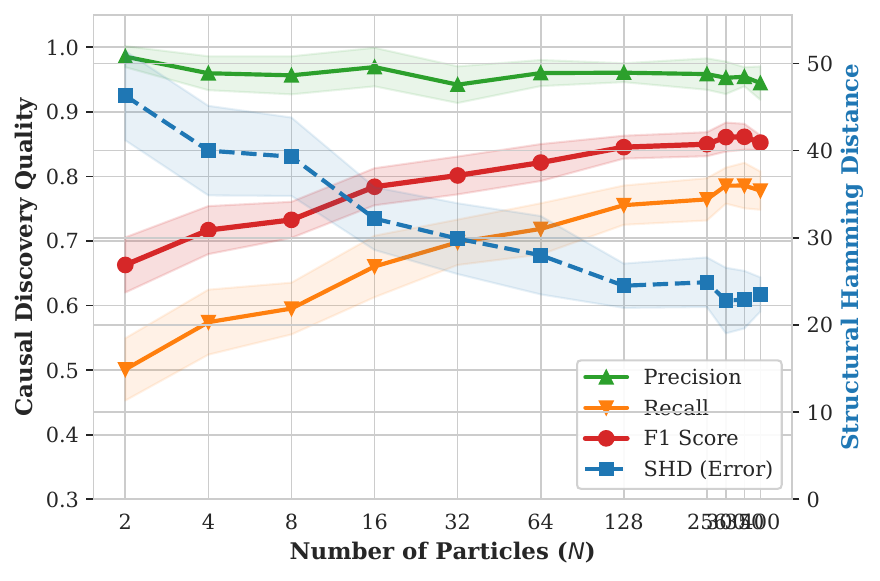}
    \caption{\textbf{Evolution of the Sample-Level \(\mathcal{X} \to \mathcal{X}\) Causal Discovery in Function of the Number of Particles $N$} at \(|\mathcal{X}| = 1000, m=6, L=64\) on non-linear SCMs.}
\label{fig:ablation_n_particles}
\end{figure}

\subsubsection{Population-Level: Number of Counterfactuals \(M\) for Mediators}
We ablate the number of counterfactuals for the mediators \(M\) in Fig.~\ref{fig:ablation_m_mediators_graph_pop}. We observe that we don't need exceeding number of counterfactuals and that generally 2 to 4 is sufficient.

\begin{figure}[!h]
    \centering
\includegraphics[width=0.7\linewidth]{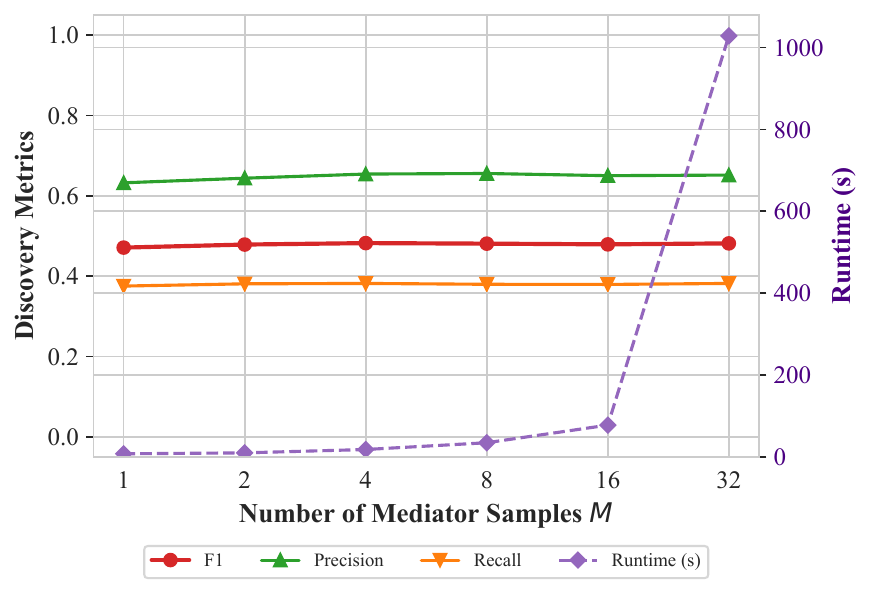}
    \caption{\textbf{Evolution of the Population Causal Discovery in Function of the Number of Counterfactuals sampled as Mediators $M$} at \(|\mathcal{X}| = 1000, m=6, L=66, n = 2000\)}
\label{fig:ablation_m_mediators_graph_pop}
\end{figure}


\newpage
\section*{NeurIPS Paper Checklist}

The checklist is designed to encourage best practices for responsible machine learning research, addressing issues of reproducibility, transparency, research ethics, and societal impact. Do not remove the checklist: {\bf The papers not including the checklist will be desk rejected.} The checklist should follow the references and follow the (optional) supplemental material.  The checklist does NOT count towards the page
limit. 

Please read the checklist guidelines carefully for information on how to answer these questions. For each question in the checklist:
\begin{itemize}
    \item You should answer \answerYes{}, \answerNo{}, or \answerNA{}.
    \item \answerNA{} means either that the question is Not Applicable for that particular paper or the relevant information is Not Available.
    \item Please provide a short (1--2 sentence) justification right after your answer (even for \answerNA). 
\end{itemize}

\end{document}